\title[Convergence of AdaGrad over Non-convex Landscape]{Convergence of AdaGrad for Non-convex Objectives: \\Simple Proofs and Relaxed Assumptions} 
\newtheorem{assumption}{Assumption}
\newcommand{\bw}{\boldsymbol{w}}
\newcommand{\bx}{\boldsymbol{x}}
\newcommand{\bv}{\boldsymbol{v}}
\newcommand{\bnu}{\boldsymbol{\nu}}
 \newcommand*{\QEDA}{\hfill\ensuremath{\blacksquare}}
\thanks{Corresponding Author}  \Email{chenwei2022@ict.ac.cn}\\
\begin{document}

\maketitle

\begin{abstract}%
We provide a simple convergence proof for AdaGrad optimizing non-convex objectives under only affine noise variance and bounded smoothness assumptions. The proof is essentially based on a novel auxiliary function $\xi$ that helps eliminate the complexity of handling the correlation between the numerator and denominator of AdaGrad's update. Leveraging simple proofs, we are able to obtain tighter results than existing results \citep{faw2022power} and extend the analysis to several new and important cases. Specifically, for the  over-parameterized regime, we show that AdaGrad  needs only $\mathcal{O}(\frac{1}{\varepsilon^2})$ iterations to ensure the gradient norm smaller than $\varepsilon$, which matches the rate of SGD and significantly tighter than existing rates $\mathcal{O}(\frac{1}{\varepsilon^4})$ for AdaGrad. We then discard the bounded smoothness assumption, and consider a more realistic assumption on smoothness called $(L_0,L_1)$-smooth condition, which allows local smoothness to grow with the gradient norm. Again based on the auxiliary function $\xi$, we prove that AdaGrad succeeds in converging under $(L_0,L_1)$-smooth condition as long as the learning rate is lower than a threshold. Interestingly, we further show that the requirement on learning rate under the $(L_0,L_1)$-smooth condition is necessary via proof by contradiction, in contrast with the case of uniform smoothness conditions where convergence is guaranteed regardless of learning rate choices. Together, our analyses broaden the understanding of AdaGrad, and demonstrate the power of the new auxiliary function in the investigations of AdaGrad.
\end{abstract}

\begin{keywords}%
  AdaGrad, Convergence Analysis
\end{keywords}

\section{Introduction}
\label{sec: intro}
Adaptive optimizers have been a great success in deep learning. Compared to stochastic gradient descent (SGD), adaptive optimizers use the gradient information of iterations to dynamically adjust the learning rate, which is observed to converge much faster than SGD in a wide range of deep learning tasks \citep{vaswani2017attention,dosovitskiy2020image,yun2019graph}. Such a superiority has attracted numerous researchers to analyze the behavior of adaptive optimizers.

AdaGrad \citep{duchi2011adaptive} is among the earliest adaptive optimizers and enjoys favorable convergence rate for online convex optimization. Specifically, the design of AdaGrad is quite simple: it  tracks the gradient magnitudes of the past iterations and use its reciprocal to scale the current gradient. The pseudo-codes of the norm version of AdaGrad (i.e., AdaGrad-Norm) and  AdaGrad are  presented in Algorithm \ref{alg: adagrad_norm} and Algorithm \ref{alg: adagrad}, respectively.

Despite the popularity and the simplicity of AdaGrad, its theoretical analysis is not satisfactory when optimizing non-convex objectives. Specifically, until recently,  \cite{ward2020adagrad} analyze the convergence of AdaGrad-Norm and achieve $\mathcal{O}(\log T / \sqrt{T})$ rate. However, Their result is based on the assumption that the stochastic gradient $g_t$ is uniformly bounded across the iterations, which does not hold even for quadratic functions, let alone deep neural networks. In comparison, the analysis of SGD does not require such an assumption.

\begin{minipage}{0.46\textwidth}
\vspace{-2mm}
\begin{algorithm}[H]
    \centering
    \caption{AdaGrad-Norm}\label{alg: adagrad_norm}
    \hspace*{0.02in} {\bf Input:}
Objective function $f(\bw)$, learning rate $\eta>0$, initial point $\bw_{1} \in \mathbb{R}^d$, initial conditioner $\bnu_{1}\in \mathbb{R}^{+}$~~~~~~~~~~~~~~~~~~~~~~~~~~~~~~~~~
    \begin{algorithmic}[1]
       \State \textbf{For} $t=1\rightarrow \infty$:
\State ~~~~~Generate stochastic gradient $g_t$
\State ~~~~~Calculate $\bnu_{t}=\bnu_{t-1}+\Vert g_t\Vert^2$
\State ~~~~~Update $\bw_{t+1}= \bw_{t}-\eta \frac{1}{\sqrt{\bnu_{t}}}  g_{t} $
\State \textbf{EndFor}
    \end{algorithmic}
\end{algorithm}
\end{minipage}
\hfill
\begin{minipage}{0.46\textwidth}
\vspace{-2mm}
\begin{algorithm}[H]
    \centering
    \caption{AdaGrad}\label{alg: adagrad}
    \hspace*{0.02in} {\bf Input:}
Objective function $f(\bw)$, learning rate $\eta>0$, initial point $\bw_{1} \in \mathbb{R}^d$, initial conditioner $\bnu_{1}\in \mathbb{R}^{d,+}$~~~~~~~~~~~~~~~~~~~~~~~~~~~~~~~
    \begin{algorithmic}[1]
       \State \textbf{For} $t=1\rightarrow \infty$:
\State ~~~~~Generate stochastic gradient $g_t$
\State ~~~~~Calculate $\bnu_{t}=\bnu_{t-1}+ g_t^{\odot 2}$
\State ~~~~~Update $\bw_{t+1}= \bw_{t}-\eta \frac{1}{\sqrt{\bnu_{t}}} \odot g_{t} $
\State \textbf{EndFor}
    \end{algorithmic}
\end{algorithm}
\end{minipage}
\vspace{3mm}

A very recent exception  \citep{faw2022power} relaxes the assumptions and proves that AdaGrad-Norm converges by only assuming  uniformly bounded smoothness (c.f. our Assumption \ref{assum: smooth}) and affine noise variance (c.f. our Assumption \ref{assum: affine}), which matches the conditions of SGD. However, the proof in \citep{faw2022power} is rather complicated (around 30 pages), which is hard to understand the intuition behind and to extend to the analysis of other cases. Moreover, {  the convergence rate in \citep{faw2022power} does not get better when strong growth condition holds (i.e., our Assumption \ref{assum: affine} with $D_0=0$) while SGD does. We believe such a gap is vital as strong growth condition holds in over-parameterized models \citep{vaswani2019fast}, which are widely adopted in deep learning.}



We know that the convergence analysis of SGD under the same set of assumptions is quite simple. \textbf{What makes the analysis of AdaGrad so complicated?} We can understand the difficulty from the classical descent lemma
\begin{equation}
\label{eq: descent_main}
 \begin{matrix}\mathbb{E}[f(\bw_{t+1})|\mathcal{F}_t]\le f(\bw_{t}) +\\ \quad
   \end{matrix}\begin{matrix}\underbrace{  \mathbb{E}\left[\left\langle \nabla f(\bw_t), \bw_{t+1}-\bw_t \right\rangle|\mathcal{F}_t\right]}
   \\
    \text{First Order}
   \end{matrix} \begin{matrix}+\\ \quad
   \end{matrix} \begin{matrix}\underbrace{\frac{L}{2}\mathbb{E}\left[ \left\Vert   \bw_{t+1}-\bw_t \right\Vert^2|\mathcal{F}_t\right]}
   \\
    \text{Second Order}
   \end{matrix},
\end{equation}
where $\mathcal{F}_t:=\sigma (g_1,\cdots,g_{t-1})$ denotes the sigma field of the stochastic gradients up to $t-1$.  
Then 
\begin{itemize}
    \item for SGD, $\bw_{t+1}-\bw_t=-\eta g_t$ and hence the "First Order" term is $-\eta \Vert \nabla f(\bw_t)\Vert^2 $, which is negative and able to decrease the objective sufficiently,
    \item for AdaGrad(-Norm), $\bw_{t+1}-\bw_t=-\eta \frac{g_t}{\sqrt{\bnu_t }}$. As $\bnu_t$ correlates with $g_t$, the ``First Order" term does not  admit a clear form. 
\end{itemize}
To deal with the correlation in AdaGrad(-Norm), a common practice is to use a surrogate $\tilde{\bnu}_t$ of $\bnu_t$ \citep{ward2020adagrad,defossez2020simple,faw2022power}, which is measurable with respect to $\mathcal{F}_t$, to decompose the ``First Order" term as follows,
\begin{align*}
   & \mathbb{E}\left[\left\langle \nabla f(\bw_t), \bw_{t+1}-\bw_t \right\rangle|\mathcal{F}_t\right]=\mathbb{E}\left[\left\langle \nabla f(\bw_t), -\eta \frac{g_t}{\sqrt{\bnu_t}} \right\rangle|\mathcal{F}_t\right]
    =\mathbb{E}\left[\left\langle \nabla f(\bw_t), -\eta \frac{g_t}{\sqrt{\tilde{\bnu}_t}} \right\rangle|\mathcal{F}_t\right]
    \\
    &+ \mathbb{E}\left[\left\langle \nabla f(\bw_t), \eta g_t \left(\frac{1}{\sqrt{\tilde{\bnu}_t}}-\frac{1}{\sqrt{\bnu_t}}\right) \right\rangle|\mathcal{F}_t\right].
\end{align*}
The first term equals $-\eta \frac{\Vert \nabla f(\bw_t)\Vert^2}{\sqrt{\tilde{\bnu}_t}}$, which is negative and desired. However, the last term is an additional error term, which is very challenging to deal with. Existing results either assume bounded stochastic gradient to work around it \citep{ward2020adagrad}, or resolve it through complicated analysis \citep{faw2022power} (c.f. Section \ref{sec: motivation}).
\\
\textbf{Contributions.} In this paper, we propose a novel auxiliary function $\xi(t)=\frac{\Vert \nabla f(\bw_t) \Vert^2}{\sqrt{\bnu_t}}$ for the convergence analysis of AdaGrad(-Norm), and show the error term can be bounded by $\mathbb{E}^{|\mathcal{F}_t}[\xi(t-1)-\xi(t)]$ (c.f. 
 Lemma \ref{lem: auxiliary}), which can be reduced by telescoping. As explained in Section \ref{sec: motivation}, such a auxiliary function is rooted in the non-increasing nature of the adaptive learning rate $\frac{\eta}{\sqrt{\bnu_{k}}}$.

With the new and simplified proof, we are able to obtain stronger results for AdaGrad-Norm and extend the analysis to other important scenarios.

\begin{itemize}
    \item Under strong growth condition (or the so-called over-parameterized regime), our convergence rate for AdaGrad-Norm is $\mathcal{O}(\frac{1}{T})$, which matches that of SGD and stronger than existing results \citep{faw2022power}. This demonstrates that AdaGrad-Norm converges faster in the over-parameterized regime than in the under-parameterized regime.

    \item We extend the analysis to AdaGrad by utilizing a coordinate version { $\tilde{\xi}(t)=\sum_{l=1}^d \frac{\partial_l f(\bw_t)^2}{\sqrt{\bnu_{t,l}}}$} of $\xi(t)$  and obtain similar convergence.   To the best of our knowledge, this is the first convergence result of AdaGrad without the requirement of bounded gradient norm. We also prove the convergence for randomly-reshuffled AdaGrad, which is the version of AdaGrad used in deep learning practice.

    \item We go beyond the uniform smoothness and consider a realistic non-uniformly smooth condition called $(L_0,L_1)$-smooth condition (Assumption \ref{assum: non-smooth}). We prove that AdaGrad(-Norm) still converges under $(L_0,L_1)$-smooth condition, but requires the learning rate smaller than a threshold, whose necessity is conversely verified with a counterexample. Together, AdaGrad can converge under the non-uniform smoothness but may not be exactly tuning-free.
\end{itemize}

We have observed a concurrent work by \citep{faw2023beyond}, which also establishes the convergence of AdaGrad under $(L_0,L_1)$-smooth condition and affine noise variance assumption. Though results are similar and both appear in COLT 2023, there are some notable differences between their findings and ours. First, \citet{faw2023beyond} require either $D_1<2$ or an additional assumption on the objective function, whereas our result holds for all $D_1$ values without any additional assumption. Second, their result is based on a novel stopping time, while ours relies on a new auxiliary function. Lastly, \citet{faw2023beyond} establish a set of negative results for Clipped SGD and Sign SGD (with momentum) when analyzed under the $(L_0, L_1)$-smooth condition, highlighting the advantage of AdaGrad over these optimizers. All in all, both their work and ours complement each other, providing a more comprehensive understanding of AdaGrad. 
\\
\textbf{Organization of this paper. }The rest of this paper is organized as follows. In Section \ref{sec: preliminary}, we define notations and introduce assumptions; in Section \ref{sec: motivation}, we describe the motivation to use the auxiliary function; in Section \ref{sec: adagrad_norm}, we derive the convergence result of AdaGrad-Norm under $L$-smooth condition; in Section \ref{sec: adagrad}, we extend the result to AdaGrad; in Section \ref{sec: non-uniform}, we analyze the convergence of AdaGrad(-Norm) under $(L_0,L_1)$-smooth condition; Section \ref{sec: related} presents the related works.

\section{Preliminary}
\label{sec: preliminary}
\textbf{Notations.} The following notations are used throughout this paper.
\begin{itemize} 
    \item (Vector operators) $\odot$ stands for the Hadamard product between vectors, and  $g^{\odot2}\triangleq g\odot g$.  $\langle \bw,\boldsymbol{v}\rangle $ stands for the $L^2$ inner product between $\bw$ and $\bv$, and $\Vert \bw\Vert \triangleq \sqrt{\langle \bw,\bw \rangle}$ .

    \item (Stochastic operators)  $\mathcal{F}_t= \sigma (g_{t-1},\cdots, g_1)$ stands for the sigma field of historical gradients up to time $t-1$ and thus $\{\bw_t\}_{t=1}^{\infty}$ is an adapted random process with respect to $\{\mathcal{F}_t\}_{t=1}^{\infty}$. For brevity, we abbreviate the expectation conditional on $\mathcal{F}_t$ as $\mathbb{E}^{\vert \mathcal{F}_t} [*]\triangleq \mathbb{E}[*|\mathcal{F}_t]$.
\end{itemize}

\textbf{Assumptions.} Throughout this paper, we assume that $f$ is lower bounded. We also need the following assumptions:

\begin{assumption}[$L$-smooth condition]
\label{assum: smooth}
We assume that $f$ is differentiable and its gradient satisfies that $\forall \bw_1,\bw_2 \in \mathbb{R}^d$, we have 
   $\Vert  \nabla f(\bw_1) - \nabla f(\bw_2) \Vert \le L \Vert \bw_1-\bw_2 \Vert$.
\end{assumption}

\begin{assumption}[Affine noise variance]
\label{assum: affine}
We assume that there exist positive constants $D_0$ and $D_1$ such that $\forall t \ge 1$,
    $\mathbb{E}^{| \mathcal{F}_t}[\Vert g_t\Vert^2] \le D_0+D_1 \Vert \nabla f(\bw_t) \Vert^2$.

\end{assumption}
To the best of our knowledge, the above two assumptions are the weakest requirements for the convergence of AdaGrad(-Norm) among the existing literature.

\section{Motivation of the auxiliary function}
\label{sec: motivation}
{ 
 As mentioned in Introduction, the main obstacle in the analysis of AdaGrad(-Norm) is to bound the error term $\mathbb{E}^{|\mathcal{F}_t}\langle \nabla f(\bw_t), \eta g_t (\frac{1}{\sqrt{\tilde{\bnu}_t}}-\frac{1}{\sqrt{\bnu_t}}) \rangle$. Most of the existing works assume that $\Vert g_t \Vert $ is uniformly bounded, and choose $\tilde{\bnu}_{t}=\bnu_{t-1}$. In this case, the error term can be shown to be as small as the "Second Order" term in Eq. (\ref{eq: descent_main}) and can be further bounded. If the bounded gradient assumption is removed, \citet{faw2022power} shows that \emph{most of the iterations are "good"}, in the sense that the error term is smaller than $\eta \frac{\Vert \nabla f(\bw_t)\Vert^2}{\sqrt{\tilde{\bnu}_t}}$, which hence won't affect the negativity of  the "First Order" term in Eq. (\ref{eq: descent_main}) after decomposition.  However, it is complicated to deal  with the "bad" iterations, which occupies the main space of the proof in \citep{faw2022power}.
 }

Instead, to deal with the error term, we propose a simple auxiliary function $\xi(t)$  that can be canceled out during telescoping. The choice of $\xi(t)$ is motivated as follows. By choosing $\tilde{\bnu}_t=\bnu_{t-1}$, we find that the error term can be rewritten as
\begin{align}
\mathbb{E}^{|\mathcal{F}_t}\left\langle \nabla f(\bw_t), \eta g_t \left(\frac{1}{\sqrt{\bnu_{t-1}}}-\frac{1}{\sqrt{\bnu_t}}\right) \right\rangle\le & \eta\mathbb{E}^{|\mathcal{F}_t}\left[\Vert \nabla f(\bw_t) \Vert \Vert g_t \Vert  \left(\frac{1}{\sqrt{\bnu_{t-1}}}-\frac{1}{\sqrt{\bnu_t}}\right)\right] \nonumber
\\
=& \eta\mathbb{E}^{|\mathcal{F}_t}\left[  \left(\frac{\Vert \nabla f(\bw_t) \Vert \Vert g_t \Vert}{\sqrt{\bnu_{t-1}}}-\frac{\Vert \nabla f(\bw_t) \Vert \Vert g_t \Vert}{\sqrt{\bnu_t}}\right)\right], \label{eq:error-term}
\end{align}
where the inequality is due to the Cauchy-Schwarz inequality and $\bnu_t$ is non-decreasing. Note that if we have both $\Vert \nabla f(\bw_t)\Vert \approx \Vert \nabla f(\bw_{t-1})\Vert$ and $\Vert g_t \Vert \approx \Vert g_{t-1} \Vert $, the term \eqref{eq:error-term} approximately equals to $\eta\mathbb{E}^{|\mathcal{F}_t}\left[  \left(\frac{\Vert \nabla f(\bw_{t-1}) \Vert \Vert g_{t-1} \Vert}{\sqrt{\bnu_{t-1}}}-\frac{\Vert \nabla f(\bw_t) \Vert \Vert g_t \Vert}{\sqrt{\bnu_t}}\right)\right]$. { In this case, we can use $\hat{\xi}(t)=\frac{\Vert \nabla f(\bw_t) \Vert \Vert g_t \Vert}{\sqrt{\bnu_t}}$ as an auxiliary function, and the sum of the expected error term satisfies
\begin{equation*}
    \sum_{t=1}^T\mathbb{E}\left\langle \nabla f(\bw_t), \eta g_t \left(\frac{1}{\sqrt{\bnu_{t-1}}}-\frac{1}{\sqrt{\bnu_t}}\right) \right\rangle\lesssim \sum_{t=1}^T \mathbb{E} \left[\hat{\xi}(t-1)-\hat{\xi}(t)\right]=\hat{\xi}(0)-\mathbb{E}[\hat{\xi}(T)].
\end{equation*}
The RHS of the above inequality is bounded regardless of $T$.
This is the motivation to use the auxiliary function.} However, we do not have $\Vert g_t \Vert \approx \Vert g_{t-1} \Vert $ but only have  $\Vert \nabla f(\bw_t)\Vert \approx \Vert \nabla f(\bw_{t-1})\Vert$ (due to bounded smoothness, i.e., Assumption \ref{assum: smooth}). To resolve this challenge, we convert $\Vert g_t \Vert $ to $\Vert \nabla f(\bw_t) \Vert$ by Assumption \ref{assum: affine} in the above inequality, and use $\xi(t)\triangleq \frac{\Vert \nabla f(\bw_t) \Vert^2 }{\sqrt{\bnu_t}}$ instead of $\frac{\Vert \nabla f(\bw_t) \Vert \Vert g_t \Vert}{\sqrt{\bnu_t}}$ as the auxiliary function. A formal statement of the above methodology can be seen in  Lemma \ref{lem: auxiliary}.

\begin{remark}
    Note that the above methodology is mainly based on that the adaptive learning rate is non-increasing. Therefore, we believe that similar approach can be applied to the analysis of other adaptive optimizers with non-increasing adaptive learning rates, such as AMSGrad.
\end{remark}

\section{A refined convergence analysis of AdaGrad-Norm}
\label{sec: adagrad_norm}
In this section, we present our refined analysis of AdaGrad-Norm based on the auxiliary function $\xi(t)$. The refined convergence rate is given by the following theorem.
\begin{theorem}
\label{thm: adagrad_norm}
    Let Assumptions \ref{assum: smooth} and \ref{assum: affine} hold. Then, for AdaGrad-Norm with any learning rate $ \eta >0$, we have that with probability at least $1-\delta$,
    \begin{equation*}
       \min_{t\in [T]} \Vert \nabla f(\bw_t) \Vert^2 \le \frac{2\sqrt{2D_0}(2C_2\ln(2\sqrt{2D_0T}+C_3)+C_1)}{\sqrt{T}\delta^2}+ \frac{C_3(C_1+2C_2\ln(2\sqrt{2D_0T}+C_3))}{T\delta^2},
    \end{equation*}
    where $C_1$, $C_2$, and $C_3$ are constants defined as $C_1:=4 ( f(\bw_1)-f^*+\frac{\eta D_1}{2} \frac{\Vert \nabla f(\bw_0) \Vert^2}{\sqrt{\bnu_0}}
    +(2  \eta (L \eta D_1)^2 +\eta D_1 (L \eta)^2 +\frac{\eta}{2} D_0 )\frac{1}{\sqrt{\bnu_0}}-\frac{L}{2}\eta^2 \ln \bnu_0)/\eta $, $C_2:= 2L\eta$, and $C_3:= 4 D_1C_1 +48C_2 D_1 \ln  (4C_2 D_1+e)+2\sqrt{\bnu_0}$.
\end{theorem}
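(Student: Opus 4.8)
The plan is to run the descent lemma \eqref{eq: descent_main} with the AdaGrad-Norm update $\bw_{t+1}-\bw_t=-\eta g_t/\sqrt{\bnu_t}$ and tame the troublesome ``First Order'' term by the surrogate decomposition set up in Section \ref{sec: motivation}. Concretely, I would write
\[
-\eta\frac{\langle \nabla f(\bw_t), g_t\rangle}{\sqrt{\bnu_t}} = -\eta\frac{\langle \nabla f(\bw_t), g_t\rangle}{\sqrt{\bnu_{t-1}}} + \eta\langle \nabla f(\bw_t), g_t\rangle\left(\frac{1}{\sqrt{\bnu_{t-1}}}-\frac{1}{\sqrt{\bnu_t}}\right),
\]
take $\mathbb{E}^{\vert\mathcal{F}_t}$, and observe that the first piece becomes the desired negative term $-\eta\Vert\nabla f(\bw_t)\Vert^2/\sqrt{\bnu_{t-1}}$ since $\bnu_{t-1}$ is $\mathcal{F}_t$-measurable and $\mathbb{E}^{\vert\mathcal{F}_t}[g_t]=\nabla f(\bw_t)$. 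The second piece is the error term, which Lemma \ref{lem: auxiliary} bounds by a multiple of $\mathbb{E}^{\vert\mathcal{F}_t}[\xi(t-1)-\xi(t)]$ together with a noise contribution proportional to $D_0/\sqrt{\bnu_{t-1}}$ and some $(L\eta D_1)^2$-type constants, all obtained after converting $\Vert g_t\Vert$ into $\Vert\nabla f(\bw_t)\Vert$ via Assumption \ref{assum: affine}. For the ``Second Order'' term I would use the standard logarithmic telescoping $\sum_{t=1}^T \Vert g_t\Vert^2/\bnu_t \le \ln(\bnu_T/\bnu_0)$.

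Summing over $t\in[T]$, taking total expectation, and telescoping the $\xi(t-1)-\xi(t)$ differences (so that only the boundary term $\xi(0)=\Vert\nabla f(\bw_0)\Vert^2/\sqrt{\bnu_0}$ survives and $\xi(T)\ge 0$ is discarded) would give, after dividing by $\eta$ and absorbing the constants exactly into $C_1$ and $C_2=2L\eta$ as defined in the statement,
\[
\mathbb{E}\left[\sum_{t=1}^T \frac{\Vert\nabla f(\bw_t)\Vert^2}{\sqrt{\bnu_t}}\right] \le \frac{C_1}{4} + \frac{C_2}{4}\,\mathbb{E}[\ln\bnu_T] \le \frac{C_1}{4} + \frac{C_2}{4}\ln\mathbb{E}[\bnu_T],
\]
the last inequality by Jensen. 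This is precisely where the $-\tfrac{L}{2}\eta^2\ln\bnu_0$ and $\tfrac{\eta D_1}{2}\Vert\nabla f(\bw_0)\Vert^2/\sqrt{\bnu_0}$ terms inside $C_1$ originate.

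The crux is to control $\mathbb{E}[\bnu_T]$. Using Assumption \ref{assum: affine} in conditional-expectation form gives $\mathbb{E}[\bnu_T]\le \bnu_0 + D_0 T + D_1\,\mathbb{E}[\sum_t\Vert\nabla f(\bw_t)\Vert^2]$, and the elementary bound $\sum_t\Vert\nabla f(\bw_t)\Vert^2 \le \sqrt{\bnu_T}\sum_t \Vert\nabla f(\bw_t)\Vert^2/\sqrt{\bnu_t}$ (since $\bnu_t\le\bnu_T$) couples $\mathbb{E}[\bnu_T]$ back to the quantity just bounded. I expect the resulting self-referential inequality to take the schematic form $x \le \bnu_0 + D_0 T + D_1\sqrt{x}\,(C_1+C_2\ln x)$ with $x=\mathbb{E}[\bnu_T]$; solving it (an inequality of the type $u \le a + b\ln u$, whose solution is $u = \mathcal{O}(a+b\ln b)$) is exactly what yields $\sqrt{\mathbb{E}[\bnu_T]}\le 2\sqrt{2D_0 T}+C_3$ and explains both the logarithmic term $48 C_2 D_1\ln(4C_2 D_1+e)$ and the $2\sqrt{\bnu_0}$ inside $C_3$. \textbf{This self-bounding step, together with cleanly decoupling the product $\sqrt{\bnu_T}\cdot\sum_t\Vert\nabla f(\bw_t)\Vert^2/\sqrt{\bnu_t}$ in expectation, is the step I expect to be the main obstacle.}

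Finally, to pass from the aggregate bound to a high-probability bound on $\min_{t\in[T]}\Vert\nabla f(\bw_t)\Vert^2$, I would use $\min_{t\in[T]}\Vert\nabla f(\bw_t)\Vert^2 \le \tfrac{\sqrt{\bnu_T}}{T}\sum_{t=1}^T \Vert\nabla f(\bw_t)\Vert^2/\sqrt{\bnu_t}$, which again follows from $\bnu_t\le\bnu_T$ and ``min $\le$ average.'' Applying Markov's inequality separately to the factor $\sqrt{\bnu_T}$ (with mean at most $\sqrt{\mathbb{E}\bnu_T}\le 2\sqrt{2D_0T}+C_3$) and to the factor $\sum_t\Vert\nabla f(\bw_t)\Vert^2/\sqrt{\bnu_t}$ (with mean at most $\tfrac14(C_1+2C_2\ln(2\sqrt{2D_0T}+C_3))$), each at confidence level $\delta/2$, and taking a union bound, the product of the two resulting $1/\delta$ factors produces the $1/\delta^2$ dependence. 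Writing $M=2\sqrt{2D_0T}+C_3$ and $K=C_1+2C_2\ln M$, the bound collapses to $\min_{t\in[T]}\Vert\nabla f(\bw_t)\Vert^2 \le KM/(T\delta^2)$, and expanding $M$ into its two summands recovers exactly the two-term expression in the statement.
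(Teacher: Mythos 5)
Your Stage~I is the paper's Stage~I: same surrogate decomposition, same use of Lemma~\ref{lem: auxiliary} and telescoping of $\xi$, same logarithmic bound on the second-order term. The genuine gap is in your Stage~II. The self-referential inequality you write, $x \le \bnu_0 + D_0T + D_1\sqrt{x}\,(C_1+C_2\ln x)$ with $x=\mathbb{E}[\bnu_T]$, does not follow from what you have established. The pathwise bound $\sum_t\Vert\nabla f(\bw_t)\Vert^2 \le \sqrt{\bnu_T}\cdot Y$ with $Y=\sum_t\Vert\nabla f(\bw_t)\Vert^2/\sqrt{\bnu_t}$ is fine, but taking expectations leaves you with $\mathbb{E}[\sqrt{\bnu_T}\,Y]$, and Stage~I only controls $\mathbb{E}[Y]$. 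The factors are positively correlated, so you cannot replace $\mathbb{E}[\sqrt{\bnu_T}\,Y]$ by $\sqrt{\mathbb{E}[\bnu_T]}\cdot\mathbb{E}[Y]$; Cauchy--Schwarz gives $\sqrt{\mathbb{E}[\bnu_T]\,\mathbb{E}[Y^2]}$ and $\mathbb{E}[Y^2]$ is uncontrolled. You correctly flag this decoupling as ``the main obstacle,'' but it is not an obstacle your scheme survives: without a substitute idea the argument does not close.

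The paper's resolution avoids forming that product in the adverse direction. First, it bounds $\mathbb{E}[\sqrt{\bnu_T}]$ (not $\mathbb{E}[\bnu_T]$) directly, writing $\mathbb{E}[\sqrt{\bnu_T}]=\mathbb{E}[\bnu_T/\sqrt{\bnu_T}]$ and splitting the iterations at the threshold $\Vert\nabla f(\bw_t)\Vert^2 \gtrless D_0/D_1$. On large-gradient iterations, $\mathbb{E}^{|\mathcal{F}_t}[\Vert g_t\Vert^2]\le 2D_1\Vert\nabla f(\bw_t)\Vert^2$, so their contribution to $\bnu_T/\sqrt{\bnu_T}$ is absorbed into the already-bounded weighted sum $\sum_t\mathbb{E}[\Vert\nabla f(\bw_t)\Vert^2/\sqrt{\bnu_{t-1}}]$ (the conditional expectation is taken while the denominator is still the $\mathcal{F}_t$-measurable $\bnu_{t-1}$, and only then is $\bnu_{t-1}\le\bnu_T$ invoked --- this is exactly what sidesteps your correlation problem). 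On small-gradient iterations, with $A$ the corresponding noise mass, one uses $(A+\bnu_0)/\sqrt{\bnu_T}\le\sqrt{A+\bnu_0}$, Jensen, and Assumption~\ref{assum: affine} to get $\frac{1}{2D_1}\sqrt{2D_0T+\bnu_0}$. This yields a legitimate self-bounding inequality in the single quantity $\mathbb{E}[\sqrt{\bnu_T}]$ (of your $u\le a+b\ln u$ type, after Jensen on the logarithm), producing $C_3$. Second, for the endgame the decoupling is done in the valid one-sided direction via H\"older: $\bigl(\mathbb{E}\bigl[\sqrt{\textstyle\sum_t\Vert\nabla f(\bw_t)\Vert^2}\bigr]\bigr)^2 \le \mathbb{E}\bigl[\textstyle\sum_t\Vert\nabla f(\bw_t)\Vert^2/\sqrt{\bnu_T}\bigr]\cdot\mathbb{E}[\sqrt{\bnu_T}]$, followed by a \emph{single} Markov application to the square root, which is where $1/\delta^2$ comes from. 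Your two-Markov union bound would give the same rate but with an extra factor of $4$, so it would not reproduce the stated constants. Note finally that the divide-and-conquer is not mere bookkeeping: it is what keeps $D_0$ and $D_1$ disentangled and delivers the $\mathcal{O}(1/T)$ rate when $D_0=0$.
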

\begin{remark}
 \citet{faw2022power} also prove that AdaGrad-Norm converges under Assumptions \ref{assum: smooth} and \ref{assum: affine}. Their rate is $\frac{1}{T}\sum_{t=1}^T \Vert \nabla f(\bw_t) \Vert^2=\mathcal{O}(\frac{\log^{\frac{9}{4}} T}{\sqrt{T}})$. Compared to their result, our result has a tighter dependence over $T$. Moreover, when restricted to the strong growth condition, i.e., $D_0=0$, our result gives a rate $\mathcal{O}(\frac{1}{T})$,  much faster than that in \citep{faw2022power} and matching that of SGD. Such an improvement counts as  strong growth condition characterizes the landscapes of over-parameterized models \citep{vaswani2019fast}. Theorem \ref{thm: adagrad_norm} also shows that AdaGrad-Norm enjoys the tuning-free ability under $L$-smooth condition, i.e., it converges without tuning the learning rate. 
\end{remark}
\textbf{Proof of Theorem \ref{thm: adagrad_norm}}
The proof starts with the so-called expected descent lemma:
\begin{align}
\nonumber
    \mathbb{E}^{|\mathcal{F}_t}f(\bw_{t+1})\le& f(\bw_{t})+ \mathbb{E}^{|\mathcal{F}_t}\left[\langle \nabla f(\bw_t), \bw_{t+1}-\bw_t \rangle + \frac{L}{2} \Vert \bw_{t+1}-\bw_{t} \Vert^2\right]
    \\
\label{eq: descent lemma}
    = & f(\bw_{t}) +\begin{matrix}\underbrace{  \mathbb{E}^{|\mathcal{F}_t}\left\langle \nabla f(\bw_t), -\eta\frac{g_t}{\sqrt{\bnu_t} } \right\rangle}
   \\
    \text{First Order}
   \end{matrix} + \begin{matrix}\underbrace{\frac{L}{2}\eta^2\mathbb{E}^{|\mathcal{F}_t} \left\Vert  \frac{g_t}{\sqrt{\bnu_t} } \right\Vert^2}
   \\
    \text{Second Order}
   \end{matrix} . 
\end{align}

As discussed in Section \ref{sec: intro}, the ``First Order'' term does not have a simple form due to the correlation between $g_t$ and $\bnu_t$. We follow the standard approach in existing literature to approximate $\bnu_t$ with the surrogate  $\bnu_{t-1}$, which is measurable with respect to $\mathcal{F}_t$. The first-order term can then be decomposed into
\small
\begin{align}
\nonumber
    \mathbb{E}^{|\mathcal{F}_t}\left[\left\langle \nabla f(\bw_t), -\eta \frac{g_t}{\sqrt{\bnu_t} } \right\rangle\right]
    = & \mathbb{E}^{|\mathcal{F}_t}\left[\left\langle \nabla f(\bw_t), -\eta \frac{g_t}{\sqrt{\bnu_{t-1}} } \right\rangle\right]+\mathbb{E}^{|\mathcal{F}_t}\left[\left\langle \nabla f(\bw_t), \eta (\frac{1}{\sqrt{\bnu_{t-1}} }-\frac{1}{\sqrt{\bnu_{t}} })g_t \right\rangle\right]
    \\
\label{eq: decomposition_approximation}
    =& -\eta\frac{\Vert \nabla f(\bw_t)\Vert^2 }{\sqrt{\bnu_{t-1}} } +\mathbb{E}^{|\mathcal{F}_t}\left[\left\langle \nabla f(\bw_t), \eta \left(\frac{1}{\sqrt{\bnu_{t-1}} }-\frac{1}{\sqrt{\bnu_{t}} }\right)g_t \right\rangle\right].
\end{align}
\normalsize
The last term is an error term, coming from the gap between $\bnu_{t-1}$ and $\bnu_t$. Plugging  Eq. (\ref{eq: decomposition_approximation}) back to Eq. (\ref{eq: descent lemma}) and we obtain
\small
\begin{align}
\nonumber
    \mathbb{E}^{|\mathcal{F}_t}f(\bw_{t+1})\le  f(\bw_{t}) + \begin{matrix}\underbrace{-\eta\frac{\Vert \nabla f(\bw_t)\Vert^2 }{\sqrt{\bnu_{t-1}} }}
   \\
    \text{First Order Main}
    \end{matrix} + \begin{matrix}\underbrace{\mathbb{E}^{|\mathcal{F}_t}\left[\left\langle \nabla f(\bw_t), \eta \left(\frac{1}{\sqrt{\bnu_{t-1}} }-\frac{1}{\sqrt{\bnu_{t}} }\right)g_t \right\rangle\right] }
   \\
    \text{Error}
    \end{matrix} + \begin{matrix}\underbrace{\frac{L}{2}\eta^2\mathbb{E}^{|\mathcal{F}_t} \left\Vert  \frac{g_t}{\sqrt{\bnu_t} } \right\Vert^2}
   \\
    \text{Second Order}
   \end{matrix} .
\end{align}
\normalsize
The rest of the proof can be divided into two stages: in Stage I, We proceed by bounding the ``Error" term through the auxiliary function $\xi(t)\triangleq \frac{\Vert \nabla f(\bw_t) \Vert^2}{\sqrt{\bnu_t}}$, and bound $\sum_{t=1}^T \frac{\Vert \nabla f(\bw_t)\Vert^2}{\sqrt{\bnu_{t-1}}}$. In Stage II, we convert the  bound of $\sum_{t=1}^T \frac{\Vert \nabla f(\bw_t)\Vert^2}{\sqrt{\bnu_{t-1}}}$ into the bound of $\sum_{t=1}^T \Vert \nabla f(\bw_t) \Vert^2$.
\\
\textbf{Stage I: Bounding the ``Error" term.} The following lemma summarizes the intuition in Section \ref{sec: motivation}.

\begin{lemma}
\label{lem: auxiliary}
Define an auxiliary function $\xi(t)\triangleq \frac{\Vert \nabla f(\bw_t)\Vert^2 }{\sqrt{\bnu_t}}$, $t\ge 1$. Then, the ``Error" term can be bounded as
\begin{align*}
    &\mathbb{E}^{|\mathcal{F}_t}\left[\left\langle \nabla f(\bw_t), \eta \left(\frac{1}{\sqrt{\bnu_{t-1}} }-\frac{1}{\sqrt{\bnu_{t}} }\right)g_t \right\rangle\right]    \le \frac{3}{4}\eta\frac{\Vert \nabla f(\bw_t)\Vert^2}{\sqrt{\bnu_{t-1}} }+\frac{1}{2}\frac{\eta}{\sqrt{\bnu_{t-1}} }  D_0 \mathbb{E}^{|\mathcal{F}_t}\left[  \frac{\Vert g_t\Vert^2}{(\sqrt{\bnu_{t}}+\sqrt{\bnu_{t-1}})^2}\right]
    \\
    &+\frac{\eta}{2} D_1  \mathbb{E}^{|\mathcal{F}_t}\left[\xi(t-1)-\xi(t)\right]
    +\left(\eta (L \eta D_1)^2 +\frac{\eta}{2} D_1 (L \eta)^2\right)  \frac{\Vert g_{t-1} \Vert^2}{\sqrt{\bnu_{t-1}}^3} .
\end{align*}

\end{lemma}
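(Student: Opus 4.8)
The plan is to turn the ``Error'' term into the four stated pieces via one reduction to norms, one algebraic identity, a single Young split, and then Assumptions~\ref{assum: smooth} and \ref{assum: affine}. First I would discard the inner product: since $\bnu_t=\bnu_{t-1}+\Vert g_t\Vert^2\ge\bnu_{t-1}$, the scalar weight $w:=\frac{1}{\sqrt{\bnu_{t-1}}}-\frac{1}{\sqrt{\bnu_t}}$ is nonnegative, so Cauchy--Schwarz gives
\[
\mathbb{E}^{|\mathcal{F}_t}\left[\left\langle\nabla f(\bw_t),\eta\,w\,g_t\right\rangle\right]\le\eta\,\mathbb{E}^{|\mathcal{F}_t}\left[\Vert\nabla f(\bw_t)\Vert\,\Vert g_t\Vert\,w\right].
\]
The engine of the argument is the rationalisation $w=\frac{\Vert g_t\Vert^2}{\sqrt{\bnu_{t-1}}\sqrt{\bnu_t}\,(\sqrt{\bnu_t}+\sqrt{\bnu_{t-1}})}$ combined with $\sqrt{\bnu_t}\ge\frac12(\sqrt{\bnu_t}+\sqrt{\bnu_{t-1}})$, which is precisely what manufactures the denominator $(\sqrt{\bnu_t}+\sqrt{\bnu_{t-1}})^2$ seen in the $D_0$ term; note also the pointwise bound $\frac{\Vert g_t\Vert^2}{(\sqrt{\bnu_t}+\sqrt{\bnu_{t-1}})^2}\le1$, which will let the first term come out free of any noise constant.

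Next I would split $\Vert\nabla f(\bw_t)\Vert\,\Vert g_t\Vert$ by Young's inequality with a parameter tied to $D_1$. The ``gradient'' half is handled through the exact identity $\Vert\nabla f(\bw_t)\Vert^2\,w=\frac{\Vert\nabla f(\bw_t)\Vert^2}{\sqrt{\bnu_{t-1}}}-\frac{\Vert\nabla f(\bw_t)\Vert^2}{\sqrt{\bnu_t}}=\frac{\Vert\nabla f(\bw_t)\Vert^2}{\sqrt{\bnu_{t-1}}}-\xi(t)$, so the ``$-\xi(t)$'' end of the telescope already appears with the factor $\frac{\eta D_1}{2}$. To turn the remaining $\frac{\Vert\nabla f(\bw_t)\Vert^2}{\sqrt{\bnu_{t-1}}}$ into $\xi(t-1)=\frac{\Vert\nabla f(\bw_{t-1})\Vert^2}{\sqrt{\bnu_{t-1}}}$ I invoke Assumption~\ref{assum: smooth}: since $\bw_t-\bw_{t-1}=-\eta g_{t-1}/\sqrt{\bnu_{t-1}}$ we have $\Vert\nabla f(\bw_t)\Vert\le\Vert\nabla f(\bw_{t-1})\Vert+L\eta\Vert g_{t-1}\Vert/\sqrt{\bnu_{t-1}}$, and squaring then peeling off the cross term by a second Young step produces $\xi(t-1)$ plus a residual proportional to $\Vert g_{t-1}\Vert^2/\sqrt{\bnu_{t-1}}^3$, which is exactly the last term of the lemma. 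The ``noise'' half of the split is where Assumption~\ref{assum: affine} enters: the clean factor $\mathbb{E}^{|\mathcal{F}_t}\Vert g_t\Vert^2\le D_0+D_1\Vert\nabla f(\bw_t)\Vert^2$ separates into the $D_0$ contribution (retaining the benign weight $\frac{\Vert g_t\Vert^2}{(\sqrt{\bnu_t}+\sqrt{\bnu_{t-1}})^2}$, giving the second term) and a $D_1\Vert\nabla f(\bw_t)\Vert^2$ contribution that, together with the current-gradient leftover, consolidates into the $\frac34$ coefficient of the first term.

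The step I expect to be the main obstacle is the final bookkeeping, which the correlation between $\bnu_t$ and $g_t$ makes delicate. Because only the $\mathcal{F}_t$-measurable quantities $\bnu_{t-1}$ and $\nabla f(\bw_t)$ may be taken out of $\mathbb{E}^{|\mathcal{F}_t}$, and $\Vert g_t\Vert^2$ is positively correlated with the weight $\frac{\Vert g_t\Vert^2}{(\sqrt{\bnu_t}+\sqrt{\bnu_{t-1}})^2}$ (so no product of expectations is available), I must route the several appearances of $\Vert g_t\Vert^2$ so that exactly one clean copy receives the affine-variance bound while the remaining copy is either bounded pointwise by $1$ (feeding the constant-free first term) or kept intact (feeding the $D_0$ term). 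Simultaneously the two Young parameters---one for $\Vert\nabla f(\bw_t)\Vert\,\Vert g_t\Vert$ and one for the smoothness cross term---must be pinned so that the current-gradient pieces total exactly $\frac34$, the telescope lands at exactly $\frac{\eta D_1}{2}$, and the $t-1$ noise residual carries precisely $\eta(L\eta D_1)^2+\frac{\eta}{2}D_1(L\eta)^2$. Once these constants are fixed, taking $\mathbb{E}^{|\mathcal{F}_t}$ and collecting terms is routine.
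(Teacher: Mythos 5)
Your skeleton matches the paper's proof in outline --- rationalising $\frac{1}{\sqrt{\bnu_{t-1}}}-\frac{1}{\sqrt{\bnu_t}}=\frac{\Vert g_t\Vert^2}{\sqrt{\bnu_{t-1}}\sqrt{\bnu_t}(\sqrt{\bnu_t}+\sqrt{\bnu_{t-1}})}$, a Young split tied to $D_1$, the exact identity $\Vert \nabla f(\bw_t)\Vert^2\bigl(\frac{1}{\sqrt{\bnu_{t-1}}}-\frac{1}{\sqrt{\bnu_t}}\bigr)=\frac{\Vert \nabla f(\bw_t)\Vert^2}{\sqrt{\bnu_{t-1}}}-\xi(t)$, and smoothness plus a second Young step to produce $\xi(t-1)$ and the $\Vert g_{t-1}\Vert^2/\sqrt{\bnu_{t-1}}^3$ residual; those parts are sound and agree with the paper. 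But the step you defer as ``final bookkeeping'' is the one genuinely nontrivial idea of the lemma, and your plan does not contain it. Because you apply Young \emph{pointwise} to $\Vert\nabla f(\bw_t)\Vert\,\Vert g_t\Vert$ inside the conditional expectation, your noise half is (up to constants)
\begin{equation*}
\mathbb{E}^{|\mathcal{F}_t}\left[\Vert g_t\Vert^2\cdot\frac{\Vert g_t\Vert^2}{(\sqrt{\bnu_t}+\sqrt{\bnu_{t-1}})^2}\right],
\end{equation*}
an expectation of a product of two increasing functions of $\Vert g_t\Vert^2$ (with $\bnu_{t-1}$ fixed). For such comonotone factors $\mathbb{E}[XY]\ge\mathbb{E}[X]\,\mathbb{E}[Y]$, so the decoupling you need points the wrong way --- exactly as you concede (``no product of expectations is available''). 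Within your ordering the only valid escape is the pointwise bound $\frac{\Vert g_t\Vert^2}{(\sqrt{\bnu_t}+\sqrt{\bnu_{t-1}})^2}\le 1$, which, after Assumption \ref{assum: affine}, yields a $D_0$ contribution of order $\frac{\eta D_0}{\sqrt{\bnu_{t-1}}}$ \emph{without} the weight $\mathbb{E}^{|\mathcal{F}_t}\bigl[\Vert g_t\Vert^2/(\sqrt{\bnu_t}+\sqrt{\bnu_{t-1}})^2\bigr]$. That is strictly weaker than the lemma and breaks the downstream argument: the weighted terms telescope, $\sum_{t}\frac{\Vert g_t\Vert^2}{\sqrt{\bnu_{t-1}}(\sqrt{\bnu_t}+\sqrt{\bnu_{t-1}})^2}\le\frac{1}{\sqrt{\bnu_0}}$, whereas $\sum_t\frac{1}{\sqrt{\bnu_{t-1}}}$ can grow like $T$. (The constants also fail to close: with your parameter $\alpha=D_1$ the $D_1\Vert\nabla f(\bw_t)\Vert^2$ leakage from the noise half already costs a full $\eta\frac{\Vert\nabla f(\bw_t)\Vert^2}{\sqrt{\bnu_{t-1}}}$, overshooting the $\frac34\eta$ budget; enlarging $\alpha$ inflates the telescope coefficient past $\frac{\eta D_1}{2}$.)

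The missing idea is an order-of-operations reversal, and it cannot be patched by inserting an inequality into your pointwise split. The paper first cancels one factor via $\sqrt{\bnu_t}\ge\Vert g_t\Vert$ to reach $\eta\frac{\Vert\nabla f(\bw_t)\Vert}{\sqrt{\bnu_{t-1}}}\,\mathbb{E}^{|\mathcal{F}_t}\bigl[\frac{\Vert g_t\Vert^2}{\sqrt{\bnu_t}+\sqrt{\bnu_{t-1}}}\bigr]$ --- first power of the denominator, not the second power your $\sqrt{\bnu_t}\ge\frac12(\sqrt{\bnu_t}+\sqrt{\bnu_{t-1}})$ step manufactures --- \emph{then} applies the mean-value inequality to the two $\mathcal{F}_t$-measurable scalars $\Vert\nabla f(\bw_t)\Vert$ and $b:=\mathbb{E}^{|\mathcal{F}_t}\bigl[\Vert g_t\Vert^2/(\sqrt{\bnu_t}+\sqrt{\bnu_{t-1}})\bigr]$, so that the noise part appears as the \emph{square of a conditional expectation} $b^2$, and only then decouples by H\"older/Cauchy--Schwarz:
\begin{equation*}
\left(\mathbb{E}^{|\mathcal{F}_t}\left[\frac{\Vert g_t\Vert^2}{\sqrt{\bnu_t}+\sqrt{\bnu_{t-1}}}\right]\right)^2\le\mathbb{E}^{|\mathcal{F}_t}\left[\Vert g_t\Vert^2\right]\cdot\mathbb{E}^{|\mathcal{F}_t}\left[\frac{\Vert g_t\Vert^2}{(\sqrt{\bnu_t}+\sqrt{\bnu_{t-1}})^2}\right].
\end{equation*}
This is legitimate precisely because it splits the square of an expectation rather than the expectation of a product; it hands one clean copy of $\mathbb{E}^{|\mathcal{F}_t}\Vert g_t\Vert^2$ to Assumption \ref{assum: affine} while keeping the weighted copy intact for the $D_0$ term --- which is what your routing forfeits. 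Your other deviation (drawing the telescope from the gradient half of the Young split rather than from the $D_1\Vert\nabla f(\bw_t)\Vert^2$ piece of the noise bound) is harmless, since both roads lead to the same quantity $\Vert\nabla f(\bw_t)\Vert^2\,\mathbb{E}^{|\mathcal{F}_t}\bigl[\frac{1}{\sqrt{\bnu_{t-1}}}-\frac{1}{\sqrt{\bnu_t}}\bigr]$; but without the $b^2$-shaped H\"older step your proof of the stated bound does not go through.
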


\begin{proof}
By a simple calculation, we have
\small
\begin{align}
\nonumber
   & \left\vert\mathbb{E}^{|\mathcal{F}_t}\left[\left\langle \nabla f(\bw_t), \eta \left(\frac{1}{\sqrt{\bnu_{t-1}} }-\frac{1}{\sqrt{\bnu_{t}} }\right)g_t \right\rangle\right]\right\vert
=  \left\vert\mathbb{E}^{|\mathcal{F}_t}\left[\left\langle \nabla f(\bw_t), \eta \frac{\Vert g_t\Vert^2}{(\sqrt{\bnu_{t-1}} )(\sqrt{\bnu_{t}} )(\sqrt{\bnu_{t}}+\sqrt{\bnu_{t-1}})}g_t \right\rangle\right] \right\vert
\\
\le &\eta\mathbb{E}^{|\mathcal{F}_t}\left[\Vert \nabla f(\bw_t)\Vert  \frac{\Vert g_t\Vert^3}{(\sqrt{\bnu_{t-1}} )(\sqrt{\bnu_{t}} )(\sqrt{\bnu_{t}}+\sqrt{\bnu_{t-1}})}\right] 
\nonumber
\le  \eta\frac{\Vert \nabla f(\bw_t)\Vert}{\sqrt{\bnu_{t-1}} }\mathbb{E}^{|\mathcal{F}_t}\left[  \frac{\Vert g_t\Vert^2}{\sqrt{\bnu_{t}}+\sqrt{\bnu_{t-1}}}\right]
,
\end{align}
\normalsize
where the first inequality is due to Cauchy-Schwarz inequality, and the second inequality is because $\bnu_t\ge \Vert g_t\Vert^2$. By the mean-value inequality ($2ab\le a^2+b^2$), 
\begin{equation}
\label{eq: before_cauchy}
    \eta\frac{\Vert \nabla f(\bw_t)\Vert}{\sqrt{\bnu_{t-1}} }\mathbb{E}^{|\mathcal{F}_t}\left[  \frac{\Vert g_t\Vert^2}{\sqrt{\bnu_{t}}+\sqrt{\bnu_{t-1}}}\right] \le \frac{1}{2}\eta\frac{\Vert \nabla f(\bw_t)\Vert^2}{\sqrt{\bnu_{t-1}} }+\frac{1}{2}\frac{\eta}{\sqrt{\bnu_{t-1}} }\left(\mathbb{E}^{|\mathcal{F}_t}\left[  \frac{\Vert g_t\Vert^2}{\sqrt{\bnu_{t}}+\sqrt{\bnu_{t-1}}}\right]\right)^2.
\end{equation}
We focus on the last quantity $(\mathbb{E}^{|\mathcal{F}_t}[  \frac{\Vert g_t\Vert^2}{\sqrt{\bnu_{t}}+\sqrt{\bnu_{t-1}}}])^2$. By further applying Hölder's inequality,
\begin{align*}
   & \left(\mathbb{E}^{|\mathcal{F}_t}\left[  \frac{\Vert g_t\Vert^2}{\sqrt{\bnu_{t}}+\sqrt{\bnu_{t-1}}}\right]\right)^2\le \mathbb{E}^{|\mathcal{F}_t}  \Vert g_t \Vert^2 \cdot\mathbb{E}^{|\mathcal{F}_t}\left[  \frac{\Vert g_t\Vert^2}{(\sqrt{\bnu_{t}}+\sqrt{\bnu_{t-1}})^2}\right]
    \\
  & ~~~~~~~~~~~~~~~~~~~~~~~~~~~~~~~~~~~~~~~~~~~~~~~~~~~~~~~~~~~~~~~~~~~~~~\le ( D_0+ D_1 \Vert \nabla f(\bw_{t}) \Vert^2)  \mathbb{E}^{|\mathcal{F}_t}\left[  \frac{\Vert g_t\Vert^2}{(\sqrt{\bnu_{t}}+\sqrt{\bnu_{t-1}})^2}\right] ,
\end{align*}
\normalsize
where in the last inequality we use Assumption \ref{assum: smooth}.
Plugging the above inequality back to Eq. (\ref{eq: before_cauchy}),  the ``Error" term can be bounded as 

\begin{align}
\nonumber
   & \left\vert\mathbb{E}^{|\mathcal{F}_t} \left[\langle \nabla f(\bw_t), \eta (\frac{1}{\sqrt{\bnu_{t-1}} }-\frac{1}{\sqrt{\bnu_{t}} })g_t \rangle\right]\right \vert \le \frac{1}{2}\eta\frac{\Vert \nabla f(\bw_t)\Vert^2}{\sqrt{\bnu_{t-1}} }+\frac{1}{2}\frac{\eta}{\sqrt{\bnu_{t-1}} }  D_0 \mathbb{E}^{|\mathcal{F}_t}\left[  \frac{\Vert g_t\Vert^2}{(\sqrt{\bnu_{t}}+\sqrt{\bnu_{t-1}})^2}\right]
    \\
\label{eq: mid_lemma}
     & +\frac{1}{2}\frac{\eta}{\sqrt{\bnu_{t-1}} }  D_1\Vert \nabla f(\bw_t)\Vert^2\ \mathbb{E}^{|\mathcal{F}_t}\left[  \frac{\Vert g_t\Vert^2}{(\sqrt{\bnu_{t}}+\sqrt{\bnu_{t-1}})^2}\right].
\end{align}
In the RHS of the above equality, the first term is a negative half of the ``First  Order Main" term, and the second term is $\frac{1}{\eta L \sqrt{\bnu_{t-1}}}$ times of the ``Second order" term, and thus is at the same order of the ``Second order" term due to $\frac{1}{\sqrt{\bnu_{t-1}}}$ is upper bounded. We focus on the last term, and utilize the observation that
\begin{align*}
\frac{\Vert g_t\Vert^2}{\sqrt{\bnu_{t-1}}(\sqrt{\bnu_{t}}+\sqrt{\bnu_{t-1}})^2}
    \le \frac{\Vert g_t\Vert^2}{\sqrt{\bnu_{t-1}} \sqrt{\bnu_{t}}(\sqrt{\bnu_{t}}+\sqrt{\bnu_{t-1}})}=\frac{1}{\sqrt{\bnu_{t-1}}}-\frac{1}{\sqrt{\bnu_{t}}}. 
\end{align*}
Thus, the last term can be bounded as
\begin{align*}
    &\frac{1}{2}\frac{\eta D_1 \Vert \nabla f(\bw_t) \Vert^2}{\sqrt{\bnu_{t-1}} } \mathbb{E}^{|\mathcal{F}_t}\left[  \frac{\Vert g_t\Vert^2}{(\sqrt{\bnu_{t}}+\sqrt{\bnu_{t-1}})^2}\right]
    \le \frac{1}{2}\eta D_1 \Vert \nabla f(\bw_t) \Vert^2 \mathbb{E}^{|\mathcal{F}_t}\left(\frac{1}{\sqrt{\bnu_{t-1}}}-\frac{1}{\sqrt{\bnu_{t}}}\right),
\end{align*}
which can be further decomposed into
\begin{align*}
    &\frac{1}{2}\eta D_1 \Vert \nabla f(\bw_t) \Vert^2 \mathbb{E}^{|\mathcal{F}_t}\left(\frac{1}{\sqrt{\bnu_{t-1}}}-\frac{1}{\sqrt{\bnu_{t}}}\right)
    \\
    =&  \frac{\eta}{2} D_1  \mathbb{E}^{|\mathcal{F}_t}\left(\frac{\Vert \nabla f(\bw_{t-1}) \Vert^2}{\sqrt{\bnu_{t-1}}}-\frac{\Vert \nabla f(\bw_t) \Vert^2}{\sqrt{\bnu_{t}}}\right)
    +\frac{\eta}{2} D_1  \frac{\Vert \nabla f(\bw_{t}) \Vert^2-\Vert \nabla f(\bw_{t-1}) \Vert^2}{\sqrt{\bnu_{t-1}}}.
\end{align*}
By  Assumption \ref{assum: smooth}, $\Vert \nabla f(\bw_{t}) \Vert-\Vert \nabla f(\bw_{t-1}) \Vert \le \Vert  \nabla f(\bw_{t})- \nabla f(\bw_{t-1}) \Vert \le L \Vert \bw_t-\bw_{t-1} \Vert  $. Therefore, 
\begin{align*}
    &\frac{1}{2}\eta D_1 \Vert \nabla f(\bw_t) \Vert^2 \mathbb{E}^{|\mathcal{F}_t}\left(\frac{1}{\sqrt{\bnu_{t-1}}}-\frac{1}{\sqrt{\bnu_{t}}}\right)
    \\
    =&  \frac{\eta}{2} D_1  \mathbb{E}^{|\mathcal{F}_t}\left(\frac{\Vert \nabla f(\bw_{t-1}) \Vert^2}{\sqrt{\bnu_{t-1}}}-\frac{\Vert \nabla f(\bw_t) \Vert^2}{\sqrt{\bnu_{t}}}\right)
    +\frac{\eta}{2} D_1  \frac{2L\Vert \bw_{t}-\bw_{t-1}\Vert \Vert \nabla f(\bw_t)\Vert +L^2 \Vert \bw_{t}-\bw_{t-1}\Vert^2}{\sqrt{\bnu_{t-1}}}
    \\
    = & \frac{\eta}{2} D_1  \mathbb{E}^{|\mathcal{F}_t}(\xi(t-1)-\xi(t))
    +\frac{\eta}{2} D_1  \frac{2L \eta \frac{\Vert g_{t-1} \Vert \Vert \nabla f(\bw_t)\Vert}{\sqrt{\bnu_{t-1}}}+(L \eta \frac{\Vert g_{t-1} \Vert}{\sqrt{\bnu_{t-1}}})^2}{\sqrt{\bnu_{t-1}}},
\end{align*}
where the last inequality we use $\xi(t)\triangleq \frac{\Vert \nabla f(\bw_t)\Vert^2}{\sqrt{\bnu_t}}$ and $\bw_t-\bw_{t-1}=\eta\frac{g_{t-1}}{\sqrt{\bnu_{t-1}}}$. Applying again the Cauchy-Schwarz inequality, we obtain
\small
\begin{align}
\nonumber
    &\frac{\eta}{2} D_1 \mathbb{E}^{|\mathcal{F}_t}\Vert \nabla f(\bw_t) \Vert^2 \left(\frac{1}{\sqrt{\bnu_{t-1}}}-\frac{1}{\sqrt{\bnu_{t}}}\right)
    \\
    \label{eq: challenge}
    \le & \frac{\eta}{2} D_1  \mathbb{E}^{|\mathcal{F}_t}\left(\xi(t-1)-\xi(t)\right)
    +\frac{1}{4} \eta \frac{\Vert \nabla f(\bw_t) \Vert^2}{\sqrt{\bnu_{t-1}}} +\eta (L \eta D_1)^2   \frac{\Vert g_{t-1} \Vert^2}{\sqrt{\bnu_{t-1}}^3}+\frac{\eta}{2} D_1 (L \eta)^2 \frac{\Vert g_{t-1} \Vert^2}{\sqrt{\bnu_{t-1}}^3}.
\end{align}
\normalsize
Applying the above inequality back into Eq. (\ref{eq: mid_lemma}), the ``Error" term can be bounded as
\begin{align*}
    &\mathbb{E}^{|\mathcal{F}_t}[\langle \nabla f(\bw_t), \eta (\frac{1}{\sqrt{\bnu_{t-1}} }-\frac{1}{\sqrt{\bnu_{t}} })g_t \rangle]    \le \frac{1}{2}\eta\frac{\Vert \nabla f(\bw_t)\Vert^2}{\sqrt{\bnu_{t-1}} }+\frac{1}{2}\frac{\eta}{\sqrt{\bnu_{t-1}} }  D_0 \mathbb{E}^{|\mathcal{F}_t}\left[  \frac{\Vert g_t\Vert^2}{(\sqrt{\bnu_{t}}+\sqrt{\bnu_{t-1}})^2}\right]
    \\
    &+\frac{\eta}{2} D_1  \mathbb{E}^{|\mathcal{F}_t}\left(\xi(t-1)-\xi(t)\right)
    +\frac{1}{4} \eta \frac{\Vert \nabla f(\bw_t) \Vert^2}{\sqrt{\bnu_{t-1}}} +\eta (L \eta D_1)^2   \frac{\Vert g_{t-1} \Vert^2}{\sqrt{\bnu_{t-1}}^3}+\frac{\eta}{2} D_1 (L \eta)^2 \frac{\Vert g_{t-1} \Vert^2}{\sqrt{\bnu_{t-1}}^3}.
\end{align*}
Rearranging the RHS of the above inequality leads to the claim.
\end{proof}

Applying Lemma \ref{lem: auxiliary} back to the descent lemma, we then have
\begin{small}
\begin{align}
\nonumber
    &\mathbb{E}^{|\mathcal{F}_t}[f(\bw_{t+1})]
    \le f(\bw_t) -\eta \frac{\Vert\nabla f(\bw_t) \Vert^2 }{\sqrt{\bnu_{t-1}}}+\frac{3}{4}\eta\frac{\Vert \nabla f(\bw_t)\Vert^2}{\sqrt{\bnu_{t-1}} }+\frac{1}{2}\frac{\eta}{\sqrt{\bnu_{t-1}} }  D_0 \mathbb{E}^{|\mathcal{F}_t}\left[  \frac{\Vert g_t\Vert^2}{(\sqrt{\bnu_{t}}+\sqrt{\bnu_{t-1}})^2}\right]
    \\
\nonumber
    &+\frac{\eta}{2} D_1  \mathbb{E}^{|\mathcal{F}_t}\left(\xi(t-1)-\xi(t)\right)
    +\left(\eta (L \eta D_1)^2 +\frac{\eta}{2} D_1 (L \eta)^2\right)  \frac{\Vert g_{t-1} \Vert^2}{\sqrt{\bnu_{t-1}}^3} +\frac{L}{2}\eta^2\mathbb{E}^{|\mathcal{F}_t} \left\Vert  \frac{g_t}{\sqrt{\bnu_t} } \right\Vert^2
    \\
\nonumber
    =&  f(\bw_t) -\frac{1}{4}\eta \frac{\Vert\nabla f(\bw_t) \Vert^2 }{\sqrt{\bnu_{t-1}}}+\frac{1}{2}\frac{\eta}{\sqrt{\bnu_{t-1}} }  D_0 \mathbb{E}^{|\mathcal{F}_t}\left[  \frac{\Vert g_t\Vert^2}{(\sqrt{\bnu_{t}}+\sqrt{\bnu_{t-1}})^2}\right]+\frac{\eta}{2} D_1  \mathbb{E}^{|\mathcal{F}_t}\left(\xi(t-1)-\xi(t)\right)
    \\
    &
    \label{eq: potantial_mid_2}
    +\left(\eta (L \eta D_1)^2 +\frac{\eta}{2} D_1 (L \eta)^2\right)  \frac{\Vert g_{t-1} \Vert^2}{\sqrt{\bnu_{t-1}}^3} +\frac{L}{2}\eta^2\mathbb{E}^{|\mathcal{F}_t} \left\Vert  \frac{g_t}{\sqrt{\bnu_t} } \right\Vert^2.
\end{align}
\end{small}
Taking expectation with respect to $\mathcal{F}_t$ to the above inequality then leads to 
\begin{align*}
    \mathbb{E}[f(\bw_{t+1})]
    \le&  \mathbb{E}[f(\bw_t)]-\frac{1}{4} \eta \mathbb{E}\left[\frac{\Vert \nabla f(\bw_t) \Vert^2}{\sqrt{\bnu_{t-1}}}\right]+ \frac{\eta}{2} D_1  \mathbb{E}\left(\xi(t-1)-\xi(t)\right)
    \\
    +& \frac{\eta D_0}{2 } \mathbb{E}\left[  \frac{\Vert g_t\Vert^2}{\sqrt{\bnu_{t-1}}(\sqrt{\bnu_{t}}+\sqrt{\bnu_{t-1}})^2}\right]+\frac{L}{2}\eta^2 \mathbb{E}\left\Vert  \frac{g_t}{\sqrt{\bnu_t} } \right\Vert^2+(\eta (L \eta D_1)^2 +\frac{\eta}{2} D_1 (L \eta)^2 ) \mathbb{E}\frac{\Vert g_{t-1} \Vert^2}{\sqrt{\bnu_{t-1}}^3}.
\end{align*}
The sum over $t$ from $1$ to $T$ of the last three terms above can be bounded by 
\begin{align*}
    &\frac{\eta D_0}{2 } \sum_{t=1}^T\mathbb{E}\left[  \frac{\Vert g_t\Vert^2}{\sqrt{\bnu_{t-1}}(\sqrt{\bnu_{t}}+\sqrt{\bnu_{t-1}})^2}\right]+\sum_{t=1}^T\frac{L}{2}\eta^2 \mathbb{E}\left\Vert  \frac{g_t}{\sqrt{\bnu_t} } \right\Vert^2+(\eta (L \eta D_1)^2 +\sum_{t=1}^T\frac{\eta}{2} D_1 (L \eta)^2 ) \mathbb{E}\frac{\Vert g_{t-1} \Vert^2}{\sqrt{\bnu_{t-1}}^3}
    \\
    \le & \left(2  \eta (L \eta D_1)^2 +\eta D_1 (L \eta)^2 )+\frac{\eta}{2} D_0 \right)\frac{1}{\sqrt{\bnu_0}}+\frac{L}{2}\eta^2 (\mathbb{E}\ln \bnu_T-\ln \bnu_0), 
\end{align*}
where the inequality is due to that if $\{a_i\}_{i=0}^{\infty}$ is a series of non-negative real numbers with $a_0>0$, then  $\sum_{t=1}^T \frac{a_t}{\sqrt{(\sum_{s=0}^t a_s)^3}} \le 2\frac{1}{\sqrt a_0}$,
        $\sum_{t=1}^T \frac{a_t}{\sum_{s=0}^t a_s} \le  \ln \sum_{t=0}^T a_t -\ln a_0$, and
         $\sum_{t=1}^T \frac{a_t}{\sqrt{\sum_{s=0}^t a_s}(\sqrt{\sum_{s=0}^{t-1} a_s}+\sqrt{\sum_{s=0}^t a_s})^2}$ $ \le \frac{1}{\sqrt a_0}$ .
Therefore, summing Eq. (\ref{eq: potantial_mid_2}) over $t$ from $1$ to $T$ leads to
\small
\begin{align}
\nonumber
    &\frac{1}{4} \eta \sum_{t=1}^T \mathbb{E}\frac{\Vert \nabla f(\bw_t) \Vert^2}{\sqrt{\bnu_{t-1}}}\le f(\bw_1)-\mathbb{E}[f(\bw_T)]+ \frac{\eta D_1}{2} \mathbb{E}[\xi(0)-\xi(t)]
    \\
     &~~~~~~~~~~~~~~~~~~~~~~~~~~+\left(2  \eta (L \eta D_1)^2 +\eta D_1 (L \eta)^2 )+\frac{\eta}{2} D_0 \right)\frac{1}{\sqrt{\bnu_0}}+\frac{L}{2}\eta^2 (\mathbb{E}\ln \bnu_T-\ln \bnu_0)
    \label{eq: adaptive gradient bound} .
\end{align}
\normalsize
Applying the definition of $C_1$ and $C_2$, we have $\sum_{t=1}^T \mathbb{E}\frac{\Vert \nabla f(\bw_t) \Vert^2}{\sqrt{\bnu_{t-1}}}\le C_1+C_2 \mathbb{E}\ln \bnu_T$. In Stage II, we translate such an inequality to the bound of $\sum_{t=1}^T \Vert \nabla f(\bw_t)\Vert^2$.
\\
\textbf{Stage II: Bound $\sum_{t=1}^T \Vert \nabla f(\bw_t)\Vert^2$.} We  bound $\mathbb{E}[\sqrt{\bnu_T}]$ by divide-and-conquer. We first consider the iterations satisfying $\nabla f(\bw_t)>\frac{D_0}{D_1}$:
\small
\begin{align}
\nonumber
       C_1 +C_2  \mathbb{E}\ln \bnu_T \ge&   \sum_{t=1}^T \mathbb{E}\left[\frac{\Vert \nabla f(\bw_t) \Vert^2}{\sqrt{\bnu_{t-1}}}\right] \ge     \sum_{t=1}^T \mathbb{E}\left[\frac{\Vert \nabla f(\bw_t) \Vert^2}{\sqrt{\bnu_{t-1}}}\mathds{1}_{\Vert \nabla f(\bw_t)\Vert^2 > \frac{D_0}{D_1}}\right]
        \\
        \label{eq: case_large}
        \ge & \frac{1}{2D_1 } \sum_{t=1}^T \mathbb{E}\left[\frac{\Vert g_t \Vert^2}{\sqrt{\bnu_{t-1}}}\mathds{1}_{\Vert \nabla f(\bw_t)\Vert^2 > \frac{D_0}{D_1}}\right]\ge \frac{1}{2D_1 }   \mathbb{E}\left[\frac{\sum_{t=1}^T\Vert g_t \Vert^2 \mathds{1}_{\Vert \nabla f(\bw_t)\Vert^2 > \frac{D_0}{D_1}}}{\sqrt{\bnu_{T}}}\right],
\end{align}
\normalsize
where in the third inequality, we use the following fact, 
\begin{equation*}
    2D_1 \Vert \nabla f(\bw_t) \Vert^2\mathds{1}_{\Vert \nabla f(\bw_t)\Vert^2 > \frac{D_0}{D_1}}\ge (D_0+D_1 \Vert \nabla f(\bw_t) \Vert^2)  \mathds{1}_{\Vert \nabla f(\bw_t)\Vert^2 > \frac{D_0}{D_1}}\ge \mathbb{E}^{|\mathcal{F}_t} \Vert g_t \Vert^2 \mathds{1}_{\Vert \nabla f(\bw_t)\Vert^2> \frac{D_0}{D_1}}.
\end{equation*}
We then consider the iterations satisfying $\nabla f(\bw_t)\le\frac{D_0}{D_1}$,
\small
\begin{align}
\nonumber
    &\frac{1}{2D_1 }  \sum_{t=1}^T \mathbb{E}\left[\frac{\Vert g_t \Vert^2}{\sqrt{\bnu_{T}}}\mathds{1}_{\Vert \nabla f(\bw_t)\Vert^2 \le \frac{D_0}{D_1}}\right]+ \frac{1}{2D_1 }  \mathbb{E}\frac{\bnu_0}{\sqrt{\bnu_{T}}}
    \le   \frac{1}{2D_1 }   \mathbb{E}\left[\frac{\sum_{t=1}^T\Vert g_t \Vert^2\mathds{1}_{\Vert \nabla f(\bw_t)\Vert^2 \le \frac{D_0}{D_1}}+\bnu_0}{\sqrt{\sum_{t=1}^T \Vert g_t \Vert^2\mathds{1}_{\Vert \nabla f(\bw_t)\Vert^2 \le \frac{D_0}{D_1}}+\bnu_0}}\right] 
    \\
    \nonumber
    =&  \frac{1}{2D_1 }   \mathbb{E}\sqrt{\sum_{t=1}^T\Vert g_t \Vert^2\mathds{1}_{\Vert \nabla f(\bw_t)\Vert^2 \le \frac{D_0}{D_1}}+\bnu_0}
        \le  \frac{1}{2D_1 }  \sqrt{\mathbb{E}\left[\sum_{t=1}^T\Vert g_t \Vert^2\mathds{1}_{\Vert \nabla f(\bw_t)\Vert^2 \le \frac{D_0}{D_1}}\right]+\bnu_0} 
        \\
        \label{eq: case_small}
        \le&  \frac{1}{2D_1 } \sqrt{\mathbb{E}\left[\sum_{t=1}^T(D_1\Vert \nabla f(\bw_t)) \Vert^2 +D_0)\mathds{1}_{\Vert \nabla f(\bw_t)\Vert^2 \le \frac{D_0}{D_1}}+\bnu_0\right]}             
    \le \frac{1}{2D_1} \sqrt{2D_0T+\bnu_0}.
\end{align}
\normalsize
Here in the second inequality we use Jensen's inequality, and in the third we use Assumption \ref{assum: affine}.                                    
Putting Eq. (\ref{eq: case_large}) and Eq. (\ref{eq: case_small}) together, we then have
\small
\begin{align*}
    &\frac{1}{2D_1 } \mathbb{E}[\sqrt{\bnu_T}] = \frac{1}{2D_1 }  \mathbb{E}\left[\frac{\sum_{t=1}^T\Vert g_t \Vert^2 \mathds{1}_{\Vert \nabla f(\bw_t)\Vert^2 > \frac{D_0}{D_1}}}{\sqrt{\bnu_{T}}}\right]+
    \frac{1}{2D_1 }  \sum_{t=1}^T \mathbb{E}\left[\frac{\Vert g_t \Vert^2}{\sqrt{\bnu_{T}}}\mathds{1}_{\Vert \nabla f(\bw_t)\Vert^2 \le \frac{D_0}{D_1}}\right]+ \frac{1}{2D_1 } \mathbb{E}\frac{\bnu_0}{\sqrt{\bnu_{T}}}
    \\
   \le & \frac{1}{2D_1} \sqrt{2D_0T+\bnu_0} +C_1+C_2 \mathbb{E} \ln \bnu_T\le \frac{1}{2D_1} \sqrt{2D_0T+\bnu_0} +C_1+2C_2  \ln \mathbb{E} \sqrt{\bnu_T}.
\end{align*}
\normalsize
Here in the last inequality we use Jensen's inequality. Solving the above inequality with respect to $\mathbb{E}[\sqrt{\bnu_T}]$, we have 
$
   \mathbb{E}[\sqrt{\bnu_T}]\le 2\sqrt{2D_0T+\bnu_0}+4 D_1C_1 +48C_2 D_1 \ln  (4C_2 D_1+e)$.
As
       $$C_1 +2C_2 \ln \mathbb{E} \sqrt{\bnu_T} \ge   \sum_{t=1}^T \mathbb{E}\left[\frac{\Vert \nabla f(\bw_t) \Vert^2}{\sqrt{\bnu_{t-1}}}\right]\ge  \mathbb{E}\left[\frac{\sum_{t=1}^T\Vert \nabla f(\bw_t) \Vert^2}{\sqrt{\bnu_{T}}}\right]
     \ge\frac{\mathbb{E}\left[\sqrt{\sum_{t=1}^T\Vert \nabla f(\bw_t) \Vert^2}\right]^2}{\mathbb{E}[\sqrt{\bnu_T}]},$$
 where the last inequality is due to Hölder's inequality.
Applying the estimation of $\mathbb{E}\sqrt{\bnu_T}$, we obtain that $ \mathbb{E}\left[\sqrt{\sum_{t=1}^T\Vert \nabla f(\bw_t) \Vert^2}\right]^2\le (2\sqrt{2D_0T+\bnu_0}+C_3)(C_1+2C_2\ln(2\sqrt{2D_0T+\bnu_0}+C_3))$.

By further applying Markov's inequality, we conclude the proof. \QEDA
\begin{remark}
    The proof in \citep{faw2022power} can also be divided into two stages with similar goals. Our proof is simpler in both stages, and we discuss the reason here. As pointed out in Section \ref{sec: motivation}, our proof in Stage I is simpler is due to the novel auxiliary function $\xi$. Moreover, our conclusion in Stage I is also stronger, which laid a better foundation for Stage II: \cite{faw2022power} can only derive the bound 
 of $\mathbb{E}\sum_{t\in \tilde{S}}\frac{\Vert \nabla f(\bw_t)\Vert^2}{\sqrt{\tilde{\bnu}_{t}}}$, where $\tilde{S}$ is a subset of $[T]$. This raises additional challenges for Stage II in \citep{faw2022power}, as our divide-and-conquer technique can no longer be applied. \cite{faw2022power} resolve this through a recursively-improving technique, which not only require a complicated proof, but also entangles $D_0$ and $D_1$ and leads to a sub-optimal rate under strong growth condition.
\end{remark}

\section{Extending the analysis to AdaGrad}
\label{sec: adagrad}
In this section, we extend the convergence analysis of AdaGrad-Norm to AdaGrad. Such a result is attractive since AdaGrad is more commonly used in practice than AdaGrad-Norm. 
A natural hope is to prove the convergence of AdaGrad under the same set of assumptions in Theorem \ref{thm: adagrad_norm}. However, challenge arises when we try to derive an AdaGrad version of Lemma \ref{lem: auxiliary}. Concretely, the ``First-Order Main" term becomes $\eta \sum_{l=1}^d \frac{\partial_i f(\bw_t)^2}{\sqrt{\bnu_{t-1,l}}}$ (we use $\bnu_{t,l}$ as the $l$-th coordinate of $\bnu_t$, and similar does $g_{t,l}$), while the bound of the ``Error" term  includes a term $\frac{\eta}{4} \sum_{l=1}^d \frac{\Vert \nabla  f(\bw_t) \Vert^2}{\sqrt{\bnu_{t-1,l}}}$ and thus can not be controlled. Such a mismatch is due to that $\mathbb{E}[g_{t,i}^2]$ can only be bounded by the full gradient  $\Vert \nabla f(\bw_t) \Vert $ instead of the partial derivative of the corresponding coordinate $\vert \partial_i f(\bw_t) \vert $ (see Appendix \ref{appen: explain} for details). Therefore, to derive the convergence of AdaGrad, we strengthen Assumption \ref{assum: affine} to let the affine noise variance hold coordinate-wisely.
\begin{assumption}[Coordinate-wise affine noise variance assumption]
\label{assum: coordinate_affine}
We assume that there exist positive constants $D_0$ and $D_1$ such that $\forall t \ge 1$ and $\forall i \in [d]$,
    $\mathbb{E}[\vert g_{t,i}\vert^2| \mathcal{F}_t] \le D_{0}+D_{1} \partial_i f(\bw_t)^2$.
\end{assumption}
Note that Assumption \ref{assum: coordinate_affine} is \emph{still general than most of the assumptions in existing works}. As an example, the bounded noise variance assumption is its special case. Next, we obtain the convergence  result for AdaGrad as follows.

\begin{theorem}[AdaGrad]
\label{thm:adagrad}
     Let Assumptions \ref{assum: smooth} and \ref{assum: coordinate_affine} hold. Then, for AdaGrad with any  learning rate $ \eta >0$, we have that with probability at least $1-\delta$, $\min_{t\in [T]} \Vert \nabla f(\bw_t) \Vert^2=\mathcal{O}(\frac{1+\ln (1+\sqrt{D_0T})}{T\delta^2})+\mathcal{O}(\frac{\sqrt{D_0}(1+\ln (1+\sqrt{D_0T}))}{\sqrt{T}\delta^2})$.
\end{theorem}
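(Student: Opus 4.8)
The plan is to mirror the two-stage argument of Theorem \ref{thm: adagrad_norm}, replacing the scalar auxiliary function $\xi$ by its coordinate-wise counterpart $\tilde{\xi}(t)=\sum_{l=1}^d \frac{\partial_l f(\bw_t)^2}{\sqrt{\bnu_{t,l}}}$. Starting from the same expected descent lemma (Eq.~(\ref{eq: descent lemma})), the first-order term $\mathbb{E}^{|\mathcal{F}_t}\langle \nabla f(\bw_t),-\eta \frac{g_t}{\sqrt{\bnu_t}}\rangle$ now splits as a sum over coordinates. Approximating each $\bnu_{t,l}$ by the $\mathcal{F}_t$-measurable surrogate $\bnu_{t-1,l}$ and using unbiasedness $\mathbb{E}^{|\mathcal{F}_t}g_{t,l}=\partial_l f(\bw_t)$ produces the ``First-Order Main'' term $-\eta\sum_{l=1}^d \frac{\partial_l f(\bw_t)^2}{\sqrt{\bnu_{t-1,l}}}$ together with a coordinate-wise ``Error'' term $\eta\sum_{l=1}^d \mathbb{E}^{|\mathcal{F}_t}[\partial_l f(\bw_t)\, g_{t,l}(\frac{1}{\sqrt{\bnu_{t-1,l}}}-\frac{1}{\sqrt{\bnu_{t,l}}})]$, exactly as in Eq.~(\ref{eq: decomposition_approximation}) but decoupled across $l$.

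Stage I is a coordinate version of Lemma \ref{lem: auxiliary}. For each $l$ I would repeat the chain of Cauchy--Schwarz, the identity $\frac{1}{\sqrt{\bnu_{t-1,l}}}-\frac{1}{\sqrt{\bnu_{t,l}}}=\frac{g_{t,l}^2}{\sqrt{\bnu_{t-1,l}}\sqrt{\bnu_{t,l}}(\sqrt{\bnu_{t,l}}+\sqrt{\bnu_{t-1,l}})}$, the mean-value inequality, and Hölder's inequality, arriving at a term proportional to $\mathbb{E}^{|\mathcal{F}_t}[g_{t,l}^2]$ multiplying a factor built from $\frac{1}{\sqrt{\bnu_{t-1,l}}}$. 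The decisive move is to invoke Assumption \ref{assum: coordinate_affine} here rather than the norm-wise Assumption \ref{assum: affine}: it replaces $\mathbb{E}^{|\mathcal{F}_t}[g_{t,l}^2]$ by $D_0+D_1\partial_l f(\bw_t)^2$, so that the dominant part of the $l$-th error is a multiple of $\frac{\partial_l f(\bw_t)^2}{\sqrt{\bnu_{t-1,l}}}$ --- the very same coordinate appearing in ``First-Order Main'' and hence absorbable. Invoking $L$-smoothness through $|\partial_l f(\bw_t)-\partial_l f(\bw_{t-1})|\le \Vert\nabla f(\bw_t)-\nabla f(\bw_{t-1})\Vert\le L\Vert\bw_t-\bw_{t-1}\Vert$, I would then split $\partial_l f(\bw_t)^2(\frac{1}{\sqrt{\bnu_{t-1,l}}}-\frac{1}{\sqrt{\bnu_{t,l}}})$ into the telescoping difference $\tilde{\xi}(t-1)-\tilde{\xi}(t)$ plus remainders controlled by $\Vert\bw_t-\bw_{t-1}\Vert^2=\eta^2\sum_m g_{t-1,m}^2/\bnu_{t-1,m}$. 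Summing over $l$ and $t$, the $\tilde{\xi}$ telescoping collapses and the $D_0$, second-order, and smoothness remainders are handled by the same summability inequalities (e.g. $\sum_t g_{t,l}^2/\bnu_{t,l}\le \ln\bnu_{T,l}-\ln\bnu_{0,l}$ and the $3/2$-power analogue) applied per coordinate, yielding $\sum_{t=1}^T\mathbb{E}[\sum_l \frac{\partial_l f(\bw_t)^2}{\sqrt{\bnu_{t-1,l}}}]\le \tilde{C}_1+\tilde{C}_2\sum_l\mathbb{E}\ln\bnu_{T,l}$.

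Stage II converts this into a gradient-norm bound by a coordinate-wise divide-and-conquer. For each $l$ I would split the iterations according to whether $\partial_l f(\bw_t)^2$ exceeds $D_0/D_1$: on the ``large'' set $2D_1\partial_l f(\bw_t)^2\ge \mathbb{E}^{|\mathcal{F}_t}[g_{t,l}^2]$, while on the ``small'' set the coordinate noise is effectively bounded, so the argument of Eqs.~(\ref{eq: case_large})--(\ref{eq: case_small}) gives $\mathbb{E}\sqrt{\bnu_{T,l}}\le \mathcal{O}(\sqrt{D_0T}+\ln\mathbb{E}\sqrt{\bnu_{T,l}})$, which solves to $\mathbb{E}\sqrt{\bnu_{T,l}}=\mathcal{O}(\sqrt{D_0T}+\ln(1+\sqrt{D_0T}))$. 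Feeding these estimates back, Hölder's inequality per coordinate gives $\mathbb{E}[\sqrt{\sum_t\partial_l f(\bw_t)^2}]^2\le \mathbb{E}\sqrt{\bnu_{T,l}}\cdot\sum_t\mathbb{E}\frac{\partial_l f(\bw_t)^2}{\sqrt{\bnu_{t-1,l}}}$ (the last display of the AdaGrad-Norm proof, read coordinate-wise). Summing over $l$, combining with the Stage I bound, and using $\sum_l\sum_t\partial_l f(\bw_t)^2=\sum_t\Vert\nabla f(\bw_t)\Vert^2$ together with Cauchy--Schwarz over the $d$ coordinates (at the cost of dimension-dependent constants absorbed into $\mathcal{O}(\cdot)$) yields a bound on $\mathbb{E}[\sqrt{\sum_t\Vert\nabla f(\bw_t)\Vert^2}]^2$ of order $\sqrt{D_0T}\,(1+\ln(1+\sqrt{D_0T}))$; a final application of Markov's inequality and $\min_{t\in[T]}\Vert\nabla f(\bw_t)\Vert^2\le \frac{1}{T}\sum_t\Vert\nabla f(\bw_t)\Vert^2$ delivers the stated high-probability rate.

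I expect the main obstacle to be exactly the mismatch flagged before the theorem, and it is resolved already in Stage I: under only the norm-wise Assumption \ref{assum: affine}, the $l$-th error term inherits the \emph{full} gradient norm $\Vert\nabla f(\bw_t)\Vert^2$ in every coordinate, which cannot be cancelled against the coordinate-wise ``First-Order Main'' term $\eta\sum_l\partial_l f(\bw_t)^2/\sqrt{\bnu_{t-1,l}}$; it is precisely the coordinate-wise affine variance of Assumption \ref{assum: coordinate_affine} that makes the two commensurate. A secondary technical nuisance is that $L$-smoothness is a norm-wise statement, so the smoothness remainder couples all coordinates through $\Vert\bw_t-\bw_{t-1}\Vert^2$; this is harmless because $\frac{1}{\sqrt{\bnu_{t-1,l}}}\le \frac{1}{\sqrt{\bnu_{0,l}}}$ keeps the coupled quantity summable, contributing at most an $\mathcal{O}(\sum_m\ln\bnu_{T,m})$ amount whose dimension-dependent constants are swept into the $\mathcal{O}(\cdot)$ of the final bound.
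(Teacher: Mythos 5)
Your proposal follows the paper's own proof in all essentials: the same coordinate-wise decomposition of the descent lemma, the same coordinate auxiliary function $\tilde{\xi}_l(t)=\partial_l f(\bw_t)^2/\sqrt{\bnu_{t,l}}$ with Assumption \ref{assum: coordinate_affine} invoked exactly where the norm-wise Assumption \ref{assum: affine} would leave the error term carrying $\Vert\nabla f(\bw_t)\Vert^2$ and break the cancellation, the same treatment of the smoothness remainder via $\Vert\bw_t-\bw_{t-1}\Vert^2$ and per-coordinate summability lemmas, and the same Stage II divide-and-conquer, H\"older, and Markov finish. The only cosmetic deviation is that you solve the self-bounding inequality for each $\mathbb{E}\sqrt{\bnu_{T,l}}$ separately and then aggregate, whereas the paper bounds $\mathbb{E}[\sum_{l}\sqrt{\bnu_{T,l}}]$ in one shot at the cost of a $2d\tilde{C}_2$ factor in front of the logarithm; both routes absorb the same dimension-dependent constants into the final $\mathcal{O}(\cdot)$.
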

The proof is a coordinate-wise version of the proof for Theorem \ref{thm: adagrad_norm} with some modifications, where we leverage a coordinate-wise version of $\xi(t)$, i.e., $\tilde{\xi}(t)=\sum_{l=1}^d \frac{\partial_l f(\bw_t)^2}{\sqrt{\bnu_{t,l}}}$ (please refer to Appendix \ref{appen: adagrad_wr} for details).

We still seek to relax Assumption \ref{assum: coordinate_affine} back to Assumption \ref{assum: affine}. This is because  Assumption \ref{assum: coordinate_affine} may preclude some basic objectives. We demonstrate this idea through the following example.

\begin{example}
 Consider the following linear regression problem: $f(\bw)= \mathbb{E}_{\bx\sim \mathcal{D}} (\langle \bw, \bx\rangle )^2=\Vert \bw\Vert^2$, where $\mathcal{D}$ is a standard Gaussian distribution over $\mathbb{R}^d$ with the absolute value of each coordinate truncated by $1$.  At point $\bw$, define the stochastic gradient $g(\bw)$ as $2\bx \bx^\top \bw$, where $\bx$ is sampled according to $\mathcal{D}$. One can easily verify that $g(\bw)$ is an unbiased estimation of $\nabla f(\bw)$. For this example, $\mathbb{E}_{\bx\sim \mathcal{D}} \Vert g(\bw) \Vert^2=\Theta(\Vert \bw\Vert^2)$ and $\Vert \nabla f(\bw) \Vert^2=\Theta(\Vert \bw\Vert^2)$. Therefore, Assumption \ref{assum: affine} holds with $L_0= 0$. However, $\Vert \partial_1 f(\bw) \Vert^2=4 (\bw)_1^2$ and $\mathbb{E}_{\bx\sim \mathcal{D}}  (g(\bw))_1 ^2=\Theta(\Vert \bw\Vert^2)$, we see that $\lim_{\vert(\bw)_{2}\vert \rightarrow \infty} \frac{\Vert \partial_1 f(\bw) \Vert^2}{\mathbb{E}_{\bx\sim \mathcal{D}}  (g(\bw))_1 ^2}=\infty$, which violates Assumption \ref{assum: coordinate_affine}.
\end{example}

\begin{wrapfigure}{L}{0.5\textwidth}
\begin{minipage}{0.5\textwidth}
\begin{algorithm}[H]
\label{alg: rr_adagrad}
\caption{Randomly-reshuffled AdaGrad}
\hspace*{0.02in} {\bf Input:}
Objective function $f(\bw):=\frac{1}{n}\sum_{i=1}^n f_i(\bw)$, learning rate $\eta>0$, $\bw_{1,1} \in \mathbb{R}^d$, $\bnu_{1,0}\in \mathbb{R}^{d,+}$
\begin{algorithmic}[1]
\State \textbf{For} $t=1\rightarrow \infty$:
\State ~~~~~~\textbf{For} $i=1\rightarrow n$:
\State ~~~~~~~~~~~~Uniformly sample $\{\tau_{t,1},\cdots,\tau_{t,n}\}$ as \text{~~~~~~~~~~~~~a} random permutation of $[n]$
\State ~~~~~~~~~~~~Calculate $g_{t,i}=\nabla f_{\tau_{t,i}} (\bw_{\tau_{t,i}})$
\State ~~~~~~~~~~~~Update $\bnu_{t,i}=\bnu_{t,i-1}+g_{t,i}^{\odot2}$
\State ~~~~~~~~~~~~Update $\bw_{t,i+1}= \bw_{t,i}-\eta \frac{1}{\sqrt{\bnu_{t,i}}}\odot  g_{k,i} $
\State ~~~~~~\textbf{EndFor}
\State ~~~~~~ Update $\bw_{t+1,1}= \bw_{t,n+1}$, $\bnu_{t+1,0}= \bnu_{t,n}$
\State \textbf{EndFor}
\end{algorithmic}
\end{algorithm}
\end{minipage}
\vspace{-5mm}
\end{wrapfigure}
On the other hand, note that the above example obeys a stronger assumption on the smoothness, i.e, for every fixed $x$, the stochastic gradient $g(\bw)$ is globally Lipschitz. It is natural to ask whether we can relax Assumption \ref{assum: coordinate_affine} by strengthening the assumption on the smoothness. In Section \ref{sec: motivation}, we explain that we use $\frac{  \Vert \nabla f(\bw_t) \Vert^2}{\sqrt{\bnu_{t-1}}}$ instead of $\frac{\Vert g_t\Vert  \Vert \nabla f(\bw_t) \Vert}{\sqrt{\bnu_{t-1}}}$ as the auxiliary function due to $g_t \not \approx g_{t-1}$. Therefore, tightening the assumption on the smoothness may help us to ensure $g_t \approx g_{t-1}$, and we no longer need to bound $g_t$ using Assumption \ref{assum: affine}, and as a result we will not encounter the mismatch between $\frac{\partial_i f(\bw_t)^2}{\sqrt{\bnu_{t-1,i}}}$ and $\frac{\Vert \nabla  f(\bw_t) \Vert^2}{\sqrt{\bnu_{t-1,i}}}$. Motivated by the above example, we make the assumption that we have the access to an stochastic oracle $g(\bw,\zeta)$, and $g_t$ is generated by $g_t = g(\bw_t,\zeta_t)$ where $\zeta_t$ is sampled independently from a distribution $\mathcal{D}$. We further assume that $g$ is $L$ Lipschitz with respect to $\bw$ for a fixed $\zeta$. Such an assumption is common in stochastic optimization literature \citep{yun2021minibatch,shi2021rmsprop}.
 Unfortunately, such assumptions are still not adequate to ensure $g_t\approx g_{t-1}$ since $g_t$ and $g_{t-1}$ may use different noise $\zeta$. A good news is that, for the without-replacement version of AdaGrad (also called randomly-reshuffled AdaGrad. Please refer to Algorithm \ref{alg: rr_adagrad}), every $\zeta$ appears once within one epoch and the above methodology can be used. Note that randomly-reshuffled AdaGrad is the version of AdaGrad commonly adopted in deep learning. Thus although we slightly change the analyzed algorithm, the problem we consider is still of significance.

As mentioned above, we require the following assumptions for the convergence of randomly-reshuffled AdaGrad.
\begin{assumption}[Assumption \ref{assum: affine}, reformulated]
\label{assum: rr_affine}
 Let $\bw_{k,i}$ and $g_{k,i}$ be the ones in Algorithm \ref{alg: rr_adagrad}. Then, there exist constants $D_0$ and $D_1$, such that, $\forall k,i $, $
     \mathbb{E}_{j\sim \textbf{Uniform} (n)} \Vert \nabla f_j(\bw_{k,i}) \Vert^2 \le D_0+D_1 \Vert \nabla f(\bw_{k,i}) \Vert^2$.
\end{assumption}

\begin{assumption}[Stochastic $L$-smooth condition]
\label{assum: stochatic_smooth}
We assume that $\forall i\in [n]$, $f_i$ is differentiable and its gradient
satisfies $\forall \bw_1,\bw_2 \in \mathbb{R}^d$, we have $
    \Vert \nabla f_i(\bw_1)-\nabla f_i(\bw_2) \Vert \le L \Vert \bw_1-\bw_2 \Vert$.
\end{assumption}

\begin{theorem}[Randomly-reshuffled AdaGrad]
\label{thm: rr_adagrad}
     Let Assumptions \ref{assum: rr_affine} and \ref{assum: stochatic_smooth} hold. Then, for randomly-reshuffled AdaGrad with any $ \eta >0$, $\min_{t\in [T]} \Vert \nabla f(\bw_{t,0}) \Vert^2=\mathcal{O}(\frac{1+\ln (1+\sqrt{D_0T})}{T})+\mathcal{O}(\frac{\sqrt{D_0}(1+\ln (1+\sqrt{D_0T}))}{\sqrt{T}})$.
\end{theorem}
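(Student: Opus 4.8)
The plan is to port the two–stage argument of Theorem \ref{thm: adagrad_norm} to the epoch-indexed process $\{(\bw_{t,i},\bnu_{t,i})\}$ of Algorithm \ref{alg: rr_adagrad}, using the without-replacement structure to replace the affine-variance conversion of $\Vert g\Vert$ (which in the i.i.d.\ case forced the coordinate-wise Assumption \ref{assum: coordinate_affine}) by the near-continuity $g_{t,i}\approx g_{\mathrm{prev}}$ promised by Assumption \ref{assum: stochatic_smooth}. Concretely, I would first write the descent lemma for $f$ — which is $L$-smooth as an average of $L$-smooth $f_i$ — not per inner step but accumulated over a full epoch, so that $\bw_{t+1,1}-\bw_{t,1}=-\eta\sum_{i=1}^n g_{t,i}/\sqrt{\bnu_{t,i}}$. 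As in Eq.~(\ref{eq: decomposition_approximation}) I would approximate each inner conditioner $\bnu_{t,i}$ by the epoch-start conditioner $\bnu_{t,0}$ (which is $\mathcal{F}$-measurable at the epoch boundary), producing a clean ``First Order Main'' term together with an ``Error'' term coming from the within-epoch growth of $\bnu$.

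The crucial step exploits that within epoch $t$ every component is used exactly once, so $\sum_{i=1}^n\nabla f_{\tau_{t,i}}(\bw_{t,1})=n\nabla f(\bw_{t,1})$. The iterates do not move much inside an epoch: since $\Vert g_{t,i}/\sqrt{\bnu_{t,i}}\Vert$ is bounded coordinate-wise by $1$, the drift $\Vert\bw_{t,i}-\bw_{t,1}\Vert$ is controlled by $\eta$ times the step count, and Assumption \ref{assum: stochatic_smooth} then bounds $\Vert g_{t,i}-\nabla f_{\tau_{t,i}}(\bw_{t,1})\Vert\le L\Vert\bw_{t,i}-\bw_{t,1}\Vert$. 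This lets me (i) align the epoch-accumulated numerator with $n\nabla f(\bw_{t,1})$ up to $\mathcal{O}(L\eta)$ remainders, and (ii) realise the approximate continuity of the stochastic gradients that Section \ref{sec: motivation} identified as the missing ingredient — so I can run the auxiliary-function argument with $\hat{\xi}(t)=\frac{\Vert\nabla f(\bw_{t,1})\Vert\,\Vert g\Vert}{\sqrt{\bnu}}$ rather than converting $\Vert g\Vert$ through the noise model, and hence need only the norm-form Assumption \ref{assum: rr_affine}.

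With the error term telescoped away via $\hat{\xi}$ (an epoch-level analogue of Lemma \ref{lem: auxiliary}), and the leftover second-order and drift contributions summed using the same elementary inequalities $\sum_t a_t/\sqrt{(\sum_{s\le t}a_s)^3}\le 2/\sqrt{a_0}$ and $\sum_t a_t/(\sum_{s\le t}a_s)\le\ln(\sum_s a_s)-\ln a_0$ used in Stage I, I expect to reach a bound of the form $\sum_{t=1}^T\mathbb{E}\frac{\Vert\nabla f(\bw_{t,1})\Vert^2}{\sqrt{\bnu_{t,0}}}\le C_1+C_2\mathbb{E}\ln\bnu_T$. Stage II is then the same divide-and-conquer as in Theorem \ref{thm: adagrad_norm}: split the epochs according to whether $\Vert\nabla f(\bw_{t,1})\Vert^2$ exceeds $D_0/D_1$, bound $\mathbb{E}\sqrt{\bnu_T}$ through Assumption \ref{assum: rr_affine} and Jensen, and invoke Hölder's inequality plus solving the resulting self-referential inequality to turn the weighted sum into $\mathbb{E}[\sqrt{\sum_t\Vert\nabla f(\bw_{t,1})\Vert^2}]^2$; a Markov step yields the stated rate, which collapses to $\mathcal{O}(1/T)$ when $D_0=0$.

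The main obstacle I anticipate is establishing the epoch-level auxiliary lemma cleanly, i.e.\ simultaneously (a) controlling the $\bnu$-drift error term across all $n$ inner steps of an epoch and (b) exploiting the reshuffling-induced continuity without the constants degrading in $n$ or $d$. Because consecutive inner steps use \emph{different} component functions at \emph{different} points, the bookkeeping that aligns $\hat{\xi}(t)$ across epoch boundaries — rather than across adjacent inner steps — is more delicate than in the single-gradient case, and keeping the accumulated drift remainders summable (so that they land in the constant $C_1$ rather than growing with $T$) is where most of the care will be needed.
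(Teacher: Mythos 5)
Your Stage I plan is essentially the paper's: the paper also applies the descent lemma once per epoch, writes $\bw_{k+1,1}-\bw_{k,1}$ as the main term $-\eta\frac{n}{\sqrt{\bnu_{k,1}}}\odot\nabla f(\bw_{k,1})$ (via the reshuffling identity $\sum_{i}\nabla f_{\tau_{k,i}}(\bw_{k,1})=n\nabla f(\bw_{k,1})$) plus a conditioner-drift error and a gradient-drift error, controls the within-epoch drift through Assumption \ref{assum: stochatic_smooth}, and telescopes a product-form auxiliary function exactly in the spirit you describe, arriving at $\sum_k n\eta\Vert\bnu_{k,1}^{-1/4}\odot\nabla f(\bw_{k,1})\Vert^2\le 4C_0+4C_1\sum_l\ln\bnu_{T,n,l}$. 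One concrete fix, though: Algorithm \ref{alg: rr_adagrad} is coordinate-wise AdaGrad, so your norm-form $\hat{\xi}(t)=\Vert\nabla f(\bw_{t,1})\Vert\,\Vert g\Vert/\sqrt{\bnu}$ cannot telescope against the per-coordinate differences $\frac{1}{\sqrt{\bnu_{k,1,l}}}-\frac{1}{\sqrt{\bnu_{k,i,l}}}$ appearing in the error term; the paper uses $\bar{\xi}(t)=\sum_{i=1}^n\sum_{l=1}^d\vert\partial_l f(\bw_{t,1})\vert\,\vert\partial_l f_{\tau_{t,i}}(\bw_{t,i})\vert/\sqrt{\bnu_{t,i,l}}$, with the cross-epoch alignment done through the inverse permutation $\tilde{\tau}_{k,i}$ so that the \emph{same component} $f_i$ is matched between epochs $k-1$ and $k$ even though it is visited at different inner positions. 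This is precisely the bookkeeping you flagged as the main obstacle, and the inverse-index trick is its resolution; the cross-epoch gradient differences are then absorbed by Assumption \ref{assum: stochatic_smooth} into $\sum_j\Vert\bw_{k,j+1}-\bw_{k,j}\Vert^2$ over two consecutive epochs, which sums to $\mathcal{O}(\eta^2\sum_l\ln\bnu_{T,n,l})$ — note these remainders land in the $C_2\ln\bnu_T$ slot, not in the constant $C_1$ as your closing sentence suggests (your displayed target bound is nonetheless the right one).

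Your Stage II, however, deviates from the paper in a way that matters. Theorem \ref{thm: rr_adagrad} carries no probability margin $\delta$, and the paper's proof is entirely pathwise: Assumption \ref{assum: rr_affine} is a deterministic statement ($\frac{1}{n}\sum_j\Vert\nabla f_j(\bw)\Vert^2\le D_0+D_1\Vert\nabla f(\bw)\Vert^2$), and since every component appears exactly once per epoch, the paper skips the indicator split at $D_0/D_1$ altogether and instead lower-bounds $\Vert\nabla f(\bw_{k,1})\Vert^2\ge\frac{1}{nD_1}\sum_i\Vert\nabla f_i(\bw_{k,1})\Vert^2-\frac{D_0}{D_1}$ uniformly over epochs, converts $\sum_{k,i}\Vert\nabla f_i(\bw_{k,\tilde{\tau}_{k,i}})\Vert^2$ into $\sum_l\bnu_{T,n,l}$ up to drift terms, and solves the resulting self-referential inequality to get $\sum_l\bnu_{T,n,l}=\mathcal{O}(1)+\mathcal{O}(D_0T)$ — with no expectation, no Jensen step $\ln\mathbb{E}\sqrt{\bnu_T}$, and no Markov inequality anywhere. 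Your proposed port of Theorem \ref{thm: adagrad_norm}'s divide-and-conquer (indicators, Jensen, H\"older, Markov) would at best yield a high-probability bound with a $1/\delta^2$ factor, i.e., a strictly weaker statement than the one claimed; moreover the probabilistic machinery is unnecessary here precisely because reshuffling makes the noise assumption deterministic at the epoch level. Replacing your Stage II with this pathwise accounting — which your own observation that $\sum_i g_{t,i}$ sweeps all components already furnishes — closes the gap.
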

The proof utilizes a randomly-reshuffling version of $\xi(t)$, i.e. $\bar{\xi}(t)=\sum_{i=1}^n\sum_{l=1}^d\frac{\vert\partial_l f(\bw_{t,1})\vert  \vert \partial_l f_{\tau_{t,i}}(\bw_{t,i})\vert}{\sqrt{\bnu_{t,i,l}}}$, and we defer the details to Appendix \ref{appen: adagrad_wor}. Theorem \ref{thm: rr_adagrad} shows that randomly-reshuffled AdaGrad does converge under affine noise variance assumption, and extends our analysis techniques to new setting of AdaGrad.

\section{Convergence of AdaGrad over non-uniformly smooth landscapes}
\label{sec: non-uniform}
So far, the characterizations of AdaGrad(-Norm) has closely matched those of SGD over the uniformly smooth landscape. However, in practice, the objective function is usually non-uniformly smooth. Simple examples include polynomial functions with degree larger than $2$, and deep neural networks. A natural question is that whether AdaGrad still works well over non-smooth landscapes. In this section, we analyze AdaGrad(-Norm) under the $(L_0,L_1)$-smooth condition \citet{zhang2019gradient}, which is considered as a preciser characterization of the landscape of neural networks through exhaustive experiments.

\begin{assumption}[$(L_0,L_1)$-smooth condition]
\label{assum: non-smooth}
We assume that $f: \mathbb{R}^d \rightarrow \mathbb{R}$ is differentiable. We further assume that there exists positive constants $L_0$ and $L_1$, such that if $\bw_1,\bw_2 \in \mathbb{R}^d$ satisfies $\Vert \bw_1-\bw_2 \Vert \le \frac{1}{L_0}$, then $\Vert  \nabla f(\bw_1) - \nabla f(\bw_2) \Vert \le (L_1 \Vert \nabla f(\bw_1)\Vert +L_0) \Vert \bw_1-\bw_2 \Vert$.
\end{assumption}

Assumption \ref{assum: non-smooth} degenerates to Assumption \ref{assum: smooth} with $L=L_0$ if $L_1=0$. Therefore, Assumption \ref{assum: non-smooth} is more general than Assumption \ref{assum: smooth}. Moreover, Assumption \ref{assum: non-smooth} holds for polynomials of any degree and even exponential functions. \citet{zhang2019gradient} demonstrate through extensive experiments that over the tasks where adaptive optimizers outperforms SGD, Assumption \ref{assum: non-smooth} is obeyed.

\begin{theorem}
\label{thm: adagrad_norm_nonsmooth}
    Let Assumptions \ref{assum: affine} and \ref{assum: non-smooth} hold. Then, for AdaGrad-Norm with $\forall \eta< \frac{1}{L_1}\min\{\frac{1}{64D_1},\frac{1}{8\sqrt{D_1}}\} $, we have that with probability at least $1-\delta$, \begin{equation*}
        \min_{t\in [T]} \Vert \nabla f(\bw_t) \Vert^2=\mathcal{O}\left(\frac{1+\ln (1+\sqrt{D_0T})}{T\delta^2}\right)+\mathcal{O}\left(\frac{\sqrt{D_0}(1+\ln (1+\sqrt{D_0T}))}{\sqrt{T}\delta^2}\right).
    \end{equation*}
\end{theorem}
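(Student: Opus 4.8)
The plan is to replicate the two–stage argument behind Theorem~\ref{thm: adagrad_norm}, substituting the uniform smoothness $L$ by the local smoothness $L_1\Vert\nabla f(\bw_t)\Vert+L_0$ wherever Assumption~\ref{assum: smooth} was invoked. The starting point is an $(L_0,L_1)$–version of the expected descent lemma. Since the AdaGrad-Norm displacement is uniformly bounded, $\Vert \bw_{t+1}-\bw_t\Vert=\eta\Vert g_t\Vert/\sqrt{\bnu_t}\le \eta$, the pair $(\bw_t,\bw_{t+1})$ stays inside the neighbourhood $\Vert\bw_1-\bw_2\Vert\le 1/L_0$ on which Assumption~\ref{assum: non-smooth} is valid (this only costs a mild $\eta\le 1/L_0$, compatible with the stated range). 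Hence
\[
\mathbb{E}^{|\mathcal{F}_t}f(\bw_{t+1})\le f(\bw_t)+\mathbb{E}^{|\mathcal{F}_t}\left\langle\nabla f(\bw_t),-\eta\tfrac{g_t}{\sqrt{\bnu_t}}\right\rangle+\tfrac{L_1\Vert\nabla f(\bw_t)\Vert+L_0}{2}\,\eta^2\,\mathbb{E}^{|\mathcal{F}_t}\left\Vert\tfrac{g_t}{\sqrt{\bnu_t}}\right\Vert^2,
\]
and the first-order term is decomposed into ``First Order Main'' $-\eta\Vert\nabla f(\bw_t)\Vert^2/\sqrt{\bnu_{t-1}}$ plus the same ``Error'' term as before.

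For Stage~I, I would re-run the proof of Lemma~\ref{lem: auxiliary} with the auxiliary function $\xi(t)=\Vert\nabla f(\bw_t)\Vert^2/\sqrt{\bnu_t}$. The Cauchy–Schwarz and Hölder steps are untouched; the only place that changes is the estimate $\Vert\nabla f(\bw_t)\Vert-\Vert\nabla f(\bw_{t-1})\Vert\le(L_1\Vert\nabla f(\bw_{t-1})\Vert+L_0)\Vert\bw_t-\bw_{t-1}\Vert$, which replaces the old $L\Vert\bw_t-\bw_{t-1}\Vert$. Propagating this through the decomposition of $\tfrac{\eta}{2}D_1\Vert\nabla f(\bw_t)\Vert^2(\tfrac{1}{\sqrt{\bnu_{t-1}}}-\tfrac{1}{\sqrt{\bnu_t}})$ still yields the telescoping piece $\tfrac{\eta}{2}D_1\mathbb{E}^{|\mathcal{F}_t}(\xi(t-1)-\xi(t))$, but the remainder now carries $L_1$: after substituting $\Vert\bw_t-\bw_{t-1}\Vert=\eta\Vert g_{t-1}\Vert/\sqrt{\bnu_{t-1}}$ and using $\Vert g_{t-1}\Vert^2\le\bnu_{t-1}$, the cross term contributes a multiple of $\eta^2 L_1 D_1\,\Vert\nabla f(\bw_{t-1})\Vert\Vert\nabla f(\bw_t)\Vert/\sqrt{\bnu_{t-1}}$ and the square term a multiple of $\eta^3 L_1^2 D_1\,\Vert\nabla f(\bw_{t-1})\Vert^2/\sqrt{\bnu_{t-1}}$. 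The $L_1$ half of the second-order term, $\tfrac{L_1\Vert\nabla f(\bw_t)\Vert}{2}\eta^2\mathbb{E}^{|\mathcal{F}_t}\Vert g_t/\sqrt{\bnu_t}\Vert^2$, is handled by Young's inequality together with Assumption~\ref{assum: affine}, splitting into a further $\eta^2 L_1(1+D_1)$–multiple of $\Vert\nabla f(\bw_t)\Vert^2/\sqrt{\bnu_{t-1}}$ plus a summable tail controlled by $\sum_t\Vert g_t\Vert^2/\sqrt{\bnu_t}\le 2\sqrt{\bnu_T}$ and an $L_1\eta^2 D_0$ term.

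The main obstacle — and the reason a learning-rate threshold appears — is that all these new pieces feed positive multiples of $\Vert\nabla f(\bw_t)\Vert^2/\sqrt{\bnu_{t-1}}$ back against the ``First Order Main'' term. After AM–GM and re-indexing $\Vert\nabla f(\bw_{t-1})\Vert^2/\sqrt{\bnu_{t-1}}\le\Vert\nabla f(\bw_{t-1})\Vert^2/\sqrt{\bnu_{t-2}}$, the net coefficient of $\sum_t\Vert\nabla f(\bw_t)\Vert^2/\sqrt{\bnu_{t-1}}$ takes the form $\eta(\tfrac14-c_1\eta L_1 D_1-c_2\eta^2 L_1^2 D_1)$ for absolute constants $c_1,c_2$. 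Requiring this to stay bounded below by, say, $\tfrac{\eta}{8}$ forces both $\eta L_1 D_1\lesssim 1$ and $\eta^2 L_1^2 D_1\lesssim 1$, i.e. $\eta L_1\le\tfrac{1}{64 D_1}$ and $\eta L_1\le\tfrac{1}{8\sqrt{D_1}}$, which is exactly the stated bound $\eta<\tfrac{1}{L_1}\min\{\tfrac{1}{64 D_1},\tfrac{1}{8\sqrt{D_1}}\}$. Tracking the constants through these two competing contributions to land on $64$ and $8$ is the delicate bookkeeping of the proof; unlike the uniformly smooth case, too large an $\eta$ flips the sign and destroys descent. Once the coefficient is secured, telescoping $\xi$, the bound $\sum_t\Vert g_t\Vert^2/\sqrt{\bnu_t}\le 2\sqrt{\bnu_T}$, and $\sum_t\Vert g_t\Vert^2/\bnu_t\le\ln\bnu_T-\ln\bnu_0$ reduce everything to $\sum_{t=1}^T\mathbb{E}\,\Vert\nabla f(\bw_t)\Vert^2/\sqrt{\bnu_{t-1}}\le C_1'+C_2'\mathbb{E}\ln\bnu_T$ with new constants $C_1',C_2'$ depending on $L_0,L_1,\eta,D_0,D_1$. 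Stage~II is then verbatim that of Theorem~\ref{thm: adagrad_norm}: the divide-and-conquer split on $\{\Vert\nabla f(\bw_t)\Vert^2\gtrless D_0/D_1\}$ bounds $\mathbb{E}\sqrt{\bnu_T}=\mathcal{O}(\sqrt{D_0T})$ up to logarithms, Hölder converts the accumulated bound into a control of $\mathbb{E}[\sqrt{\sum_t\Vert\nabla f(\bw_t)\Vert^2}]^2$, and Markov's inequality produces the claimed high-probability rate.
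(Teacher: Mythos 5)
Your overall architecture is faithful to the paper's proof: the same $(L_0,L_1)$ descent lemma applied after checking $\Vert\bw_{t+1}-\bw_t\Vert=\eta\Vert g_t\Vert/\sqrt{\bnu_t}\le\eta$ (note the paper only needs $\eta\le 1/L_1$ here, which the stated threshold already supplies; your extra hypothesis $\eta\le 1/L_0$ is not part of the theorem), the same auxiliary function $\xi(t)=\Vert\nabla f(\bw_t)\Vert^2/\sqrt{\bnu_t}$ with the $(L_0,L_1)$ gradient-difference bound fed into the telescoping decomposition, the same mechanism for the threshold (the paper bounds $\Vert\bw_t-\bw_{t-1}\Vert\le\eta$ so the $L_1$ cross term $4\eta D_1 L_1\Vert\bw_t-\bw_{t-1}\Vert\,\Vert\nabla f(\bw_t)\Vert^2/\sqrt{\bnu_{t-1}}\le\frac{\eta}{16}\Vert\nabla f(\bw_t)\Vert^2/\sqrt{\bnu_{t-1}}$ yields the constant $64$, and the $L_1^2$ square term yields $8$ via $\eta^2\le\frac{1}{64 D_1 L_1^2}$), and a verbatim Stage II. However, there is one genuine gap: your treatment of the ``Second Order II'' term $\frac{L_1\Vert\nabla f(\bw_t)\Vert}{2}\eta^2\,\mathbb{E}^{|\mathcal{F}_t}\Vert g_t/\sqrt{\bnu_t}\Vert^2$. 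A residual controlled only by $\sum_t\Vert g_t\Vert^2/\sqrt{\bnu_t}\le 2\sqrt{\bnu_T}$ is \emph{not} a benign summable tail: $\mathbb{E}\sqrt{\bnu_T}=\Theta(\sqrt{D_0T})$, so Stage I would conclude $\sum_t\mathbb{E}\,\Vert\nabla f(\bw_t)\Vert^2/\sqrt{\bnu_{t-1}}\le C_1'+C_2'\,\mathbb{E}\ln\bnu_T+c\,\mathbb{E}\sqrt{\bnu_T}$ with $c\propto L_1\eta$. After the Stage II division by $T$ and multiplication by $\mathbb{E}\sqrt{\bnu_T}$, the term $c\,\mathbb{E}\sqrt{\bnu_T}$ leaves an additive floor of order $L_1\eta D_0/\delta^2$ that does not decay in $T$, so the claimed rate fails whenever $D_0>0$. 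The entire point of Stage I in this paper is that only logarithmic or constant sums (of the types $\sum_t\Vert g_t\Vert^2/\bnu_t\le\ln\bnu_T-\ln\bnu_0$ and $\sum_t\Vert g_{t-1}\Vert^2/\sqrt{\bnu_{t-1}}^3\le 2/\sqrt{\bnu_0}$) may survive on the right-hand side; a $\sqrt{\bnu_T}$ term must never appear.

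The paper closes exactly this step with the key observation advertised in Section~\ref{sec: non-uniform}: since $\bnu_t$ is non-decreasing, $\bnu_t\ge\frac{1}{2}\sqrt{\bnu_{t-1}}(\sqrt{\bnu_t}+\sqrt{\bnu_{t-1}})$, hence the ``Second Order II'' term is bounded by $L_1\eta^2\frac{\Vert\nabla f(\bw_t)\Vert}{\sqrt{\bnu_{t-1}}}\mathbb{E}^{|\mathcal{F}_t}\bigl[\Vert g_t\Vert^2/(\sqrt{\bnu_t}+\sqrt{\bnu_{t-1}})\bigr]$ --- structurally identical to the bound on the ``Error'' term. The two are therefore merged (coefficient $\eta+L_1\eta^2\le 2\eta$ using $\eta L_1\le 1$) and pushed jointly through the mean-value/H\"older/affine chain, so the dangerous $D_1\Vert\nabla f(\bw_t)\Vert^2$ part again lands on the telescoping difference $\xi(t-1)-\xi(t)$ and only the absorbable fractions of the ``First Order Main'' term (net coefficient $-\eta/8$) plus log/constant-summable remainders survive. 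If you prefer to keep your Young-inequality route, it can be repaired: after H\"older, use $\mathbb{E}^{|\mathcal{F}_t}[\Vert g_t\Vert^2/\bnu_t]\le 1$ so that the $D_1$ residual becomes a $D_1$-multiple of $\Vert\nabla f(\bw_t)\Vert^2/\sqrt{\bnu_{t-1}}$ absorbable under the stated threshold, and bound the $D_0$ residual by the logarithmic sum $\sum_t\Vert g_t\Vert^2/\bnu_t$; with that substitution your argument matches the paper's in both structure and conclusion.
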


The proof can be found in Appendix \ref{appen: adagrad_ns}, the key insight of which is that the additional error terms caused by $(L_0,L_1)$-smooth condition is at the same order of the ``Error" term in the proof of Theorem \ref{thm: adagrad_norm}. Theorem \ref{thm: adagrad_norm_nonsmooth} shows that AdaGrad-Norm can provably overcome the non-uniform smoothness in the objective function and converges. Similar result can be derived for AdaGrad.  Compared to Theorem \ref{thm: adagrad_norm}, we additionally require the learning rate to be lower than a threshold. To see such a requirement is not artificial due to  the proof, we provide the following theorem.

\begin{theorem}
\label{thm: counter}
    For every learning rate $\eta> \frac{9\sqrt{5}}{2L_1}$, there exist a lower-bounded objective function $f$ obeying Assumption \ref{assum: non-smooth}  and a corresponding initialization point $\bw_0$, such that AdaGrad with learning rate $\eta$ and initialized at $\bw_0$ diverges over $f$.
\end{theorem}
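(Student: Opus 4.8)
The plan is to construct an explicit counterexample in one dimension where the $(L_0,L_1)$-smooth condition permits the local smoothness to grow exponentially with the gradient norm, so that a single AdaGrad step with a large learning rate can overshoot catastrophically and set up a self-reinforcing divergence. A natural candidate is an even function whose gradient grows like $\exp(L_1 |w|)$ in magnitude, for instance $f(w)$ built from $\tfrac{1}{L_1^2}\exp(L_1 w)$ on the right branch (and its mirror on the left), which is lower-bounded after adding a suitable constant and satisfies Assumption~\ref{assum: non-smooth} because $|f''(w)| = L_1 |f'(w)|$ exactly matches the $(L_0,L_1)$ form with $L_0$ absorbing the low-gradient regime. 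Since Theorem~\ref{thm: counter} concerns deterministic AdaGrad (the gradient oracle may be taken noiseless, $g_t=\nabla f(\bw_t)$), the update reduces to $w_{t+1}=w_t-\eta\,\mathrm{sign}(f'(w_t))\cdot |f'(w_t)|/\sqrt{\bnu_t}$, which I will analyze directly.

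First I would set up the one-step dynamics. Starting from a point $w_0>0$ that is already well into the exponential regime, the conditioner at the first relevant step is dominated by $|f'(w_0)|^2$, so $\sqrt{\bnu_t}\approx |f'(w_t)|$ up to the accumulated history, and the step size in $w$-space is therefore roughly $\eta$ times a bounded factor rather than shrinking. The crucial quantitative claim is that when $\eta > \tfrac{9\sqrt5}{2L_1}$, the iterate does not merely move but jumps to the opposite branch at a point with strictly larger magnitude: $|w_{t+1}| > |w_t| + c$ for some constant gap $c>0$, so that $|f'(w_{t+1})| > |f'(w_t)|$ grows geometrically. I would make this precise by lower-bounding the displacement $|w_{t+1}-w_t|=\eta |f'(w_t)|/\sqrt{\bnu_t}$ and showing that because the gradient is exponential, even a displacement that overshoots the symmetric point $-w_t$ lands at a point whose gradient magnitude exceeds the previous one; the threshold $\tfrac{9\sqrt5}{2L_1}$ is exactly what guarantees the overshoot beats the exponential decay of $1/\sqrt{\bnu_t}$ coming from the growing denominator.

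The main obstacle is controlling the interaction between the numerator and the accumulating denominator $\bnu_t=\bnu_{t-1}+|f'(w_t)|^2$. Because $\bnu_t$ sums the squared gradients and the latest squared gradient dominates the sum when growth is geometric, one has $\sqrt{\bnu_t}=|f'(w_t)|\sqrt{1+r}$ where $r=\bnu_{t-1}/|f'(w_t)|^2\to 0$ along the divergent trajectory; the constant $\sqrt5$ in the threshold presumably tracks the worst-case value of this $\sqrt{1+r}$ factor over the first few steps before geometric dominance kicks in. The delicate part is therefore the base case: I must choose $w_0$ and $\bnu_0$ so that the very first steps already satisfy the geometric-growth inequality despite $\bnu_{t-1}$ not yet being negligible, and then run an induction. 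I would prove by induction that $|f'(w_{t+1})| \ge \rho\,|f'(w_t)|$ for some $\rho>1$, from which $|f'(w_t)|\to\infty$ and hence $f(w_t)\to\infty$, establishing divergence. The cleanest way to close the induction is to track the ratio $|f'(w_{t+1})|/|f'(w_t)| = \exp(L_1(|w_{t+1}|-|w_t|))$ and show the exponent stays bounded below by a positive constant, reducing everything to a lower bound on the increments $|w_{t+1}|-|w_t|$ that follows from the overshoot estimate together with $\sqrt{\bnu_t}\le \sqrt5\,|f'(w_t)|$ in the relevant regime.
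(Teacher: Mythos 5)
Your construction has a fatal, quantitative flaw: AdaGrad's per-step displacement is uniformly bounded by $\eta$, and this kills the one-dimensional oscillation mechanism you rely on. Since $\bnu_t=\bnu_{t-1}+|f'(w_t)|^2\ge |f'(w_t)|^2$, every step satisfies $|w_{t+1}-w_t|=\eta\,|f'(w_t)|/\sqrt{\bnu_t}\le \eta$ --- you note this yourself when you observe the step is ``roughly $\eta$ times a bounded factor.'' But your key inductive claim is that the iterate jumps across the origin to the opposite branch with $|w_{t+1}|>|w_t|+c$; since $w_{t+1}$ and $w_t$ then have opposite signs, the required displacement is $|w_{t+1}-w_t|=|w_t|+|w_{t+1}|>2|w_t|+c$. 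Meanwhile the geometric growth $|f'(w_{t+1})|\ge\rho\,|f'(w_t)|$ with $|f'(w)|=\frac{1}{L_1}e^{L_1|w|}$ forces $|w_t|$ to increase by at least $\ln(\rho)/L_1$ per step, so $|w_t|\to\infty$, and within finitely many steps $2|w_t|+c>\eta$, making the claimed crossing impossible. The mechanism self-destructs: bounded amplitude implies bounded gradients, hence bounded $\bnu_t$ growth and shrinking effective steps, and no divergence. No choice of $w_0$, $\bnu_0$, or $\eta>\frac{9\sqrt{5}}{2L_1}$ repairs this, because the $|w_{t+1}-w_t|\le\eta$ bound is structural to AdaGrad.

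This is precisely why the paper's proof leaves dimension one. It builds a two-dimensional ``spiral staircase'': a chain of axis-aligned segments $\mathcal{C}_k$ of lengths $a_k=\eta 2^k/\sqrt{\sum_{s\le k}4^s}$ (which is exactly the AdaGrad-Norm displacement when $\|\nabla f(\bw_k)\|=2^k$), along which the gradient rotates by $90$ degrees over an arc of length $8/L_1$ and doubles in norm at each segment. The condition $\eta>\frac{9\sqrt{5}}{2L_1}$ enters only to guarantee $a_k\ge a_1=2\eta/\sqrt{5}>9/L_1$, so each segment is long enough to accommodate the gradient turn while respecting the $(0,L_1)$-smooth condition. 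The crucial structural difference from your plan: the gradient \emph{direction} rotates, so the iterate advances along a path with steps of constant order $\Theta(\eta)$ while the gradient norm grows exponentially in arc length --- the trajectory never has to retrace through, or overshoot across, a region of small gradient, which is exactly the bottleneck that bounded steps forbid in one dimension. Divergence then follows because $\|\nabla f(\bw_k)\|=2^k\to\infty$ along the iterates. If you want to keep your exponential-gradient intuition, it survives in this form: $(L_0,L_1)$-smoothness allows $\|\nabla f\|\sim e^{L_1 s}$ in arc length $s$, but exploiting it against AdaGrad requires a rotating gradient field, not a one-dimensional valley.
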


The proof can be found in Appendix \ref{appen: counter}. Theorem \ref{thm: counter} shows that the learning rate requirement in Theorem \ref{thm: adagrad_norm_nonsmooth} is tight w.r.t. $L_1$ up to constants. The tuning-free ability of AdaGrad under $L$-smooth condition, i.e, AdaGrad converges under any learning rate, has been considered to be superiority of AdaGrad over SGD. However, combining Theorem \ref{thm: adagrad_norm_nonsmooth} and Theorem \ref{thm: counter}, we demonstrate that such a property is lost when under a more realistic assumption on the smoothness. On the other hand, \citet{zhang2019gradient} show that SGD converges arbitrarily slowly under $(L_0,L_1)$-smooth condition. Together with Theorem \ref{thm: adagrad_norm_nonsmooth}, we find another superiority of AdaGrad: it can provably overcome non-uniform smoothness but SGD can not.

\section{Related works}\
\label{sec: related}
\textbf{Convergence of AdaGrad over non-convex landscapes.} \citet{duchi2011adaptive} and \citet{mcmahan2010adaptive} simultaneously propose AdaGrad. Since then, there is a line of works analyzing the convergence of AdaGrad over non-convex landscapes \citep{ward2020adagrad,li2019convergence,zou2018weighted,li2020high,defossez2020simple,gadat2020asymptotic,kavis2022high,faw2022power,yang2022nest,yang2023two}. We summarize their assumptions and conclusions in Table \ref{tab: related works}.
\\
\textbf{Non-uniform smoothness.} The convergence analysis of optimizers under non-uniform smoothness is initialized by \citep{zhang2019gradient}, who propose the $(L_0,L_1)$-smooth condition and verify its validity in deep learning. They further prove that Clipped SGD converges under such a condition. Since then, their analysis has been extended to other clipped optimizers \citep{zhang2020improved,yang2022normalized,crawshaw2022robustness}. However, there is no such a result for AdaGrad.

\section{Conclusion}
In this paper, we analyze AdaGrad over non-convex landscapes. Specifically, we propose a novel auxiliary function to bound the error term that is brought by the update of AdaGrad. Based on this auxiliary function, we are able to significantly simplify the proof of AdaGrad-Norm and establish a tighter convergence rate in the over-parameterized regime. We further extend the analysis to AdaGrad and non-uniformly smooth landscapes through different variants of the auxiliary function. One future direction is to explore and compare the convergences of AdaGrad and Adam under the $(L_0, L_1)$-smooth condition, given the fact that convergences of AdaGard and SGD are clearly separated under this condition.

\acks{This work is founded by CAS Project for Young Scientists in Basic Research under Grant No. YSBR-034, Innovation Project of ICT CAS under Grants No. E261090. The authors would also like to thank Mr. Matthew Faw for the helpful discussions.}


\bibliography{Related}

\begin{thebibliography}{23}
\providecommand{\natexlab}[1]{#1}
\providecommand{\url}[1]{\texttt{#1}}
\expandafter\ifx\csname urlstyle\endcsname\relax
  \providecommand{\doi}[1]{doi: #1}\else
  \providecommand{\doi}{doi: \begingroup \urlstyle{rm}\Url}\fi

\bibitem[Crawshaw et~al.(2022)Crawshaw, Liu, Orabona, Zhang, and
  Zhuang]{crawshaw2022robustness}
Michael Crawshaw, Mingrui Liu, Francesco Orabona, Wei Zhang, and Zhenxun
  Zhuang.
\newblock Robustness to unbounded smoothness of generalized sign{SGD}.
\newblock \emph{arXiv preprint arXiv:2208.11195}, 2022.

\bibitem[D{\'e}fossez et~al.(2020)D{\'e}fossez, Bottou, Bach, and
  Usunier]{defossez2020simple}
Alexandre D{\'e}fossez, L{\'e}on Bottou, Francis Bach, and Nicolas Usunier.
\newblock A simple convergence proof of {Adam} and {AdaGrad}.
\newblock \emph{arXiv preprint arXiv:2003.02395}, 2020.

\bibitem[Dosovitskiy et~al.(2020)Dosovitskiy, Beyer, Kolesnikov, Weissenborn,
  Zhai, Unterthiner, Dehghani, Minderer, Heigold, Gelly,
  et~al.]{dosovitskiy2020image}
Alexey Dosovitskiy, Lucas Beyer, Alexander Kolesnikov, Dirk Weissenborn,
  Xiaohua Zhai, Thomas Unterthiner, Mostafa Dehghani, Matthias Minderer, Georg
  Heigold, Sylvain Gelly, et~al.
\newblock An image is worth 16x16 words: Transformers for image recognition at
  scale.
\newblock \emph{arXiv preprint arXiv:2010.11929}, 2020.

\bibitem[Duchi et~al.(2011)Duchi, Hazan, and Singer]{duchi2011adaptive}
John Duchi, Elad Hazan, and Yoram Singer.
\newblock Adaptive subgradient methods for online learning and stochastic
  optimization.
\newblock \emph{Journal of machine learning research}, 12\penalty0 (7), 2011.

\bibitem[Faw et~al.(2022)Faw, Tziotis, Caramanis, Mokhtari, Shakkottai, and
  Ward]{faw2022power}
Matthew Faw, Isidoros Tziotis, Constantine Caramanis, Aryan Mokhtari, Sanjay
  Shakkottai, and Rachel Ward.
\newblock The power of adaptivity in {SGD}: Self-tuning step sizes with
  unbounded gradients and affine variance.
\newblock \emph{arXiv preprint arXiv:2202.05791}, 2022.

\bibitem[Faw et~al.(2023)Faw, Rout, Caramanis, and Shakkottai]{faw2023beyond}
Matthew Faw, Litu Rout, Constantine Caramanis, and Sanjay Shakkottai.
\newblock Beyond uniform smoothness: A stopped analysis of adaptive {SGD}.
\newblock \emph{arXiv preprint arXiv:2302.06570}, 2023.

\bibitem[Gadat and Gavra(2020)]{gadat2020asymptotic}
S{\'e}bastien Gadat and Ioana Gavra.
\newblock Asymptotic study of stochastic adaptive algorithm in non-convex
  landscape.
\newblock 2020.

\bibitem[Kavis et~al.(2022)Kavis, Levy, and Cevher]{kavis2022high}
Ali Kavis, Kfir~Yehuda Levy, and Volkan Cevher.
\newblock High probability bounds for a class of nonconvex algorithms with
  {AdaGrad} stepsize.
\newblock \emph{arXiv preprint arXiv:2204.02833}, 2022.

\bibitem[Li and Orabona(2019)]{li2019convergence}
Xiaoyu Li and Francesco Orabona.
\newblock On the convergence of stochastic gradient descent with adaptive
  stepsizes.
\newblock In \emph{The 22nd international conference on artificial intelligence
  and statistics}, pages 983--992. PMLR, 2019.

\bibitem[Li and Orabona(2020)]{li2020high}
Xiaoyu Li and Francesco Orabona.
\newblock A high probability analysis of adaptive {SGD} with momentum.
\newblock \emph{arXiv preprint arXiv:2007.14294}, 2020.

\bibitem[McMahan and Streeter(2010)]{mcmahan2010adaptive}
H~Brendan McMahan and Matthew Streeter.
\newblock Adaptive bound optimization for online convex optimization.
\newblock \emph{arXiv preprint arXiv:1002.4908}, 2010.

\bibitem[Shi and Li(2021)]{shi2021rmsprop}
Naichen Shi and Dawei Li.
\newblock Rmsprop converges with proper hyperparameter.
\newblock In \emph{International conference on learning representation}, 2021.

\bibitem[Vaswani et~al.(2017)Vaswani, Shazeer, Parmar, Uszkoreit, Jones, Gomez,
  Kaiser, and Polosukhin]{vaswani2017attention}
Ashish Vaswani, Noam Shazeer, Niki Parmar, Jakob Uszkoreit, Llion Jones,
  Aidan~N Gomez, {\L}ukasz Kaiser, and Illia Polosukhin.
\newblock Attention is all you need.
\newblock \emph{Advances in neural information processing systems}, 30, 2017.

\bibitem[Vaswani et~al.(2019)Vaswani, Bach, and Schmidt]{vaswani2019fast}
Sharan Vaswani, Francis Bach, and Mark Schmidt.
\newblock Fast and faster convergence of {SGD} for over-parameterized models
  and an accelerated perceptron.
\newblock In \emph{The 22nd international conference on artificial intelligence
  and statistics}, pages 1195--1204. PMLR, 2019.

\bibitem[Ward et~al.(2020)Ward, Wu, and Bottou]{ward2020adagrad}
Rachel Ward, Xiaoxia Wu, and Leon Bottou.
\newblock {AdaGrad} stepsizes: Sharp convergence over nonconvex landscapes.
\newblock \emph{The Journal of Machine Learning Research}, 21\penalty0
  (1):\penalty0 9047--9076, 2020.

\bibitem[Yang et~al.(2022{\natexlab{a}})Yang, Li, and He]{yang2022nest}
Junchi Yang, Xiang Li, and Niao He.
\newblock Nest your adaptive algorithm for parameter-agnostic nonconvex minimax
  optimization.
\newblock \emph{Advances in Neural Information Processing Systems},
  35:\penalty0 11202--11216, 2022{\natexlab{a}}.

\bibitem[Yang et~al.(2023)Yang, Li, Fatkhullin, and He]{yang2023two}
Junchi Yang, Xiang Li, Ilyas Fatkhullin, and Niao He.
\newblock Two sides of one coin: the limits of untuned sgd and the power of
  adaptive methods.
\newblock \emph{arXiv preprint arXiv:2305.12475}, 2023.

\bibitem[Yang et~al.(2022{\natexlab{b}})Yang, Zhang, Chen, and
  Liu]{yang2022normalized}
Xiaodong Yang, Huishuai Zhang, Wei Chen, and Tie-Yan Liu.
\newblock Normalized/clipped {SGD} with perturbation for differentially private
  non-convex optimization.
\newblock \emph{arXiv preprint arXiv:2206.13033}, 2022{\natexlab{b}}.

\bibitem[Yun et~al.(2021)Yun, Rajput, and Sra]{yun2021minibatch}
Chulhee Yun, Shashank Rajput, and Suvrit Sra.
\newblock Minibatch vs local {SGD} with shuffling: Tight convergence bounds and
  beyond.
\newblock \emph{arXiv preprint arXiv:2110.10342}, 2021.

\bibitem[Yun et~al.(2019)Yun, Jeong, Kim, Kang, and Kim]{yun2019graph}
Seongjun Yun, Minbyul Jeong, Raehyun Kim, Jaewoo Kang, and Hyunwoo~J Kim.
\newblock Graph transformer networks.
\newblock \emph{Advances in neural information processing systems}, 32, 2019.

\bibitem[Zhang et~al.(2020)Zhang, Jin, Fang, and Wang]{zhang2020improved}
Bohang Zhang, Jikai Jin, Cong Fang, and Liwei Wang.
\newblock Improved analysis of clipping algorithms for non-convex optimization.
\newblock \emph{Advances in Neural Information Processing Systems},
  33:\penalty0 15511--15521, 2020.

\bibitem[Zhang et~al.(2019)Zhang, He, Sra, and Jadbabaie]{zhang2019gradient}
Jingzhao Zhang, Tianxing He, Suvrit Sra, and Ali Jadbabaie.
\newblock Why gradient clipping accelerates training: A theoretical
  justification for adaptivity.
\newblock In \emph{International Conference on Learning Representations}, 2019.

\bibitem[Zou et~al.(2018)Zou, Shen, Jie, Sun, and Liu]{zou2018weighted}
Fangyu Zou, Li~Shen, Zequn Jie, Ju~Sun, and Wei Liu.
\newblock Weighted {AdaGrad} with unified momentum.
\newblock \emph{arXiv preprint arXiv:1808.03408}, 2018.

\end{thebibliography}

\newpage

\appendix

\section{Table listing existing works on convergence of AdaGrad over non-convex landscapes}
\begin{table*}[htb!]
\resizebox{\textwidth}{!}{
\begin{tabular}{c|c|c|c|c}
 & Allow unbounded gradient&  Not delayed$^{(a)}$  & Tuning-free learning rate  &  Additional comments  \\ \hline
\citet{ward2020adagrad} & $\times$& $\checkmark$& $\checkmark$ & AdaGrad-Norm
\\
\citet{li2019convergence} & $\checkmark$& $\times^{(b)}$&$\times$& AdaGrad-Norm, sub-gaussian noise 
\\
\citet{zou2018weighted} & $\times$& $\checkmark$ & $\checkmark$ & AdaGrad
\\
\citet{li2019convergence} & $\checkmark$& $\times$&$\times$& AdaGrad, sub-gaussian noise 
\\
\citet{defossez2020simple}& $\times$ & $\checkmark$& $\checkmark$ & AdaGrad
\\
\citet{gadat2020asymptotic} & $\times$ & $\checkmark$ & $ \times$ & AdaGrad, asymptotic
\\
\citet{kavis2022high} & $\times$ & $\checkmark$ & $ \checkmark$ & AdaGrad-Norm$^{(c)}$
\\
\citet{yang2022nest,yang2023two}$^{d}$ & $\checkmark$& $\checkmark$ & $\checkmark$ &AdaGrad-Norm 
\\
\citet{faw2022power} & $\checkmark$& $\checkmark$ & $\checkmark$ &AdaGrad-Norm 
\end{tabular}
}
\small
\caption{ Summary of existing convergence result of AdaGrad. \textbf{None of these works show that AdaGrad converges at rate $\frac{1}{T} \Vert \nabla f(\bw_t) \Vert^2=\mathcal{O}(1/T)$ when over-parameterized.} 
We provide some explanation on the upper footmarks: $(a)$: "delayed" refers to Delayed AdaGrad, where $\bnu_t$ does not contain the information of $g_t$;  $(b)$: \citet{li2019convergence} also change the degree of $\bnu_t$ in the adaptive learning rate from $-\frac{1}{2}$ to $-(\frac{1}{2}+\varepsilon)$; $(c)$: the bound in \citep{kavis2022high} has logarithmic dependence on the probability margin; $(d)$: these two works assume bounded noise variance, which is stronger than affine noise variance.} 
\normalsize
\label{tab: related works}
\end{table*}

\section{Preparations}
\label{appen: preparations}
This section collects technical lemmas that will be used latter.

\begin{lemma}
\label{lem: summation_of_series}
    Let $\{a_i\}_{i=0}^{\infty}$ be a series of non-negative real numbers with $a_0>0$. Then, the following inequalities hold:
    \begin{gather*}
        \sum_{t=1}^T \frac{a_t}{\sqrt{(\sum_{s=0}^t a_s)^3}} \le 2\frac{1}{\sqrt a_0},~~
        \sum_{t=1}^T \frac{a_t}{\sum_{s=0}^t a_s} \le  \ln \sum_{t=0}^T a_t -\ln a_0,
        \\
        \sum_{t=1}^T \frac{a_t}{\sqrt{\sum_{s=0}^t a_s}(\sqrt{\sum_{s=0}^{t-1} a_s}+\sqrt{\sum_{s=0}^t a_s})^2} \le \frac{1}{\sqrt a_0}.
    \end{gather*}
\end{lemma}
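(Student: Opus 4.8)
The plan is to reduce all three bounds to telescoping sums by passing to the partial sums $b_t := \sum_{s=0}^t a_s$, which form a non-decreasing sequence with $b_0 = a_0 > 0$ and increments $a_t = b_t - b_{t-1}$. In particular $b_{t-1} > 0$ for every $t \ge 1$, so no denominator vanishes. The unifying idea is that each summand on the left-hand side can be dominated by the increment of a suitable monotone potential function of $b_t$, after which summing over $t$ collapses the bound to a single boundary term at $b_0 = a_0$.

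For the first inequality I would establish the pointwise estimate $\frac{a_t}{b_t^{3/2}} \le 2\bigl(b_{t-1}^{-1/2} - b_t^{-1/2}\bigr)$. Writing $b_{t-1}^{-1/2} - b_t^{-1/2} = \frac{a_t}{\sqrt{b_{t-1}b_t}\,(\sqrt{b_{t-1}}+\sqrt{b_t})}$ and using $b_{t-1}\le b_t$ (so that $\sqrt{b_{t-1}b_t}\le b_t$ and $\sqrt{b_{t-1}}+\sqrt{b_t}\le 2\sqrt{b_t}$) yields the claim, and summing telescopes to $2 b_0^{-1/2} - 2 b_T^{-1/2} \le 2/\sqrt{a_0}$. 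Equivalently one can invoke the integral comparison $\frac{a_t}{b_t^{3/2}} \le \int_{b_{t-1}}^{b_t} x^{-3/2}\,dx$, valid because $x^{-3/2}$ is decreasing, and sum to $\int_{a_0}^{b_T} x^{-3/2}\,dx$. The second inequality is handled identically with the potential $\ln x$: since $1/x$ is decreasing, $\frac{a_t}{b_t} = \frac{b_t-b_{t-1}}{b_t} \le \int_{b_{t-1}}^{b_t} \frac{dx}{x} = \ln b_t - \ln b_{t-1}$, and telescoping gives $\ln b_T - \ln b_0 = \ln\sum_{t=0}^T a_t - \ln a_0$.

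The third inequality is the only one that does not fall immediately to the integral comparison, and I expect its small algebraic reshaping to be the main (though still modest) obstacle. The key is to factor the increment as $a_t = (\sqrt{b_t}-\sqrt{b_{t-1}})(\sqrt{b_t}+\sqrt{b_{t-1}})$, which cancels one factor of $(\sqrt{b_{t-1}}+\sqrt{b_t})$ from the squared denominator:
\[
\frac{a_t}{\sqrt{b_t}\,(\sqrt{b_{t-1}}+\sqrt{b_t})^2} = \frac{\sqrt{b_t}-\sqrt{b_{t-1}}}{\sqrt{b_t}\,(\sqrt{b_{t-1}}+\sqrt{b_t})} \le \frac{\sqrt{b_t}-\sqrt{b_{t-1}}}{\sqrt{b_t}\,\sqrt{b_{t-1}}} = \frac{1}{\sqrt{b_{t-1}}} - \frac{1}{\sqrt{b_t}},
\]
where the inequality uses $\sqrt{b_{t-1}}+\sqrt{b_t}\ge \sqrt{b_{t-1}}$. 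Summing telescopes to $b_0^{-1/2} - b_T^{-1/2} \le 1/\sqrt{a_0}$. I note that this exact identity already appears inside the proof of Lemma~\ref{lem: auxiliary} (with $a_t$ playing the role of $\Vert g_t\Vert^2$ and $b_t$ that of $\bnu_t$), so the third bound is essentially a clean restatement of an observation used earlier. Overall none of the three steps is deep; the only care required is the factorization in the third and verifying the direction of each monotonicity estimate so that the telescoping boundary term is the desired $a_0$ term.
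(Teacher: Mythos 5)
Your proof is correct and matches the paper's argument in all essentials: the paper proves the first two inequalities by exactly the integral comparisons you cite, and its proof of the third is the same estimate you give, merely written by first bounding one factor $(\sqrt{b_{t-1}}+\sqrt{b_t})\ge\sqrt{b_{t-1}}$ and then using the factorization $a_t=(\sqrt{b_t}-\sqrt{b_{t-1}})(\sqrt{b_t}+\sqrt{b_{t-1}})$ to telescope, rather than factoring first as you do. Your alternative direct telescoping bound $\frac{a_t}{b_t^{3/2}}\le 2\bigl(b_{t-1}^{-1/2}-b_t^{-1/2}\bigr)$ for the first inequality is a valid, equivalent variant of the paper's integral comparison.
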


    \begin{proof}[Proof of Lemma \ref{lem: summation_of_series}]
    The first inequality is because $ \sum_{t=1}^T \frac{a_t}{\sqrt{(\sum_{s=0}^t a_s)^3}}\le  \int_{x=a_0}^{\sum_{t=0}^T a_t}  \frac{1}{\sqrt{x}^3} \mathrm{d} x $. The second inequality is because $ \sum_{t=1}^T \frac{a_t}{\sum_{s=0}^t a_s}\le  \int_{x=a_0}^{\sum_{t=0}^T a_t}  \frac{1}{x} \mathrm{d} x $.
    The third inequality is due to 
    \small
    \begin{align*}
       &\frac{a_t}{\sqrt{\sum_{s=0}^t a_s}(\sqrt{\sum_{s=0}^{t-1} a_s}+\sqrt{\sum_{s=0}^t a_s})^2}\le  \frac{a_t}{\sqrt{\sum_{s=0}^t a_s}\sqrt{\sum_{s=0}^{t-1} a_s}(\sqrt{\sum_{s=0}^{t-1} a_s}+\sqrt{\sum_{s=0}^{t} a_s})}
       \\
=&\frac{1}{\sqrt{\sum_{s=0}^{t-1} a_s}}-\frac{1}{\sqrt{\sum_{s=0}^{t} a_s}}.
    \end{align*}
    \normalsize
    The proof is completed.
\end{proof}

\section{Analysis of AdaGrad}
\label{appen: adagrad}
\subsection{Proof of Theorem \ref{thm:adagrad}}
Before the proof, we define $g_{k,l}$ as the $l$-th component of $g_{k}$ and $\bnu_{k,l}$ as the $l$-th component of $\bnu_k$.

\begin{proof}[Proof of Theorem \ref{thm:adagrad}]
\label{appen: adagrad_wr}
As the proof highly resembles that of Theorem \ref{thm:adagrad}, we only highlight several key steps with the rest of the details omitted.

By the descent lemma, we have that 
\small
\begin{align}
\nonumber\mathbb{E}^{|\mathcal{F}_t}f(\bw_{t+1})\le&  f(\bw_{t}) + \begin{matrix}\underbrace{-\eta\left\langle \frac{1 }{\sqrt{\bnu_{t-1}} } \odot \nabla f(\bw_t), \nabla f(\bw_t) \right\rangle }
   \\
    \text{First Order Main}
    \end{matrix} + \begin{matrix}\underbrace{\mathbb{E}^{|\mathcal{F}_t}\left[\left\langle \nabla f(\bw_t), \eta \left(\frac{1}{\sqrt{\bnu_{t-1}} }-\frac{1}{\sqrt{\bnu_{t}} }\right)\odot g_t \right\rangle\right] }
   \\
    \text{Error}
    \end{matrix} 
    \\
    \label{eq: descent_adagrad_wr}
    &+ \begin{matrix}\underbrace{\frac{L}{2}\eta^2\mathbb{E}^{|\mathcal{F}_t} \left\Vert  \frac{1}{\sqrt{\bnu_t} } \odot g_t \right\Vert^2}
   \\
    \text{Second Order}
   \end{matrix} .
\end{align}
\normalsize
The "Error" term can be expanded as 
\begin{align*}
\mathbb{E}^{|\mathcal{F}_t}\left[\left\langle \nabla f(\bw_t), \eta \left(\frac{1}{\sqrt{\bnu_{t-1}} }-\frac{1}{\sqrt{\bnu_{t}} }\right)\odot g_t \right\rangle\right]= \sum_{l=1}^d\mathbb{E}^{|\mathcal{F}_t}\left[ \eta \partial_l f(\bw_t)  \left(\frac{1}{\sqrt{\bnu_{t-1,l}} }-\frac{1}{\sqrt{\bnu_{t,l}} }\right) g_{t,l} \right].
\end{align*}

For each $l$, the RHS of the above inequality can then be bounded as 
\begin{equation*}
   \mathbb{E}^{|\mathcal{F}_t}\left[ \eta \partial_l f(\bw_t)  \left(\frac{1}{\sqrt{\bnu_{t-1,l}} }-\frac{1}{\sqrt{\bnu_{t,l}} }\right) g_{t,l} \right] \le \frac{1}{2}\eta\frac{\partial_l f(\bw_t)^2}{\sqrt{\bnu_{t-1,l}} }+\frac{1}{2}\frac{\eta}{\sqrt{\bnu_{t-1,l}} }\left(\mathbb{E}^{|\mathcal{F}_t}\left[  \frac{ g_{t,l}^2}{\sqrt{\bnu_{t,l}}+\sqrt{\bnu_{t-1,l}}}\right]\right)^2,
\end{equation*}
where by Assumption \ref{assum: coordinate_affine}, $\left(\mathbb{E}^{|\mathcal{F}_t}\left[  \frac{ g_{t,l}^2}{\sqrt{\bnu_{t,l}}+\sqrt{\bnu_{t-1,l}}}\right]\right)^2$ can be bounded by
\begin{equation*}
   \left(\mathbb{E}^{|\mathcal{F}_t}\left[  \frac{ g_{t,l}^2}{\sqrt{\bnu_{t,l}}+\sqrt{\bnu_{t-1,l}}}\right]\right)^2\le ( D_0+ D_1 \partial_l f(\bw_{t})^2)  \mathbb{E}^{|\mathcal{F}_t}\left[  \frac{ g_{t,l}^2}{(\sqrt{\bnu_{t,l}}+\sqrt{\bnu_{t-1,l}})^2}\right].
\end{equation*}

Then, following the similar routine as the proof of Theorem \ref{thm: adagrad_norm} and leveraging the potential function $\tilde{\xi}_l (t) \triangleq \frac{\partial_l f(\bw_t)^2}{\sqrt{\bnu_{t,l}}}$, we have
\begin{align*}
&\mathbb{E}^{|\mathcal{F}_t}\left[ \eta \partial_l f(\bw_t)  \left(\frac{1}{\sqrt{\bnu_{t-1,l}} }-\frac{1}{\sqrt{\bnu_{t,l}} }\right) g_{t,l} \right]\le \frac{3}{4}\eta\frac{\vert \partial_l f(\bw_t)\vert^2}{\sqrt{\bnu_{t-1,l}} }+\frac{1}{2}\frac{\eta}{\sqrt{\bnu_{t-1,l}} }  D_0 \mathbb{E}^{|\mathcal{F}_t}\left[  \frac{\Vert g_{t,l}\Vert^2}{(\sqrt{\bnu_{t,l}}+\sqrt{\bnu_{t-1,l}})^2}\right]
    \\
    &+\frac{\eta}{2} D_1  \mathbb{E}^{|\mathcal{F}_t}\left(\tilde{\xi}_l(t-1)-\tilde{\xi}_l(t)\right)
    +\left(\eta (L \eta D_1)^2 +\frac{\eta}{2} D_1 (L \eta)^2\right)  \frac{1}{\sqrt{\bnu_{t-1,l}}} \left\Vert \frac{1}{\sqrt {\bnu_{t-1}}} \odot g_{t-1} \right\Vert^2 
\end{align*}
Applying the above inequality back to Eq. (\ref{eq: descent_adagrad_wr}) and summing Eq. (\ref{eq: descent_adagrad_wr}) over $t$ from $1$ to $T$, we obtain that
\begin{align*}
    &\frac{1}{4} \eta \sum_{t=1}^T \sum_{l=1}^d \mathbb{E}\frac{\partial_l f(\bw_t)^2}{\sqrt{\bnu_{t-1,l}}}\le f(\bw_1)-\mathbb{E}[f(\bw_T)]+\sum_{l=1}^d \frac{\eta D_1}{2} \mathbb{E}[\tilde{\xi}_l(0)-\tilde{\xi}_l(t)]+\frac{\eta}{2} D_0 \frac{1}{\sqrt{\bnu_0}}
    \\
     &~~~~~~~~~~~~~~~~~~~~~~~~~~+\left(  \frac{1}{\min_l\sqrt{\bnu_{0,l}}}\left(\eta (L \eta D_1)^2 +\frac{\eta}{2} D_1 (L \eta)^2\right) +\frac{L}{2}\eta^2\right) (\mathbb{E}\ln \bnu_T-\ln \bnu_0).
\end{align*}
Define
\small
\begin{gather*}
\tilde{C}_2 \triangleq 4\left(  \frac{1}{\min_l\sqrt{\bnu_{0,l}}}\left(\eta (L \eta D_1)^2 +\frac{\eta}{2} D_1 (L \eta)^2\right) +\frac{L}{2}\eta\right),
\\
\tilde{C}_1 \triangleq \frac{4}{\eta}f(\bw_1)-\frac{4}{\eta}\mathbb{E}[f(\bw_T)]+\sum_{l=1}^d 2 D_1 \mathbb{E}[\tilde{\xi}_l(0)-\tilde{\xi}_l(T)]+2   D_0 \frac{1}{\sqrt{\bnu_0}}- \tilde{C}_2\ln \bnu_0.
\end{gather*}
\normalsize

Then, following the same routine as Stage II in the proof of Theorem \ref{thm: adagrad_norm}, we have 
\begin{align*}
    \frac{1}{2D_1} \mathbb{E}\left[\sum_{l=1}^d \sqrt{\bnu_{T,l}}\right]\le& \frac{1}{2D_1} \sum_{l=1}^d\sqrt{2D_0T+ \bnu_{0,l}}+\tilde{C}_1+\tilde{C}_2 \mathbb{E} \sum_{l=1}^d\ln \bnu_{T,l}
    \\
    \le & \frac{1}{2D_1} \sum_{l=1}^d\sqrt{2D_0T+ \bnu_{0,l}}+\tilde{C}_1+2d\tilde{C}_2  \ln \mathbb{E}\sum_{l=1}^d \sqrt{\bnu_{T,l}}.
\end{align*}

The rest of the proof flows just as Theorem \ref{thm:adagrad}. The proof is completed. 
\end{proof}

\subsection{Explanation of why Theorem \ref{thm:adagrad} uses Assumption \ref{assum: coordinate_affine}}
\label{appen: explain}
Here we provide a more detailed explanation  
 of why Theorem \ref{thm:adagrad} can not be established using Assumption \ref{assum: coordinate_affine}.

In the proof of the previous section, we see that by  Assumption \ref{assum: coordinate_affine},
    \begin{equation*}
   \left(\mathbb{E}^{|\mathcal{F}_t}\left[  \frac{ g_{t,l}^2}{\sqrt{\bnu_{t,l}}+\sqrt{\bnu_{t-1,l}}}\right]\right)^2\le ( D_0+ D_1 \partial_l f(\bw_{t})^2)  \mathbb{E}^{|\mathcal{F}_t}\left[  \frac{ g_{t,l}^2}{(\sqrt{\bnu_{t,l}}+\sqrt{\bnu_{t-1,l}})^2}\right].
\end{equation*}
The term $\partial_l f(\bw_{t})^2$ latter results in a $\frac{1}{4}\eta \frac{\partial_l f(\bw_{t})^2}{\sqrt{\bnu_{t-1,l}}}$ term in the bound of the "Error" term. However, if we replace Assumption \ref{assum: coordinate_affine} by Assumption \ref{assum: affine}, the bound of $\left(\mathbb{E}^{|\mathcal{F}_t}\left[  \frac{ g_{t,l}^2}{\sqrt{\bnu_{t,l}}+\sqrt{\bnu_{t-1,l}}}\right]\right)^2$ change to
 \begin{equation*}
   \left(\mathbb{E}^{|\mathcal{F}_t}\left[  \frac{ g_{t,l}^2}{\sqrt{\bnu_{t,l}}+\sqrt{\bnu_{t-1,l}}}\right]\right)^2\le ( D_0+ D_1 \Vert \nabla  f(\bw_{t})\Vert ^2)  \mathbb{E}^{|\mathcal{F}_t}\left[  \frac{ g_{t,l}^2}{(\sqrt{\bnu_{t,l}}+\sqrt{\bnu_{t-1,l}})^2}\right],
\end{equation*}
and the  term $\frac{1}{4}\eta \frac{\partial_l f(\bw_{t})^2}{\sqrt{\bnu_{t-1,l}}}$ changes to $\frac{1}{4}\eta \frac{\Vert \nabla  f(\bw_{t})\Vert ^2}{\sqrt{\bnu_{t-1,l}}}$ correspondingly. Such a term is no longer smaller than the absolute value of the "First Order Main" term $-\eta\sum_{l=1}^d \frac{\partial_l f(\bw_{t})^2}{\sqrt{\bnu_{t-1,l}}}$, and the proof can not go on.

\subsection{Proof of Theorem \ref{thm: rr_adagrad}}
\label{appen: adagrad_wor}
Before the proof, we define several notations. We define $\bnu_{k,i,l}$ as the $l$-th component of $\bnu_{k,i}$, $\forall k\ge 1, i \in [n], l\in [d]$. Similarly, we define $g_{k,i,l}$ as the $l$-th component of $g_{k,i}$. We define $\tilde{\tau}_{k,i}$ as the inverse index of $\tau_{k,i}$, in the sense that $\tau_{k,\tilde{\tau}_{k,i}}=i$, $\forall k\ge 1, i \in [n]$. One can easily check that $\{\tilde{\tau}_{k,i}\}_{i=1}^n=[n]$, $\forall k\ge 1$.

\begin{proof}
By Assumption \ref{assum: stochatic_smooth} and the triangle inequality, $f$ satisfies $L$-smooth condition. By applying the descent lemma to the change of parameters within a whole epoch, we have
\begin{align}
\label{eq: descent_adagrad}
    f(\bw_{k+1,1})\le f(\bw_{k,1}) + \langle \nabla f(\bw_{k,1}), \bw_{k+1,1}-\bw_{k,1} \rangle +\frac{L}{2} \Vert \bw_{k+1,1}-\bw_{k,1} \Vert^2. 
\end{align}

We temporarily focus on the first order term $\langle \nabla f(\bw_{k,1}), \bw_{k+1,1}-\bw_{k,1} \rangle$ as the second order term is as simple to be bounded as those in the proofs of Theorem \ref{thm: adagrad_norm} and Theorem \ref{thm:adagrad}. The $ \bw_{k+1,1}-\bw_{k,1}$ in the first order term can be rewritten as
\begin{align*}
    &\bw_{k+1,1}-\bw_{k,1} 
     = -\eta \sum_{i=1}^n\frac{1}{\sqrt{\bnu_{k,i}}} \odot g_{k,i}
    \\
    =&-\eta \sum_{i=1}^n\frac{1}{\sqrt{\bnu_{k,1}}} \odot g_{k,i}+ \eta \sum_{i=1}^n(\frac{1}{\sqrt{\bnu_{k,1}}} -\frac{1}{\sqrt{\bnu_{k,i}}}) \odot g_{k,i}
    \\
    =&-\eta \sum_{i=1}^n\frac{1}{\sqrt{\bnu_{k,1}}} \odot \nabla f_{\tau_{k,i}} (\bw_{k,1})+ \eta \sum_{i=1}^n(\frac{1}{\sqrt{\bnu_{k,1}}} -\frac{1}{\sqrt{\bnu_{k,i}}}) \odot g_{k,i}+\eta \sum_{i=1}^n\frac{1}{\sqrt{\bnu_{k,1}}} \odot (\nabla f_{\tau_{k,i}} (\bw_{k,1})-g_{k,i})
    \\
    =& -\eta \frac{n}{\sqrt{\bnu_{k,1}}} \odot \nabla f (\bw_{k,1})+ \eta \sum_{i=1}^n(\frac{1}{\sqrt{\bnu_{k,1}}} -\frac{1}{\sqrt{\bnu_{k,i}}}) \odot g_{k,i}+\eta \sum_{i=1}^n\frac{1}{\sqrt{\bnu_{k,1}}} \odot (\nabla f_{\tau_{k,i}} (\bw_{k,1})-g_{k,i})
\end{align*}

Thus, the first order term can be rewritten as
\begin{align*}
    &\langle \nabla f(\bw_{k,1}) , \bw_{k+1}-\bw_{k,1} \rangle
    \\
    =&  \langle \nabla f(\bw_{k,1}), -\eta \frac{n}{\sqrt{\bnu_{k,1}}} \odot \nabla f (\bw_{k,1})\rangle + \langle \nabla f(\bw_{k,1}), \eta \sum_{i=1}^n(\frac{1}{\sqrt{\bnu_{k,1}}} -\frac{1}{\sqrt{\bnu_{k,i}}}) \odot g_{k,i}\rangle 
    \\
    &+\langle \nabla f(\bw_{k,1}),\eta \sum_{i=1}^n\frac{1}{\sqrt{\bnu_{k,1}}} \odot (\nabla f_{\tau_{k,i}} (\bw_{k,1})-g_{k,i})\rangle
    \\
    =& -\eta n\left\Vert  \frac{1}{\sqrt[4]{\bnu_{k,1}}} \odot \nabla f (\bw_{k,1})\right\Vert^2 + \left\langle \nabla f(\bw_{k,1}), \eta \sum_{i=1}^n(\frac{1}{\sqrt{\bnu_{k,1}}} -\frac{1}{\sqrt{\bnu_{k,i}}}) \odot g_{k,i}\right\rangle 
    \\
    &+\langle \nabla f(\bw_{k,1}),\eta \sum_{i=1}^n\frac{1}{\sqrt{\bnu_{k,1}}} \odot (\nabla f_{\tau_{k,i}} (\bw_{k,1})-g_{k,i})\rangle. 
\end{align*}
We then tackle the three terms respectively and denote $F_1=-n\eta\left\Vert  \frac{1}{\sqrt[4]{\bnu_{k,1}}} \odot \nabla f (\bw_{k,1})\right\Vert^2$, $F_2=\langle \nabla f(\bw_{k,1}), \eta \sum_{i=1}^n(\frac{1}{\sqrt{\bnu_{k,1}}} -\frac{1}{\sqrt{\bnu_{k,i}}}) \odot g_{k,i}\rangle $, and $F_3=\langle \nabla f(\bw_{k,1}),\eta \sum_{i=1}^n\frac{1}{\sqrt{\bnu_{k,1}}} \odot (\nabla f_{\tau_{k,i}} (\bw_{k,1})-g_{k,i})\rangle$. $F_2$ can be bounded as 
\small
\begin{align*}
   &\vert F_2 \vert 
   \\
   =&\vert\langle \nabla f(\bw_{k,1}), \eta \sum_{i=1}^n(\frac{1}{\sqrt{\bnu_{k,1}}} -\frac{1}{\sqrt{\bnu_{k,i}}}) \odot g_{k,i}\rangle\vert
    = \vert\eta\sum_{l=1}^d \partial_l f(\bw_{k,1})  \sum_{i=1}^n(\frac{1}{\sqrt{\bnu_{k,1,l}}} -\frac{1}{\sqrt{\bnu_{k,i,l}}}) g_{k,i,l}\vert
    \\
    \le & \eta\sum_{l=1}^d \sum_{i=1}^n\vert\partial_l f(\bw_{k,1})  (\frac{1}{\sqrt{\bnu_{k,1,l}}} -\frac{1}{\sqrt{\bnu_{k,i,l}}}) g_{k,i,l}\vert
    \overset{(\star)}{=} \eta\sum_{l=1}^d \sum_{i=1}^n (\frac{1}{\sqrt{\bnu_{k,1,l}}} -\frac{1}{\sqrt{\bnu_{k,i,l}}})\vert\partial_l f(\bw_{k,1})\vert   \vert g_{k,i,l}\vert
    \\
    =&\eta\sum_{l=1}^d \sum_{i=1}^n (\frac{1}{\sqrt{\bnu_{k,1,l}}} -\frac{1}{\sqrt{\bnu_{k,i,l}}})\vert\partial_l f(\bw_{k,1})\vert   \vert \partial_l f_{\tau_{k,i}}(\bw_{k,i})\vert
    = \eta\sum_{l=1}^d \sum_{i=1}^n (\frac{1}{\sqrt{\bnu_{k,1,l}}} -\frac{1}{\sqrt{\bnu_{k,\tilde{\tau}_{k,i},l}}})\vert\partial_l f(\bw_{k,1})\vert  \vert \partial_l f_{i}(\bw_{k,\tilde{\tau}_{k,i}})\vert
    \\
    \le & \eta\sum_{l=1}^d \sum_{i=1}^n (\frac{\vert\partial_l f(\bw_{k-1,1})\vert  \vert \partial_l f_{i}(\bw_{k-1,\tilde{\tau}_{k-1,i}})\vert}{\sqrt{\bnu_{k,1,l}}} -\frac{\vert\partial_l f(\bw_{k,1})\vert  \vert \partial_l f_{i}(\bw_{k,\tilde{\tau}_{k,i}})\vert}{\sqrt{\bnu_{k,\tilde{\tau}_{k,i},l}}})
    \\
    &+\eta\sum_{l=1}^d \sum_{i=1}^n \frac{\vert\partial_l f(\bw_{k,1})\vert  \vert \partial_l f_{i}(\bw_{k,\tilde{\tau}_{k,i}})-\partial_l f_{i}(\bw_{k-1,\tilde{\tau}_{k-1,i}})\vert}{\sqrt{\bnu_{k,1,l}}}
    \\
    &+\eta\sum_{l=1}^d \sum_{i=1}^n \frac{\vert\partial_l f(\bw_{k,1})-\partial_l f(\bw_{k-1,1})\vert  \vert \partial_l f_{i}(\bw_{k-1,\tilde{\tau}_{k-1,i}})\vert}{\sqrt{\bnu_{k,1,l}}},
\end{align*}
\normalsize
where Eq. ($\star$) is due to $\bnu_{k,i,l}$ is non-decreasing with respect to $k$ and $i$. Since $\forall k\ge 2$,
\begin{align*}
    \Vert \nabla  f(\bw_{k,1}) -\nabla  f(\bw_{k-1,1})\Vert^2 \le  L^2 \Vert \bw_{k,1} -\bw_{k-1,1} \Vert^2
    \le nL^2 \sum_{j=1}^n \Vert \bw_{k-1,j+1} -\bw_{k-1,j} \Vert^2, 
\end{align*}
and 
\begin{align*}
    \Vert \nabla  f_i(\bw_{k,\tilde{\tau}_{k,i}}) -\nabla  f_i(\bw_{k-1,\tilde{\tau}_{k-1,i}})\Vert^2 \le&  L^2 \Vert\bw_{k,\tilde{\tau}_{k,i}}-\bw_{k-1,\tilde{\tau}_{k-1,i}}\Vert^2
    \\
    \le &2nL^2 \sum_{j=1}^n \Vert \bw_{k-1,j+1} -\bw_{k-1,j} \Vert^2+2nL^2 \sum_{j=1}^n \Vert \bw_{k,j+1} -\bw_{k,j} \Vert^2,
\end{align*}
 then $\forall k \ge 2$ we can further bound $\vert F_2\vert$ as 
\begin{align*}
& \vert F_2\vert 
\\
    \le & \eta\sum_{l=1}^d \sum_{i=1}^n \left(\frac{\vert\partial_l f(\bw_{k-1,1})\vert  \vert \partial_l f_{i}(\bw_{k-1,\tilde{\tau}_{k-1,i}})\vert}{\sqrt{\bnu_{k,1,l}}} -\frac{\vert\partial_l f(\bw_{k,1})\vert  \vert \partial_l f_{i}(\bw_{k,\tilde{\tau}_{k,i}})\vert}{\sqrt{\bnu_{k,\tilde{\tau}_{k,i},l}}}\right)
    \\
    &+\sum_{i=1}^n \frac{1}{2}(\eta L\left\Vert \frac{1  }{\sqrt[4]{\bnu_{k,1}}} \odot \nabla  f(\bw_{k,1})\right\Vert^2+ \frac{\eta}{L\min_{l} \sqrt{\bnu_{1,0,l}}}\Vert \nabla  f_{i}(\bw_{k,\tilde{\tau}_{k,i}})-\nabla f_{i}(\bw_{k-1,\tilde{\tau}_{k-1,i}})\Vert^2)
    \\
    &+\sum_{i=1}^n  \frac{1}{2}(\eta^2L\left\Vert \frac{1  }{\sqrt{\bnu_{k,1}}} \odot \nabla f_{i}(\bw_{k-1,\tilde{\tau}_{k-1,i}})\right\Vert^2+ \frac{1}{L}\Vert \nabla  f(\bw_{k,1})-\nabla  f(\bw_{k-1,1})\Vert^2) 
    \\
    \le & \eta\sum_{l=1}^d \sum_{i=1}^n \left(\frac{\vert\partial_l f(\bw_{k-1,1})\vert  \vert \partial_l f_{i}(\bw_{k-1,\tilde{\tau}_{k-1,i}})\vert}{\sqrt{\bnu_{k,1,l}}} -\frac{\vert\partial_l f(\bw_{k,1})\vert  \vert \partial_l f_{i}(\bw_{k,\tilde{\tau}_{k,i}})\vert}{\sqrt{\bnu_{k,\tilde{\tau}_{k,i},l}}}\right)
    \\
    &+\sum_{i=1}^n \frac{1}{2}\left(\eta L\left\Vert \frac{1  }{\sqrt{\bnu_{k,1}}} \odot \nabla  f(\bw_{k,1})\right\Vert^2+\frac{2\eta nL}{\min_{l} \sqrt{\bnu_{1,0,l}}} \sum_{j=1}^n \Vert \bw_{k,j+1} -\bw_{k,j} \Vert^2\right)
    \\
    &+\sum_{i=1}^n  \frac{1}{2}\left(\eta^2L\left\Vert \frac{1  }{\sqrt{\bnu_{k,1}}} \odot \nabla f_{i}(\bw_{k-1,\tilde{\tau}_{k-1,i}})\right\Vert^2+\left(1+\frac{2\eta}{\min_{l} \sqrt{\bnu_{1,0,l}}}\right)nL \sum_{j=1}^n \Vert \bw_{k-1,j+1} -\bw_{k-1,j} \Vert^2\right)
    \\
    \overset{(\circ)}{\le }&  \eta\sum_{l=1}^d \sum_{i=1}^n \left(\frac{\vert\partial_l f(\bw_{k-1,1})\vert  \vert \partial_l f_{i}(\bw_{k-1,\tilde{\tau}_{k-1,i}})\vert}{\sqrt{\bnu_{k-1,\tilde{\tau}_{k-1,i},l}}} -\frac{\vert\partial_l f(\bw_{k,1})\vert  \vert \partial_l f_{i}(\bw_{k,\tilde{\tau}_{k,i}})\vert}{\sqrt{\bnu_{k,\tilde{\tau}_{k,i},l}}}\right)
    \\
    &+\sum_{i=1}^n \frac{1}{2}\left(\eta L\left\Vert \frac{1  }{\sqrt{\bnu_{k,1}}} \odot \nabla  f(\bw_{k,1})\right\Vert^2+\frac{2\eta n L}{\min_{l} \sqrt{\bnu_{1,0,l}}} \sum_{j=1}^n \Vert \bw_{k,j+1} -\bw_{k,j} \Vert^2\right)
    \\
    &+\sum_{i=1}^n  \frac{1}{2}\left(\eta^2L\left\Vert \frac{1  }{\sqrt{\bnu_{k,1}}} \odot \nabla f_{i}(\bw_{k-1,\tilde{\tau}_{k-1,i}})\right\Vert^2+\left(1+\frac{2\eta}{\min_{l} \sqrt{\bnu_{1,0,l}}}\right)nL \sum_{j=1}^n \Vert \bw_{k-1,j+1} -\bw_{k-1,j} \Vert^2\right),
\end{align*}
where Inequality $(\circ)$ uses the non-increasing property of $\bnu_{k,i,l}$ with respect to $i$.

When $k=1$, we directly bound $\vert F_2 \vert $ by
\begin{equation*}
     \vert F_2 \vert \le \eta\sum_{l=1}^d \sum_{i=1}^n \left(\frac{\vert\partial_l f(\bw_{k,1})\vert  \vert \partial_l f_{i}(\bw_{k,\tilde{\tau}_{k,i}})\vert}{\sqrt{\bnu_{k,1,l}}} -\frac{\vert\partial_l f(\bw_{k,1})\vert  \vert \partial_l f_{i}(\bw_{k,\tilde{\tau}_{k,i}})\vert}{\sqrt{\bnu_{k,\tilde{\tau}_{k,i},l}}}\right).
\end{equation*}
Meanwhile, $F_3$ can be bounded by
\begin{align*}
    \vert F_3\vert \le &\eta\sum_{l=1}^d\sum_{i=1}^n \frac{1}{\sqrt{\bnu_{k,1,l}}}\vert\partial_l f(\bw_{k,1})\vert   \vert \partial_l f_{\tau_{k,i}} (\bw_{k,1})-g_{k,i,l}\vert 
    \\
    = &\eta\sum_{l=1}^d\sum_{i=1}^n \frac{1}{\sqrt{\bnu_{k,1,l}}}\vert\partial_l f(\bw_{k,1})\vert   \vert \partial_l f_{\tau_{k,i}} (\bw_{k,1})-\partial_l f_{\tau_{k,i}} (\bw_{k,i})\vert
    \\
    \le & \sum_{i=1}^n (\frac{1}{4}\eta L\Vert \frac{1}{\sqrt[4]{\bnu_{k,1}}}\odot\nabla f(\bw_{k,1}) \Vert^2   +\frac{\eta}{L\min_l \sqrt{\bnu_{1,0,l}}}\Vert \nabla f_{\tau_{k,i}} (\bw_{k,1})-\nabla f_{\tau_{k,i}} (\bw_{k,i})\Vert^2)
    \\
    \le & \frac{n}{4}\eta L\Vert \frac{1}{\sqrt{\bnu_{k,1}}}\odot\nabla f(\bw_{k,1}) \Vert^2   +\sum_{i=1}^n \frac{\eta nL}{\min_l \sqrt{\bnu_{1,0,l}}} \sum_{j=1}^n \Vert \bw_{k,j+1}-\bw_{k,j} \Vert^2
    \\
    =& \frac{n}{4}\eta L\left\Vert \frac{1}{\sqrt[4]{\bnu_{k,1}}}\odot\nabla f(\bw_{k,1}) \right\Vert^2   +\frac{\eta}{\min_{l} \sqrt{\bnu_{1,0,l}}} \sum_{j=1}^n n^2L \Vert \bw_{k,j+1}-\bw_{k,j} \Vert^2.
\end{align*}
As a conclusion, when $k\ge 2$, the first order term can be bounded as
\begin{align*}
    &\langle \nabla f(\bw_{k,1}) , \bw_{k+1}-\bw_{k,1} \rangle
    \le  F_1+\vert F_2\vert +\vert F_3\vert
   \\
    \le & -n\eta\left\Vert  \frac{1}{\sqrt[4]{\bnu_{k,1}}} \odot \nabla f (\bw_{k,1})\right\Vert^2+\eta\sum_{l=1}^d \sum_{i=1}^n (\frac{\vert\partial_l f(\bw_{k-1,1})\vert  \vert \partial_l f_{i}(\bw_{k-1,\tilde{\tau}_{k-1,i}})\vert}{\sqrt{\bnu_{k-1,\tilde{\tau}_{k-1,i},l}}} -\frac{\vert\partial_l f(\bw_{k,1})\vert  \vert \partial_l f_{i}(\bw_{k,\tilde{\tau}_{k,i}})\vert}{\sqrt{\bnu_{k,\tilde{\tau}_{k,i},l}}})
    \\
    &+ \frac{3}{4}n\eta L\Vert \frac{1}{\sqrt[4]{\bnu_{k,1}}}\odot\nabla f(\bw_{k,1}) \Vert^2 +\sum_{i=1}^n  \frac{1}{2}\eta^2L\left\Vert \frac{1  }{\sqrt{\bnu_{k,1}}} \odot \nabla f_{i}(\bw_{k-1,\tilde{\tau}_{k-1,i}})\right\Vert^2
    \\
    &+ \left(\frac{1}{2}+\frac{\eta}{\min_{l} \sqrt{\bnu_{1,0,l}}}\right)n^2L\sum_{j=1}^n \Vert \bw_{k-1,j+1}-\bw_{k-1,j} \Vert^2+
\frac{2\eta }{\min_{l} \sqrt{\bnu_{1,0,l}}}n^2L\sum_{j=1}^n  \Vert \bw_{k,j+1}-\bw_{k,j} \Vert^2.
\end{align*}

When $k=1$, the first order term can be bounded as 
\begin{align*}
&\langle \nabla f(\bw_{k,1}) , \bw_{k+1}-\bw_{k,1} \rangle
    \le  F_1+\vert F_2\vert +\vert F_3\vert
    \\
    \le & -n\eta\left\Vert  \frac{1}{\sqrt[4]{\bnu_{k,1}}} \odot \nabla f (\bw_{k,1})\right\Vert^2+\eta\sum_{l=1}^d \sum_{i=1}^n \left(\frac{\vert\partial_l f(\bw_{k,1})\vert  \vert \partial_l f_{i}(\bw_{k,\tilde{\tau}_{k,i}})\vert}{\sqrt{\bnu_{k,1,l}}} -\frac{\vert\partial_l f(\bw_{k,1})\vert  \vert \partial_l f_{i}(\bw_{k,\tilde{\tau}_{k,i}})\vert}{\sqrt{\bnu_{k,\tilde{\tau}_{k,i},l}}}\right)
    \\
    &+\frac{n}{2}\eta L\Vert \frac{1}{\sqrt[4]{\bnu_{k,1}}}\odot\nabla f(\bw_{k,1}) \Vert^2   +\frac{\eta}{\min_l \sqrt{\bnu_{1,0,l}}}\sum_{j=1}^n n^2L \Vert \bw_{k,j+1}-\bw_{k,j} \Vert^2.
\end{align*}

Besides, the second order term can be bounded as
\begin{align*}
    \frac{L}{2} \Vert \bw_{k+1,1}-\bw_{k,1} \Vert^2\le &\frac{nL}{2} \sum_{j=1}^n \Vert \bw_{k,j+1}-\bw_{k,j} \Vert^2.
\end{align*}

Combining the estimations of the first order term and the second order term together into the descent lemma and summing over $k$ then leads to 
\begin{align*}
    &f(\bw_{T+1,1})
    \\
    \le& f(\bw_{1,1}) -\frac{1}{4}\sum_{k=1} ^T n\eta\left\Vert  \frac{1}{\sqrt[4]{\bnu_{k,1}}} \odot \nabla f (\bw_{k,1})\right\Vert^2
    \\
    &+\eta\sum_{l=1}^d \sum_{i=1}^n \left(\frac{\vert\partial_l f(\bw_{1,1})\vert  \vert \partial_l f_{i}(\bw_{1,\tilde{\tau}_{1,i}})\vert}{\sqrt{\bnu_{1,1,l}}} -\frac{\vert\partial_l f(\bw_{1,1})\vert  \vert \partial_l f_{i}(\bw_{1,\tilde{\tau}_{1,i}})\vert}{\sqrt{\bnu_{1,\tilde{\tau}_{1,i},l}}}\right)
    \\
    &+ \eta\sum_{k=2}^T\sum_{l=1}^d \sum_{i=1}^n \left(\frac{\vert\partial_l f(\bw_{k-1,1})\vert  \vert \partial_l f_{i}(\bw_{k-1,\tilde{\tau}_{k-1,i}})\vert}{\sqrt{\bnu_{k-1,\tilde{\tau}_{k-1,i},l}}} -\frac{\vert\partial_l f(\bw_{k,1})\vert  \vert \partial_l f_{i}(\bw_{k,\tilde{\tau}_{k,i}})\vert}{\sqrt{\bnu_{k,\tilde{\tau}_{k,i},l}}}\right)
    \\
    & +\sum_{k=1}^T\sum_{i=1}^n  \frac{1}{2}\eta^2L\left\Vert \frac{1  }{\sqrt{\bnu_{k,1}}} \odot \nabla f_{i}(\bw_{k,\tilde{\tau}_{k,i}})\right\Vert^2
    +\left(1+\frac{3\eta}{\min_{l} \sqrt{\bnu_{1,0,l}}}\right)n^2L\sum_{k=1}^{T} \sum_{j=1}^n \Vert \bw_{k,j+1}-\bw_{k,j} \Vert^2
    \\
    =& f(\bw_{1,1}) -\sum_{k=1} ^T n\eta\left\Vert  \frac{1}{\sqrt[4]{\bnu_{k,1}}} \odot \nabla f (\bw_{k,1})\right\Vert^2
    \\
    &+\eta\sum_{l=1}^d \sum_{i=1}^n \left(\frac{\vert\partial_l f(\bw_{1,1})\vert  \vert \partial_l f_{i}(\bw_{1,\tilde{\tau}_{1,i}})\vert}{\sqrt{\bnu_{1,1,l}}} -\frac{\vert\partial_l f(\bw_{T,1})\vert  \vert \partial_l f_{i}(\bw_{T,\tilde{\tau}_{T,i}})\vert}{\sqrt{\bnu_{T,\tilde{\tau}_{T,i},l}}}\right)
    \\
    &+\sum_{k=1}^T\sum_{i=1}^n  \frac{1}{2}\eta^2L\left\Vert \frac{1  }{\sqrt{\bnu_{k+1,1}}} \odot \nabla f_{i}(\bw_{k,\tilde{\tau}_{k,i}})\right\Vert^2
    +\left(1+\frac{3\eta}{\min_{l} \sqrt{\bnu_{1,0,l}}}\right)n^2L\sum_{k=1}^{T} \sum_{j=1}^n \Vert \bw_{k,j+1}-\bw_{k,j} \Vert^2.
\end{align*}

We respectively bound the last two terms.

The term $\sum_{k=1}^T\sum_{i=1}^n  \frac{1}{2}\eta^2L\left\Vert \frac{1  }{\sqrt{\bnu_{k+1,1}}} \odot \nabla f_{i}(\bw_{k,\tilde{\tau}_{k,i}})\right\Vert^2$ can be bounded as
\begin{align}
\nonumber
    &\sum_{k=1}^T\sum_{i=1}^n  \frac{1}{2}\eta^2L\left\Vert \frac{1  }{\sqrt{\bnu_{k+1,1}}} \odot \nabla f_{i}(\bw_{k,\tilde{\tau}_{k,i}})\right\Vert^2\le  \sum_{k=1}^T\sum_{i=1}^n  \frac{1}{2}\eta^2L\left\Vert \frac{1  }{\sqrt{\bnu_{k,\tilde{\tau}_{k,i}}}} \odot \nabla f_{i}(\bw_{k,\tilde{\tau}_{k,i}})\right\Vert^2
    \\
    \label{eq: estimation_sum}
    =& \sum_{k=1}^T\sum_{i=1}^n  \frac{1}{2}L\left\Vert \bw_{k,j+1}-\bw_{k,j}\right\Vert^2.
\end{align}
Thus, we only need to bound the last term in order to bound the first two terms. The third term $\sum_{k=1}^{T} \left(1+\frac{3\eta}{\min_{l} \sqrt{\bnu_{1,0,l}}}\right)nL^2\sum_{j=1}^n \Vert \bw_{k,j+1}-\bw_{k,j} \Vert^2$ can be bounded as 
\begin{align*}
    \sum_{k=1}^{T} \left(1+\frac{3\eta}{\min_{l} \sqrt{\bnu_{1,0,l}}}\right)nL^2\sum_{j=1}^n \Vert \bw_{k,j+1}-\bw_{k,j} \Vert^2=& \left(1+\frac{3\eta}{\min_{l} \sqrt{\bnu_{1,0,l}}}\right) nL^2\sum_{k=1}^T \sum_{j=1}^n\sum_{l=1}^d \frac{g_{k,i,l}^2}{\bnu_{k,i,l}}
    \\
    \le&\left(1+\frac{3\eta}{\min_{l} \sqrt{\bnu_{1,0,l}}}\right)nL^2 \sum_{l=1}^d \ln  \frac{\bnu_{T,n,l}}{\bnu_{1,0,l}}.
\end{align*}
Combining the above estimations together, we obtain
\begin{align}
\nonumber
     f(\bw_{T+1,1})\le& f(\bw_{1,1}) -\frac{1}{4}\sum_{k=1} ^T 
 n\eta\left\Vert  \frac{1}{\sqrt[4]{\bnu_{k,1}}} \odot \nabla f (\bw_{k,1})\right\Vert^2
    +\eta\sum_{l=1}^d \sum_{i=1}^n \frac{\vert\partial_l f(\bw_{1,1})\vert  \vert \partial_l f_{i}(\bw_{1,\tilde{\tau}_{1,i}})\vert}{\sqrt{\bnu_{1,1,l}}} 
    \\
    \label{eq: adagrad_rr_first_stage}
    &+\left(\left(1+\frac{3\eta}{\min_{l} \sqrt{\bnu_{1,0,l}}}\right)nL^2+\frac{1}{2}L \right)
    \sum_{l=1}^d(\ln \bnu_{T,n,l}-\ln \bnu_{1,0,l}).
\end{align}

Define
\small
\begin{gather*}
    C_0 \triangleq f(\bw_{1,1}) -f(\bw_{T+1,1}) +\eta\sum_{l=1}^d \sum_{i=1}^n \frac{\vert\partial_l f(\bw_{1,1})\vert  \vert \partial_l f_{i}(\bw_{1,\tilde{\tau}_{1,i}})\vert}{\sqrt{\bnu_{1,1,l}}}+\left(\left(1+\frac{3\eta}{\min_{l} \sqrt{\bnu_{1,0,l}}}\right)nL^2+\frac{1}{2}L \right)
    \sum_{l=1}^d\ln \bnu_{1,0,l},
    \\
    C_1\triangleq  \left(1+\frac{3\eta}{\min_{l} \sqrt{\bnu_{1,0,l}}}\right)nL^2+\frac{1}{2}L  .
\end{gather*}
\normalsize
Based on these notations, Eq. (\ref{eq: adagrad_rr_first_stage}) can be abbreviated as
\begin{align}
\label{eq: adagrad_rr_first_stage_abb}
    \sum_{k=1} ^T n\eta\left\Vert  \frac{1}{\sqrt[4]{\bnu_{k,1}}} \odot \nabla f (\bw_{k,1})\right\Vert^2 \le 4C_0+4C_1 \sum_{l=1}^d \ln\bnu_{T,n,l}.
\end{align}
Since 
\begin{align*}
    \left\Vert  \frac{1}{\sqrt[4]{\bnu_{k,1}}} \odot \nabla f (\bw_{k,1})\right\Vert^2= \sum_{l=1}^d \frac{\partial_l f(\bw_{k,1})^2}{\sqrt{\bnu_{k,1,l}}} \ge \sum_{l=1}^d \frac{\partial_l f(\bw_{k,1})^2}{\sqrt{\sum_{l'=1}^d\bnu_{k,1,l'}}}= \frac{\Vert \nabla f(\bw_{k,1}) \Vert^2}{\sqrt{\sum_{l'=1}^d\bnu_{k,1,l'}}},
\end{align*}
the LHS of Eq. (\ref{eq: adagrad_rr_first_stage_abb}) can be lower bounded as
\begin{align*}
      &\sum_{k=1} ^T n\eta\left\Vert  \frac{1}{\sqrt[4]{\bnu_{k,1}}} \odot \nabla f (\bw_{k,1})\right\Vert^2 \ge \sum_{k=1} ^T n\eta \frac{\Vert \nabla f(\bw_{k,1}) \Vert^2}{\sqrt{\sum_{l'=1}^d\bnu_{k,1,l'}}} \ge n\eta \frac{ \sum_{k=1} ^T\Vert \nabla f(\bw_{k,1}) \Vert^2}{\sqrt{\sum_{l'=1}^d\bnu_{T,1,l'}}}
      \\
      \overset{(\star)}{\ge} & n\eta \frac{ \frac{1}{n D_1} \sum_{k=1} ^T\sum_{i=1}^n \Vert \nabla f_i(\bw_{k,1}) \Vert^2-\frac{D_0}{D_1}T}{\sqrt{\sum_{l'=1}^d\bnu_{T,1,l'}}} 
      \\
      \ge & n\eta \frac{ \frac{1}{2nD_1} \sum_{k=1} ^T\sum_{i=1}^n \Vert \nabla f_i(\bw_{k,\tilde{\tau}_{k,i}}) \Vert^2- \frac{1}{nD_1} \sum_{k=1} ^T\sum_{i=1}^n \Vert \nabla f_i(\bw_{k,\tilde{\tau}_{k,i}})- \nabla f_i(\bw_{k,1}) \Vert^2-\frac{D_0}{D_1}T}{\sqrt{\sum_{l'=1}^d\bnu_{T,1,l'}}}
      \\
      \ge & n\eta \frac{ \frac{1}{2nD_1} \sum_{k=1} ^T\sum_{i=1}^n \Vert \nabla f_i(\bw_{k,\tilde{\tau}_{k,i}}) \Vert^2- \frac{L^2}{nD_1} \sum_{k=1} ^T\sum_{i=1}^n \Vert \bw_{k,\tilde{\tau}_{k,i}}- \bw_{k,1} \Vert^2-\frac{D_0}{D_1}T}{\sqrt{\sum_{l'=1}^d\bnu_{T,1,l'}}}
       \end{align*}
       \begin{align*}
      \ge & n\eta \frac{ \frac{1}{2nD_1} \sum_{k=1} ^T\sum_{i=1}^n \Vert \nabla f_i(\bw_{k,\tilde{\tau}_{k,i}}) \Vert^2- \frac{L^2}{D_1} \sum_{k=1} ^T\sum_{i=1}^n \sum_{j=1}^n \Vert \bw_{k,j+1}- \bw_{k,j} \Vert^2-\frac{D_0}{D_1}T}{\sqrt{\sum_{l'=1}^d\bnu_{T,1,l'}}}
      \\
      =& n\eta \frac{ \frac{1}{2nD_1} \sum_{k=1} ^T\sum_{i=1}^n \Vert \nabla f_i(\bw_{k,\tilde{\tau}_{k,i}}) \Vert^2- \frac{n
      L^2}{D_1} \sum_{k=1} ^T \sum_{j=1}^n \Vert \bw_{k,j+1}- \bw_{k,j} \Vert^2-\frac{D_0}{D_1}T}{\sqrt{\sum_{l'=1}^d\bnu_{T,1,l'}}}
      \\
      \ge & n\eta \frac{ \frac{1}{2nD_1}(\sum_{l'=1}^d\bnu_{T,1,l'}-\sum_{l'=1}^d\bnu_{1,0,l'})- \frac{n      L^2}{D_1} \sum_{k=1} ^T \sum_{j=1}^n \Vert \bw_{k,j+1}- \bw_{k,j} \Vert^2-\frac{D_0}{D_1}T}{\sqrt{\sum_{l'=1}^d\bnu_{T,1,l'}}}.
\end{align*}

As a conclusion, 
\begin{align*}
     &n\eta \frac{ \frac{1}{2nD_1}(\sum_{l'=1}^d\bnu_{T,n,l'}-\sum_{l'=1}^d\bnu_{1,0,l'})- \frac{n
      L^2}{D_1} \sum_{k=1} ^T \sum_{j=1}^n \Vert \bw_{k,j+1}- \bw_{k,j} \Vert^2-\frac{D_0}{D_1}T}{\sqrt{\sum_{l'=1}^d\bnu_{T,1,l'}}}
      \\
      \le &4C_0 +4C_1 \sum_{l=1}^d \ln\bnu_{T,n,l}.
\end{align*}

Echoing Eq. (\ref{eq: estimation_sum}), we then obtain 
\begin{align*}
     &n\eta \frac{ \frac{1}{2nD_1}(\sum_{l'=1}^d\bnu_{T,n,l'}-\sum_{l'=1}^d\bnu_{1,0,l'})- \frac{n
      L^2}{D_1} \eta^2\sum_{l=1}^d \ln \frac{\bnu_{T,n,l}}{\bnu_{1,0,l}}-\frac{D_0}{D_1}T}{\sqrt{\sum_{l'=1}^d\bnu_{T,1,l'}}}
      \\
      \le &4C_0 +4C_1 \sum_{l=1}^d \ln\bnu_{T,n,l}.
\end{align*}

Solving the above inequality with respect to $\sum_{l'=1}^d \bnu_{T,n,l'}$ leads to
\begin{equation*}
    \sum_{l'=1}^d \bnu_{T,n,l'}=\mathcal{O}(1)+\mathcal{O}(D_0T).
\end{equation*}

Applying the above inequality back to Eq. (\ref{eq: adagrad_rr_first_stage_abb}) completes the proof.
\end{proof}

\section{Analysis under $(L_0,L_1)$-smooth condition}
\subsection{Proof of Theorem \ref{thm: adagrad_norm_nonsmooth}}
\label{appen: adagrad_ns}
Before the formal proof of Theorem \ref{thm: adagrad_norm_nonsmooth}, we first introduce the version of descent lemma under $(L_0,L_1)$-smooth condition.
\begin{lemma}[Descent lemma under $(L_0,L_1)$-smooth condition]
\label{lem: descent_nonsmooth}
Let Assumption \ref{assum: non-smooth} holds. Then, if $\bw_1,\bw_2\in \mathbb{R}^d$ satisfies $\Vert \bw_1-\bw_2 \Vert \le \frac{1}{L_1}$, then
\begin{align*}
    f(\bw_1)\le f(\bw_2)+\langle \nabla f(\bw_2), \bw_1-\bw_2 \rangle +\frac{(L_0+L_1 \Vert \nabla f(\bw_2)\Vert)}{2} \Vert \bw_1 -\bw_2 \Vert^2.
\end{align*}    
\end{lemma}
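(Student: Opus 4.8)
The plan is to reduce the statement to the fundamental theorem of calculus along the segment joining $\bw_2$ to $\bw_1$, exactly as in the standard descent lemma, and then control the gradient increment using Assumption \ref{assum: non-smooth} in place of global Lipschitzness. First I would parametrize the segment by $\bw(s) \triangleq \bw_2 + s(\bw_1 - \bw_2)$ for $s \in [0,1]$ and write
\begin{equation*}
f(\bw_1) - f(\bw_2) - \langle \nabla f(\bw_2), \bw_1 - \bw_2 \rangle = \int_0^1 \langle \nabla f(\bw(s)) - \nabla f(\bw_2), \bw_1 - \bw_2 \rangle \, \mathrm{d}s,
\end{equation*}
which holds because $f$ is differentiable. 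Applying the Cauchy--Schwarz inequality inside the integral bounds the right-hand side by $\int_0^1 \Vert \nabla f(\bw(s)) - \nabla f(\bw_2) \Vert \, \Vert \bw_1 - \bw_2 \Vert \, \mathrm{d}s$.

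The second step is to invoke Assumption \ref{assum: non-smooth} on each pair $(\bw(s), \bw_2)$. The crucial observation is that $\Vert \bw(s) - \bw_2 \Vert = s \Vert \bw_1 - \bw_2 \Vert \le \Vert \bw_1 - \bw_2 \Vert$, so the hypothesis $\Vert \bw_1 - \bw_2 \Vert \le \frac{1}{L_1}$ guarantees every intermediate point $\bw(s)$ lies in the neighborhood of $\bw_2$ on which the local smoothness estimate is valid. I would then apply the assumption with the anchor $\bw_2$ playing the role of the base point; since the gradient-difference norm is symmetric in its two arguments, this yields $\Vert \nabla f(\bw(s)) - \nabla f(\bw_2) \Vert \le (L_1 \Vert \nabla f(\bw_2) \Vert + L_0) \Vert \bw(s) - \bw_2 \Vert = (L_1 \Vert \nabla f(\bw_2) \Vert + L_0)\, s\, \Vert \bw_1 - \bw_2 \Vert$, which expresses the bound in terms of the fixed quantity $\Vert \nabla f(\bw_2) \Vert$ rather than the moving $\Vert \nabla f(\bw(s)) \Vert$.

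Finally I would substitute this estimate back and integrate the scalar factor $s$, using $\int_0^1 s \, \mathrm{d}s = \frac{1}{2}$, to obtain
\begin{equation*}
f(\bw_1) - f(\bw_2) - \langle \nabla f(\bw_2), \bw_1 - \bw_2 \rangle \le \frac{L_0 + L_1 \Vert \nabla f(\bw_2) \Vert}{2} \Vert \bw_1 - \bw_2 \Vert^2,
\end{equation*}
which is the claim after rearranging. I expect the only delicate point to be the second step: one must orient the assumption so that the local smoothness constant is anchored at $\bw_2$ (producing $L_1 \Vert \nabla f(\bw_2)\Vert$) rather than at the moving point $\bw(s)$, and must verify that the segment never leaves the admissible neighborhood — this is precisely what the restriction $\Vert \bw_1 - \bw_2 \Vert \le \frac{1}{L_1}$ secures. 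The remaining manipulations are the routine integration identities and thus require no further care.
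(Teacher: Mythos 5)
Your proof is correct and is precisely the argument the paper itself points to by deferring to \citet[Lemma A.3]{zhang2020improved}: parametrize the chord, apply the fundamental theorem of calculus and Cauchy--Schwarz, anchor the $(L_0,L_1)$ bound at $\bw_2$ --- which is legitimate, as you note, because Assumption \ref{assum: non-smooth} is universally quantified over ordered pairs, so instantiating it with $\bw_2$ as the first argument and $\bw(s)$ as the second yields $(L_1\Vert \nabla f(\bw_2)\Vert + L_0)$ rather than a bound at the moving point --- and integrate $\int_0^1 s\,\mathrm{d}s = \frac{1}{2}$. The only wrinkle, inherited from the paper rather than introduced by you, is that Assumption \ref{assum: non-smooth} as literally written requires $\Vert \bw_1-\bw_2\Vert \le \frac{1}{L_0}$ while the lemma's hypothesis supplies $\frac{1}{L_1}$; your radius check implicitly uses the intended threshold $\frac{1}{L_1}$ (as in \citet{zhang2020improved}), which is consistent with the lemma statement.
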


The proof bears great similarity to that of the descent lemma under $L$-smooth condition and we omit it here. Interested readers can refer to \citep[Lemma A.3]{zhang2020improved} for details.

\begin{proof}[Proof of Theorem \ref{thm: adagrad_norm_nonsmooth}]
    Since we set $\eta\le \frac{1}{L_1}$ and $\Vert \frac{\nabla f(\bw_t)}{\sqrt{\bnu_t}} \Vert \le 1$, we have
    \begin{equation*}
        \Vert \bw_{t+1}-\bw_{t} \Vert =\eta \left\Vert \frac{\nabla f(\bw_t)}{\sqrt{\bnu_t}} \right\Vert\le \frac{1}{L_1}.
    \end{equation*}
Therefore, Lemma \ref{lem: descent_nonsmooth} can be applied to $\bw_{t}$ and $\bw_{t+1}$, taking expectation to which leads to
\begin{align}
\nonumber
    \mathbb{E}_{\mathcal{F}_t}f(\bw_{t+1})\le  f(\bw_{t}) + \begin{matrix}\underbrace{\mathbb{E}_{\mathcal{F}_t}\langle \nabla f(\bw_t),-\eta\frac{ \nabla f(\bw_t) }{\sqrt{\bnu_{t}} } \rangle}
   \\
    \text{First Order}
    \end{matrix} + \begin{matrix}\underbrace{\frac{L_0+L_1\Vert \nabla f(\bw_t)\Vert}{2}\eta^2\mathbb{E}_{\mathcal{F}_t} \left\Vert  \frac{g_t}{\sqrt{\bnu_t} } \right\Vert^2}
   \\
    \text{Second Order}
   \end{matrix} .
\end{align}

We then decompose the "First Order" term into the "First Order Main" term and the "Error" term as follows.
\begin{align}
\nonumber
    \mathbb{E}_{\mathcal{F}_t}f(\bw_{t+1})\le&  f(\bw_{t}) + \begin{matrix}\underbrace{-\eta\frac{\Vert \nabla f(\bw_t)\Vert^2 }{\sqrt{\bnu_{t-1}} }}
   \\
    \text{First Order Main}
    \end{matrix} + \begin{matrix}\underbrace{\mathbb{E}_{\mathcal{F}_t}\left[\left\langle \nabla f(\bw_t), \eta (\frac{1}{\sqrt{\bnu_{t-1}} }-\frac{1}{\sqrt{\bnu_{t}} })g_t \right\rangle\right] }
   \\
    \text{Error}
    \end{matrix}
    \\
    \nonumber
    +& \begin{matrix}\underbrace{\frac{L_0+L_1\Vert \nabla f(\bw_t)\Vert}{2}\eta^2\mathbb{E}_{\mathcal{F}_t} \left\Vert  \frac{g_t}{\sqrt{\bnu_t} } \right\Vert^2}
   \\
    \text{Second Order}
   \end{matrix}
   \\
   \nonumber
   \le&  f(\bw_{t}) + \begin{matrix}\underbrace{-\eta\frac{\Vert \nabla f(\bw_t)\Vert^2 }{\sqrt{\bnu_{t-1}} }}
   \\
    \text{First Order Main}
    \end{matrix} + \begin{matrix}\underbrace{\mathbb{E}_{\mathcal{F}_t}\left[\left\langle \nabla f(\bw_t), \eta (\frac{1}{\sqrt{\bnu_{t-1}} }-\frac{1}{\sqrt{\bnu_{t}} })g_t \right\rangle\right] }
   \\
    \text{Error}
    \end{matrix}
    \\
    \label{eq: descent_non_smooth_proof}
    +& \begin{matrix}\underbrace{\frac{L_0}{2}\eta^2\mathbb{E}_{\mathcal{F}_t} \left\Vert  \frac{g_t}{\sqrt{\bnu_t} } \right\Vert^2}
   \\
    \text{Second Order I}
   \end{matrix}+\begin{matrix}\underbrace{\frac{L_1\Vert \nabla f(\bw_t)\Vert}{2}\eta^2\mathbb{E}_{\mathcal{F}_t} \left\Vert  \frac{g_t}{\sqrt{\bnu_t} } \right\Vert^2}
   \\
    \text{Second Order II}
   \end{matrix}.
\end{align}
Here the second inequality is obtained by directly expanding the "Second Order" term. The  "Second Order I" term takes the same form of the "Second Order" term in the proof of Theorem \ref{thm: adagrad_norm}, and it can be handled in the same way as the proof of Theorem \ref{thm: adagrad_norm}. So does the "First Order Main" term. As for the "Error" term, following the same routine as the proof of Lemma \ref{lem: auxiliary}, we obtain that
 \begin{align*}
\left\vert\mathbb{E}_{\mathcal{F}_t}\left[\left\langle \nabla f(\bw_t), \eta (\frac{1}{\sqrt{\bnu_{t-1}} }-\frac{1}{\sqrt{\bnu_{t}} })g_t \right\rangle\right]\right \vert\le  \eta\frac{\Vert \nabla f(\bw_t)\Vert}{\sqrt{\bnu_{t-1}} }\mathbb{E}_{\mathcal{F}_t}\left[  \frac{\Vert g_t\Vert^2}{\sqrt{\bnu_{t}}+\sqrt{\bnu_{t-1}}}\right].
 \end{align*}
Meanwhile, since $\bnu_t$ is non-decreasing with respect to $t$, we have that
\begin{equation*}
    \frac{L_1\Vert \nabla f(\bw_t)\Vert}{2}\eta^2\mathbb{E}_{\mathcal{F}_t} \left\Vert  \frac{g_t}{\sqrt{\bnu_t} } \right\Vert^2 \le L_1\eta^2\frac{\Vert \nabla f(\bw_t)\Vert}{\sqrt{\bnu_{t-1}} }\mathbb{E}_{\mathcal{F}_t}\left[  \frac{\Vert g_t\Vert^2}{\sqrt{\bnu_{t}}+\sqrt{\bnu_{t-1}}}\right].
\end{equation*}
One can easily observe that the RHSs of the above two inequalities have the same form except the coefficients! Thus, the "Error" term plus the "Second Order II" can be bounded by
\begin{align*}
&\begin{matrix}\underbrace{\mathbb{E}_{\mathcal{F}_t}\left[\left\langle \nabla f(\bw_t), \eta (\frac{1}{\sqrt{\bnu_{t-1}} }-\frac{1}{\sqrt{\bnu_{t}} })g_t \right\rangle\right] }
   \\
    \text{Error}
    \end{matrix}+\begin{matrix}\underbrace{\frac{L_1\Vert \nabla f(\bw_t)\Vert}{2}\eta^2\mathbb{E}_{\mathcal{F}_t} \left\Vert  \frac{g_t}{\sqrt{\bnu_t} } \right\Vert^2}
   \\
    \text{Second Order II}
   \end{matrix}
   \\
   \le& (\eta+L_1\eta^2)\frac{\Vert \nabla f(\bw_t)\Vert}{\sqrt{\bnu_{t-1}} }\mathbb{E}_{\mathcal{F}_t}\left[  \frac{\Vert g_t\Vert^2}{\sqrt{\bnu_{t}}+\sqrt{\bnu_{t-1}}}\right] \le 2\eta\frac{\Vert \nabla f(\bw_t)\Vert}{\sqrt{\bnu_{t-1}} }\mathbb{E}_{\mathcal{F}_t}\left[  \frac{\Vert g_t\Vert^2}{\sqrt{\bnu_{t}}+\sqrt{\bnu_{t-1}}}\right],
\end{align*}
where in the last inequality we use $\eta L_1 \le 1$. By the mean-value inequality ($2ab\le a^2+b^2$), 
\begin{align*}
    2\eta\frac{\Vert \nabla f(\bw_t)\Vert}{\sqrt{\bnu_{t-1}} }\mathbb{E}_{\mathcal{F}_t}\left[  \frac{\Vert g_t\Vert^2}{\sqrt{\bnu_{t}}+\sqrt{\bnu_{t-1}}}\right] \le& \frac{1}{2}\eta\frac{\Vert \nabla f(\bw_t)\Vert^2}{\sqrt{\bnu_{t-1}} }+2\frac{\eta}{\sqrt{\bnu_{t-1}} }\left(\mathbb{E}_{\mathcal{F}_t}\left[  \frac{\Vert g_t\Vert^2}{\sqrt{\bnu_{t}}+\sqrt{\bnu_{t-1}}}\right]\right)^2
    \\
    \le & \frac{1}{2}\eta\frac{\Vert \nabla f(\bw_t)\Vert^2}{\sqrt{\bnu_{t-1}} }+2\frac{\eta}{\sqrt{\bnu_{t-1}} }\mathbb{E}_{\mathcal{F}_t}  \Vert g_t \Vert^2 \cdot\mathbb{E}_{\mathcal{F}_t}\left[  \frac{\Vert g_t\Vert^2}{(\sqrt{\bnu_{t}}+\sqrt{\bnu_{t-1}})^2}\right]
    \\
    \le & \frac{1}{2}\eta\frac{\Vert \nabla f(\bw_t)\Vert^2}{\sqrt{\bnu_{t-1}} }+2 \frac{\eta}{\sqrt{\bnu_{t-1}} }( D_0+ D_1 \Vert \nabla f(\bw_{t}) \Vert^2)  \mathbb{E}_{\mathcal{F}_t}\left[  \frac{\Vert g_t\Vert^2}{(\sqrt{\bnu_{t}}+\sqrt{\bnu_{t-1}})^2}\right],
\end{align*}
where the second inequality is due to Hölder's inequality, and in the last inequality we use Assumption \ref{assum: smooth}. Similar to the proof of Theorem \ref{thm: adagrad_norm}, we focus on the term $2D_1 \Vert \nabla f(\bw_t) \Vert^2 \mathbb{E}_{\mathcal{F}_t}\left[  \frac{\Vert g_t\Vert^2}{(\sqrt{\bnu_{t}}+\sqrt{\bnu_{t-1}})^2}\right]$, since the rest of the terms can be easily bounded. Such a term can be bounded as
\begin{align*}
    &2\frac{\eta D_1 \Vert \nabla f(\bw_t) \Vert^2}{\sqrt{\bnu_{t-1}} } \mathbb{E}_{\mathcal{F}_t}\left[  \frac{\Vert g_t\Vert^2}{(\sqrt{\bnu_{t}}+\sqrt{\bnu_{t-1}})^2}\right]
   \\
   \le& 2\eta D_1 \Vert \nabla f(\bw_t) \Vert^2 \mathbb{E}_{\mathcal{F}_t}\left(\frac{1}{\sqrt{\bnu_{t-1}}}-\frac{1}{\sqrt{\bnu_{t}}}\right)
    \\
    =&  2\eta  D_1  \mathbb{E}_{\mathcal{F}_t}\left(\frac{\Vert \nabla f(\bw_{t-1}) \Vert^2}{\sqrt{\bnu_{t-1}}}-\frac{\Vert \nabla f(\bw_t) \Vert^2}{\sqrt{\bnu_{t}}}\right)
    +2 \eta D_1  \frac{\Vert \nabla f(\bw_{t}) \Vert^2-\Vert \nabla f(\bw_{t-1}) \Vert^2}{\sqrt{\bnu_{t-1}}}.
\end{align*}
By  Assumption \ref{assum: non-smooth}, $\Vert \nabla f(\bw_{t}) \Vert-\Vert \nabla f(\bw_{t-1}) \Vert \le \Vert  \nabla f(\bw_{t})- \nabla f(\bw_{t-1}) \Vert \le (L_0 +L_1 \Vert \nabla f(\bw_t) \Vert )\Vert \bw_t-\bw_{t-1} \Vert  $. Therefore, 
\begin{align*}
    &2\eta D_1 \Vert \nabla f(\bw_t) \Vert^2 \mathbb{E}_{\mathcal{F}_t}\left(\frac{1}{\sqrt{\bnu_{t-1}}}-\frac{1}{\sqrt{\bnu_{t}}}\right)
    \\
    =&  
   2\eta  D_1  \frac{2(L_0+L_1\Vert \nabla f(\bw_t) \Vert )\Vert \bw_{t}-\bw_{t-1}\Vert \Vert \nabla f(\bw_t)\Vert +(L_0+L_1\Vert \nabla f(\bw_t) \Vert )^2 \Vert \bw_{t}-\bw_{t-1}\Vert^2}{\sqrt{\bnu_{t-1}}}
    \\
    &+2 \eta  D_1  \mathbb{E}_{\mathcal{F}_t}\left(\frac{\Vert \nabla f(\bw_{t-1}) \Vert^2}{\sqrt{\bnu_{t-1}}}-\frac{\Vert \nabla f(\bw_t) \Vert^2}{\sqrt{\bnu_{t}}}\right)
    \\
    \le & 2\eta  D_1  \frac{2(L_0+L_1\Vert \nabla f(\bw_t) \Vert )\Vert \bw_{t}-\bw_{t-1}\Vert \Vert \nabla f(\bw_t)\Vert +2(L_0^2+L_1^2\Vert \nabla f(\bw_t) \Vert^2 ) \Vert \bw_{t}-\bw_{t-1}\Vert^2}{\sqrt{\bnu_{t-1}}}
    \\
    &+2 \eta  D_1  \mathbb{E}_{\mathcal{F}_t}\left(\xi(t-1)-\xi(t)\right).
\end{align*}
where the last inequality we use $\xi(t)\triangleq \frac{\Vert \nabla f(\bw_t)\Vert^2}{\sqrt{\bnu_t}}$ and the mean-value inequality $(L_0+L_1\Vert \nabla f(\bw_t) \Vert)^2 \le 2(L_0^2+L_1^2 \Vert \nabla f(\bw_t)\Vert^2)$. Reorganize the RHS of the above inequality then leads to
\begin{align*}
     &2\eta  D_1  \frac{2(L_0+L_1\Vert \nabla f(\bw_t) \Vert )\Vert \bw_{t}-\bw_{t-1}\Vert \Vert \nabla f(\bw_t)\Vert +2(L_0^2+L_1^2\Vert \nabla f(\bw_t) \Vert^2 ) \Vert \bw_{t}-\bw_{t-1}\Vert^2}{\sqrt{\bnu_{t-1}}}
    \\
    &+2 \eta  D_1  \mathbb{E}_{\mathcal{F}_t}\left(\xi(t-1)-\xi(t)\right)
    \\
    =&4\eta D_1L_0\frac{\Vert w_t-w_{t-1} \Vert \Vert \nabla f(\bw_t) \Vert }{\sqrt{\bnu_{t-1}}}+4\eta D_1L_1\frac{\Vert w_t-w_{t-1} \Vert \Vert \nabla f(\bw_t) \Vert ^2 }{\sqrt{\bnu_{t-1}}}+4L_0^2\eta 
 D_1\frac{\Vert w_t-w_{t-1} \Vert^2 }{\sqrt{\bnu_{t-1}}}
    \\
    &+4L_1^2\eta 
 D_1\frac{\Vert w_t-w_{t-1} \Vert^2 \Vert \nabla f(\bw_t) \Vert ^2 }{\sqrt{\bnu_{t-1}}}+2 \eta  D_1  \mathbb{E}_{\mathcal{F}_t}\left(\xi(t-1)-\xi(t)\right).
\end{align*}

In the RHS of the above inequality, the first term can be bounded by the mean-value inequality as
\begin{equation*}
    4\eta D_1L_0\frac{\Vert w_t-w_{t-1} \Vert \Vert \nabla f(\bw_t) \Vert }{\sqrt{\bnu_{t-1}}}\le \frac{\eta}{4} \frac{ \Vert \nabla f(\bw_t) \Vert^2 }{\sqrt{\bnu_{t-1}}}+16 D_1^2L_0^2  \frac{ \Vert w_t-w_{t-1} \Vert^2 }{\sqrt{\bnu_{t-1}}}.
\end{equation*}

By $\eta \le \frac{1}{64D_1L_1}$, the second term can be bounded as 
\begin{equation*}
    4\eta D_1L_1\frac{\Vert w_t-w_{t-1} \Vert \Vert \nabla f(\bw_t) \Vert ^2 }{\sqrt{\bnu_{t-1}}}\le \frac{1}{16} \eta \frac{ \Vert \nabla f(\bw_t) \Vert ^2 }{\sqrt{\bnu_{t-1}}}.
\end{equation*}

By $\eta \le \frac{1}{8\sqrt{D_1}L_1}$, the forth term can be bounded as 
\begin{equation*}
    4\eta D_1L_1^2\frac{\Vert w_t-w_{t-1} \Vert^2 \Vert \nabla f(\bw_t) \Vert ^2 }{\sqrt{\bnu_{t-1}}}\le \frac{1}{16} \eta \frac{ \Vert \nabla f(\bw_t) \Vert ^2 }{\sqrt{\bnu_{t-1}}}.
\end{equation*}

Therefore, $2\eta D_1 \Vert \nabla f(\bw_t) \Vert^2 \mathbb{E}_{\mathcal{F}_t}\left(\frac{1}{\sqrt{\bnu_{t-1}}}-\frac{1}{\sqrt{\bnu_{t}}}\right) $ can be bounded as
\begin{align*}
    &2\eta D_1 \Vert \nabla f(\bw_t) \Vert^2 \mathbb{E}_{\mathcal{F}_t}\left(\frac{1}{\sqrt{\bnu_{t-1}}}-\frac{1}{\sqrt{\bnu_{t}}}\right)
    \\
    \le & \frac{3}{8} \eta \frac{ \Vert \nabla f(\bw_t) \Vert ^2 }{\sqrt{\bnu_{t-1}}}+4L_0^2\eta 
 D_1\frac{\Vert w_t-w_{t-1} \Vert^2 }{\sqrt{\bnu_{t-1}}}+16 D_1^2L_0^2  \frac{ \Vert w_t-w_{t-1} \Vert^2 }{\sqrt{\bnu_{t-1}}}
    \\
    &+2 \eta  D_1  \mathbb{E}_{\mathcal{F}_t}\left(\xi(t-1)-\xi(t)\right)
\end{align*}

All in all, we conclude that the "Error" term plus the "Second Order II" term can be bounded as 
\begin{align*}
&\begin{matrix}\underbrace{\mathbb{E}_{\mathcal{F}_t}\left[\left\langle \nabla f(\bw_t), \eta (\frac{1}{\sqrt{\bnu_{t-1}} }-\frac{1}{\sqrt{\bnu_{t}} })g_t \right\rangle\right] }
   \\
    \text{Error}
    \end{matrix}+\begin{matrix}\underbrace{\frac{L_1\Vert \nabla f(\bw_t)\Vert}{2}\eta^2\mathbb{E}_{\mathcal{F}_t} \left\Vert  \frac{g_t}{\sqrt{\bnu_t} } \right\Vert^2}
   \\
    \text{Second Order II}
   \end{matrix}
   \\
   \le &\frac{7}{8} \eta \frac{ \Vert \nabla f(\bw_t) \Vert ^2 }{\sqrt{\bnu_{t-1}}}+4L_0^2\eta 
 D_1\frac{\Vert w_t-w_{t-1} \Vert^2 }{\sqrt{\bnu_{t-1}}}+16 D_1^2L_0^2  \frac{ \Vert w_t-w_{t-1} \Vert^2 }{\sqrt{\bnu_{t-1}}}
    \\
    &+2 \eta  D_1  \mathbb{E}_{\mathcal{F}_t}\left(\xi(t-1)-\xi(t)\right).
\end{align*}

Applying the above inequality back to Eq. (\ref{eq: descent_non_smooth_proof}), we conclude that
\begin{align*}
    \mathbb{E}_{\mathcal{F}_t}f(\bw_{t+1})\le&  f(\bw_{t})  -\frac{\eta}{8}\frac{\Vert \nabla f(\bw_t)\Vert^2 }{\sqrt{\bnu_{t-1}} }+\frac{L_0}{2}\eta^2\mathbb{E}_{\mathcal{F}_t} \left\Vert  \frac{g_t}{\sqrt{\bnu_t} } \right\Vert^2+4L_0^2\eta
 D_1\frac{\Vert w_t-w_{t-1} \Vert^2 }{\sqrt{\bnu_{t-1}}}
  \\
    &+16 D_1^2L_0^2  \frac{ \Vert w_t-w_{t-1} \Vert^2 }{\sqrt{\bnu_{t-1}}}
   +2 \eta  D_1  \mathbb{E}_{\mathcal{F}_t}\left(\xi(t-1)-\xi(t)\right).
\end{align*}

The above inequality takes the same form of Eq. (\ref{eq: potantial_mid_2}) except the coefficients. The rest of the proof follow the same routine as the rest of the proof of Theorem \ref{thm: adagrad_norm}.

The proof is completed.
\end{proof}

\subsection{Proof of Theorem \ref{thm: counter}}
\label{appen: counter}
Before the formal proof, we first present some notations.
\\
\textbf{Constants.} $\forall t \in \mathbb{Z}^{+}$, we define $a_t=\eta 
\frac{2^t}{1+\cdots+4^t}=\eta \frac{2^t}{\sqrt{\frac{4^{t+1}-1}{3}}}$. One can easily observe that $a_t$ is increasing with respect to $t$ and thus $a_t\ge a_1=\eta \frac{2}{\sqrt{5}}>\frac{9}{L_1}$, where the last inequality we use $\eta> \frac{4\sqrt{5}}{L_1}$. We then define $S_{2k-1}\triangleq \sum_{t=1}^{k} a_{2t-1}$ and $S_{2k}\triangleq \sum_{t=1}^{k} a_{2t}$ with $S_0=0$ and $S_{-1}=0$.
\\
\textbf{Line segments.} We define the following line segments: $\mathcal{C}_{2k-1}=\{(x,S_{2k-2}): x\in [S_{2k-3},S_{2k-1})\}$, and $\mathcal{C}_{2k}=\{(S_{2k-1},y): y\in [S_{2k-2},S_{2k})\}$. Define $\mathcal{C}=\cup_{k=1}^{\infty} \mathcal{C}_{k}$.

\begin{proof}[Proof of Theorem \ref{thm: counter}] We first define a function over $\mathcal{C}$ as follows. Define $f((0,0))=\frac{2}{L_1}$, and 

\begin{equation*}
    \nabla f|_{\mathcal{C}_{2k-1}} (x,S_{2k-2})=\left \{\begin{aligned}
    (2^{2k-1},0)+L_1(x-S_{2k-3}) (-2^{2k-2},2^{2k-2}), ~& x\in [S_{2k-3},S_{2k-3}+\frac{4}{L_1})
    \\
    (-2^{2k-1},2^{2k})+(x-S_{2k-3}-\frac{4}{L_1})(-2^{2k-2},0),~& x\in [S_{2k-3}+\frac{4}{L_1},S_{2k-3}+\frac{8}{L_1})
        \\
    (0,2^{2k}),~& x\in [S_{2k-3}+\frac{8}{L_1},S_{2k-1})
    \end{aligned} \right..
\end{equation*}

\begin{equation*}
    \nabla f|_{\mathcal{C}_{2k}} (S_{2k},y)=\left \{\begin{aligned}
    (0, 2^{2k})+L_1(y-S_{2k-2}) (2^{2k-1},-2^{2k-1}), ~& y\in [S_{2k-2},S_{2k-2}+\frac{4}{L_1})
    \\
    (2^{2k+1},-2^{2k})+(y-S_{2k-2}-\frac{4}{L_1})(0,-2^{2k-1}),~& y\in [S_{2k-2}+\frac{4}{L_1},S_{2k-2}+\frac{8}{L_1})
    \\
    (2^{2k+1},0),~& y\in [S_{2k-2}+\frac{8}{L_1},S_{2k})
    \end{aligned} \right..
\end{equation*}

To begin with, one can easily observe that $\nabla f((S_{2k-1},S_{2k-2}))= (0,2^{2k})$ and $\nabla f((S_{2k-1},S_{2k}))= (2^{2k+1},0)$. We then prove that $f|_{\mathcal{C}}$ obeys $(0,L_1)$-smooth condition: as the length of $\mathcal{C}_k$ is $a_k$, which is longer than $\frac{8}{L_1}$, if $\bw_1$ and $\bw_2$ belong to $\mathcal{C}$ and satisfy $\Vert \bw_1-\bw_2 \Vert \le \frac{1}{L_1}$, it must be that either there exists a $k$, such that $\bw_1,\bw_2 \in \mathcal{C}_k$, or there exists a $k$ such that $\bw_1\in \mathcal{C}_k$ and $\bw_2 \in \mathcal{C}_{k+1}$ (or there exists a $k$ such that $\bw_1\in \mathcal{C}_{k+1}$ and $\bw_2 \in \mathcal{C}_{k}$, which can be tackled in the same way due to the symmetry between $\bw_1$ and $\bw_2$). For the former case, without the loss of generality we assume that $k$ is odd and equals $2k'-1$. We then observe that the gradient norm is no smaller than $2^{2k'-2}\sqrt{2}$ when $\bw \in [S_{2k'-3},S_{2k'-3}+\frac{4}{L_1}]\times \{S_{2k'-2}\}$, while the norm of directional Hessian is $L_12^{2k'-2}\sqrt{2}$. Similarly, we observe that the gradient norm is no smaller than $2^{2k'}$ when $\bw \in [S_{2k'-3}+\frac{4}{L_1},S_{2k'-1}]\times \{S_{2k'-2}\}$, while the norm of directional Hessian is no larger than $L_12^{2k'-2}$. Therefore, we conclude that $\Vert \nabla f(\bw_1) -\nabla f(\bw_2) \Vert \le L_1 \Vert\bw_1-\bw_2 \Vert \Vert \nabla f(\bw_1) \Vert  $. As for the latter case, without the loss of generality we still assume $k=2k'-1$, and $\bw_1\in \mathcal{C}_{2k'-1}$ and $\bw_2 \in \mathcal{C}_{2k'}$. Since $\Vert \bw_1-\bw_2 \Vert \le \frac{1}{L_1}$, we must have $\bw_1 \in [S_{2k'-3}+\frac{8}{L_1}, S_{2k'-1 }]\times \{S_{2k'-2}\}$. Thus,
\begin{align*}
    &\Vert \nabla f(\bw_1)-\nabla f(\bw_2) \Vert
    \\
    =& \Vert \nabla f((S_{2k'-1}, S_{2k'}))-\nabla f(\bw_2) \Vert
    \\
    \le&L_1 \Vert \nabla f(\bw_2) \Vert \Vert (S_{2k'-1}, S_{2k'}) -\bw_2 \Vert \le   L_1 \min\{\Vert \nabla f(\bw_2) \Vert, \Vert \nabla f(\bw_1) \Vert\} \Vert \bw_1 -\bw_2 \Vert.
\end{align*}
This proves that $f|_{\mathcal{C}}$ obeys $(0,L_1)$-smooth condition.

Also, by induction, one can show that $f|_{\mathcal{C}}\ge 0$. Together, we can then interpolate $f|_{\mathcal{C}}$ to get $f$, which satisfies Assumption \ref{assum: non-smooth} and $f\ge 0$. Furthermore, running AdaGrad-Norm on $f$ with the initial point $\bw_1=(0,0)$ leads to that $\bw_{2k-1}=(S_{2k-3},S_{2k-2})$ and $\bw_{2k}=(S_{2k-1},S_{2k-2})$, and $\Vert \nabla f(\bw_{k}) \Vert= 2^{k}$. 

The proof is completed.
\end{proof}
\end{document}